\documentclass[lettersize,journal]{IEEEtran}  

\usepackage{amsthm}

\usepackage{times}
\usepackage{multicol}
\usepackage[bookmarks=true]{hyperref}
\usepackage{xcolor}
\usepackage{hyperref}
\usepackage{amsmath, amssymb}
\usepackage{amsfonts}
\usepackage{graphicx}
\usepackage{siunitx}
\usepackage{standalone}
\usepackage{booktabs}
\usepackage{mdframed} 
\usepackage{fancyvrb,multirow}
\usepackage{soul}
\usepackage{dsfont,mathabx}
\usepackage{array, booktabs}
\usepackage[para,online,flushleft]{threeparttable}
\usepackage{subfigure}
\usepackage{booktabs}
\usepackage{makecell}
\usepackage{threeparttable}
\usepackage{algorithmic}
\usepackage{multirow}
\usepackage{booktabs}
\usepackage{graphicx}
\usepackage{stfloats}
\usepackage{tabularx}
\usepackage{caption}
\usepackage{stfloats}
\usepackage{enumitem}
\usepackage[ruled,vlined,linesnumbered,noend]{algorithm2e}

\newtheorem{theorem}{Theorem}

\theoremstyle{definition}

\newtheorem{definition}{Definition}
\newtheorem{remark}{Remark}

\newtheorem{problem}{Problem}

\DeclareCaptionLabelFormat{upper}{\MakeUppercase{#1}~#2}
\newcolumntype{Y}{>{\centering\arraybackslash}X}
\newcolumntype{W}{>{\centering\arraybackslash}m{0.3cm}} 
\newcolumntype{Z}{>{\centering\arraybackslash}m{0.4cm}} 
\newcolumntype{M}[1]{>{\centering\arraybackslash}m{#1}}
\title{\LARGE \bf
  AMBUSH: Collaborative Capture in Complex Environments\\ with Neural Acceleration
}

\begin{document}

\author{Junfeng Chen$^{1,\star}$, YinHang Luo$^{1,\star}$,
Xinyi Wang$^2$, Junrui Li$^1$, and Meng Guo$^1$
\thanks{The authors are with $^1$the School of Advanced Manufacturing and Robotics,
  Peking University, Beijing 100871, China;
and $^2$the Distributed Autonomous Systems and Control Lab, University of 
Michigan, Ann Arbor 48109-1079, USA.
 {\tt\small meng.guo@pku.edu.cn}
}
}

\maketitle
\thispagestyle{empty}
\pagestyle{empty}

\begin{abstract}

 Collaborative capture of dynamic targets is common
 in nature as an essential strategy for weaker
 species against the strong.
 Similar concepts have shown to be useful for numerous
 robotic applications, such as security and surveillance,
 search and rescue.
 However, most existing works focus on analytical and geometric solutions or
 end-to-end reinforcement learning methods,
 which are largely constrained to obstacle-free environments or
 scenarios with sparse, regularly distributed obstacles.
 This work tackles the problem from a unique perspective:
 the renowned strategy of ``ambush'' alone would suffice
 for multiple slower pursuers to capture one
 faster evader with different levels of intelligence
 efficiently in complex environments.
 A parameterized strategy of ambush (including discrete
 and continuous parameters) is designed first,
 which takes into account the topological properties of the workspace,
 the truncated line-of-sight visibility,
 the relative speed ratio and the limited capture range.
 Then, a Hybrid Monte Carlo Tree Search (H-MCTS) algorithm is proposed
 to optimize the associated parameters through long-term planning,
 enabling the identification of highly promising parameters for future capture.
 Lastly, the neural acceleration is trained offline to learn
 the ranking of different choices of parameters across various environments,
 and to directly predict scores,
 replacing the rollout process in H-MCTS.
 The neural acceleration is adopted during online H-MCTS to accelerate
 the planning procedure while guaranteeing the planning quality.
 Its efficiency and effectiveness are validated in extensive simulations
 and hardware experiments,
 against evaders with different capabilities and intelligence levels,
 including  two-times higher velocity and human-controlled behavior.
\end{abstract}

\def\abstractname{Note to Practitioners}
\begin{abstract}
    This work is motivated by the practical challenges of enabling robotic teams to reliably capture an agile evader 
    in complex environments,
    which are critical for security patrols, intruder interception, and search-and-rescue operations. 
    Existing analytical methods often fail in obstacle-dense settings, 
    while learning-based approaches require extensive environment-specific retraining. 
    We demonstrate that the ambush strategy enables slower and fewer pursuers to capture an evader moving at two times higher speeds,
    including human-controlled adversaries exploiting environmental complexity. 
    Our solution combines a parameterized ambush framework adapting to the environmental topology and visibility constraints 
    with an H-MCTS planner enhanced by offline-learned heuristics. 
    This reduces the computation latency while maintaining high success rates.
    Through extensive simulations and hardware experiments, 
    we demonstrate the practical viability and robustness of our approach across diverse real-world scenarios.
    Practitioners can directly deploy this framework for the perimeter security in urban environments, 
    wildlife protection against poachers in dense terrain, 
    or unauthorized drone interception in cluttered airspace without environment-specific adaptation.
    The current framework uses centralized planning, 
    and extending it to fully decentralized execution is an important direction for distributed robotic teams.
    
\end{abstract}
\begin{IEEEkeywords}
   Dynamic capture, Ambush strategy,
   Hybrid optimization, Learned heuristics.
\end{IEEEkeywords}
\section{Introduction}\label{sec:intro}

Dynamic and collaborative capture refers to the process of coordinating
a group of pursuers to capture an evader, which might dynamically avoid
being captured.
It is ubiquitous in nature, e.g., animals often
hunt in groups~\cite{bailey2013group} especially when the targets
are superior in speed or agility.
Collaborative behaviors such as encirclement, ambush and allurement
have been observed in different species.
Due to its practical relevance to numerous applications such as
security, monitoring, surveillance, search and rescue,
these behaviors have attracted great attention from the robotics
and control community~\cite{
pierson2016intercepting,macharet2020adaptive,gan2023multi}.
Unmanned aerial vehicles (UAVs)
have been deployed to neutralize different threats for environmental
protection~\cite{kamminga2018poaching},
and ground vehicles (UGVs) for the intruder detection
~\cite{stolfi2021uav}.

\begin{figure}[!t]
    \centering
    \includegraphics[width=0.85\linewidth, height=0.88\linewidth]{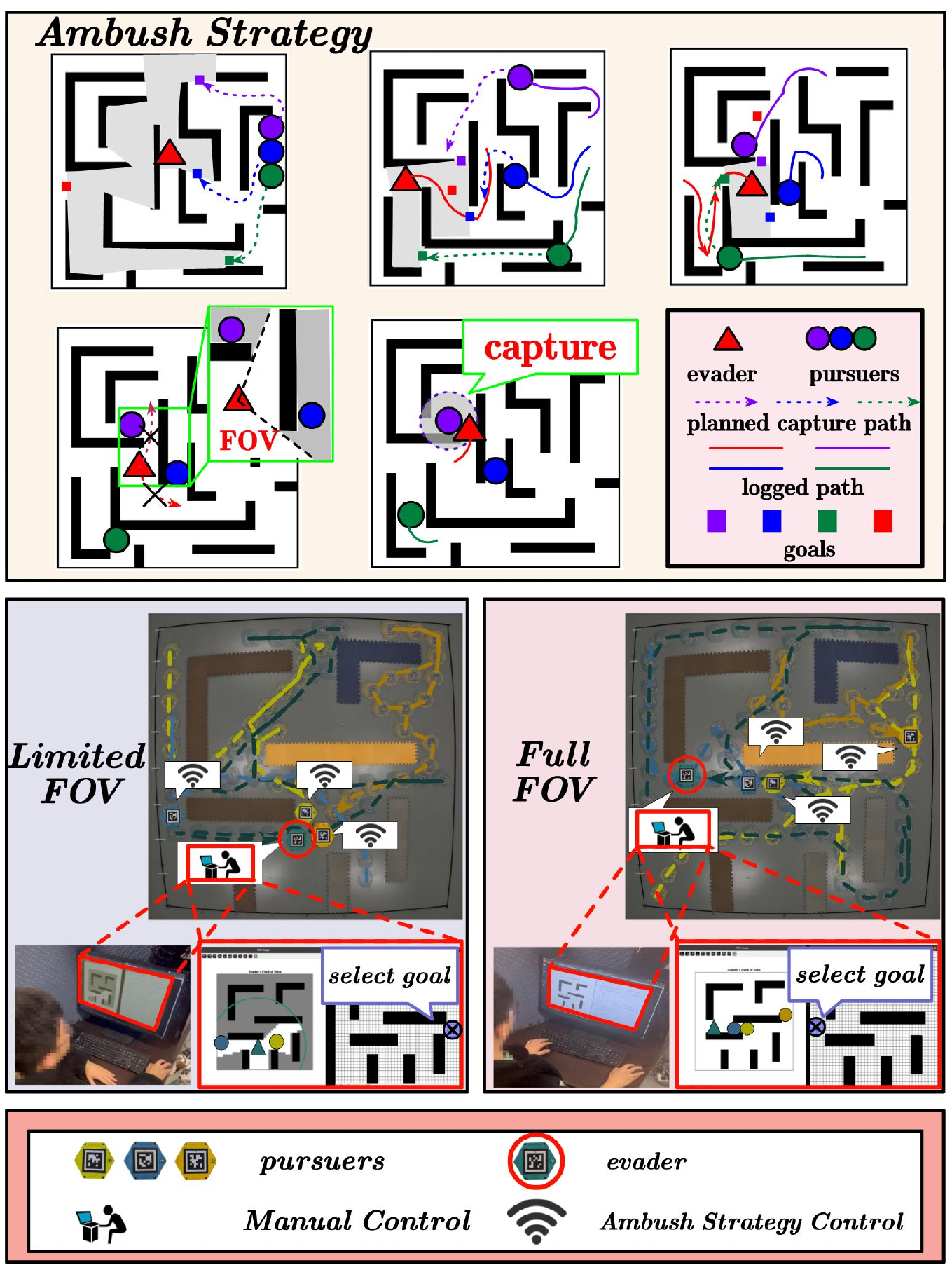}
    \caption{
      \textbf{Top}: Illustration of the proposed ambush strategy.
      The pursuer (in purple) hides in concealed positions
      and makes a surprise attack on the evader (in red)
      which is driven by other pursuers into the ambush area (gray region).
      \textbf{Bottom}: Hardware experiments
      against human-controlled evaders with limited or even full field-of-view.
    }
    \label{fig:ambush}
    \vspace{-5mm}
  \end{figure}

Starting from the classic formulation of ``cops and robbers''
in~\cite{fromme1984game},
a significant amount of work can be found to address the multi-agent
pursuit-evasion problem, e.g., solving a continuous differential
game~\cite{garcia2020multiple},
finding discrete movements over graphs~\cite{bonato2011game} or visibility roadmap~\cite{olsen2021visibility},
online minimization of the safe-reachable area of the evader~\cite{pierson2016intercepting},
adaptive voronoi partitioning~\cite{wang2023distributed,zhou2016cooperative},
behavior-based heuristic methods~\cite{janosov2017group},
and purely end-to-end methods based on reinforcement learning~\cite{lowe2017multi,vlahov2018developing}.
Despite the remarkable progress, most of the aforementioned works
cannot be generalized directly to arbitrary complex environments
without losing guarantees on the capture,
especially when facing faster evaders with different levels of intelligence,
even with human control.

\begin{figure*}[t!]
    \centering
    \includegraphics[width=0.95\linewidth,height=0.35\linewidth]{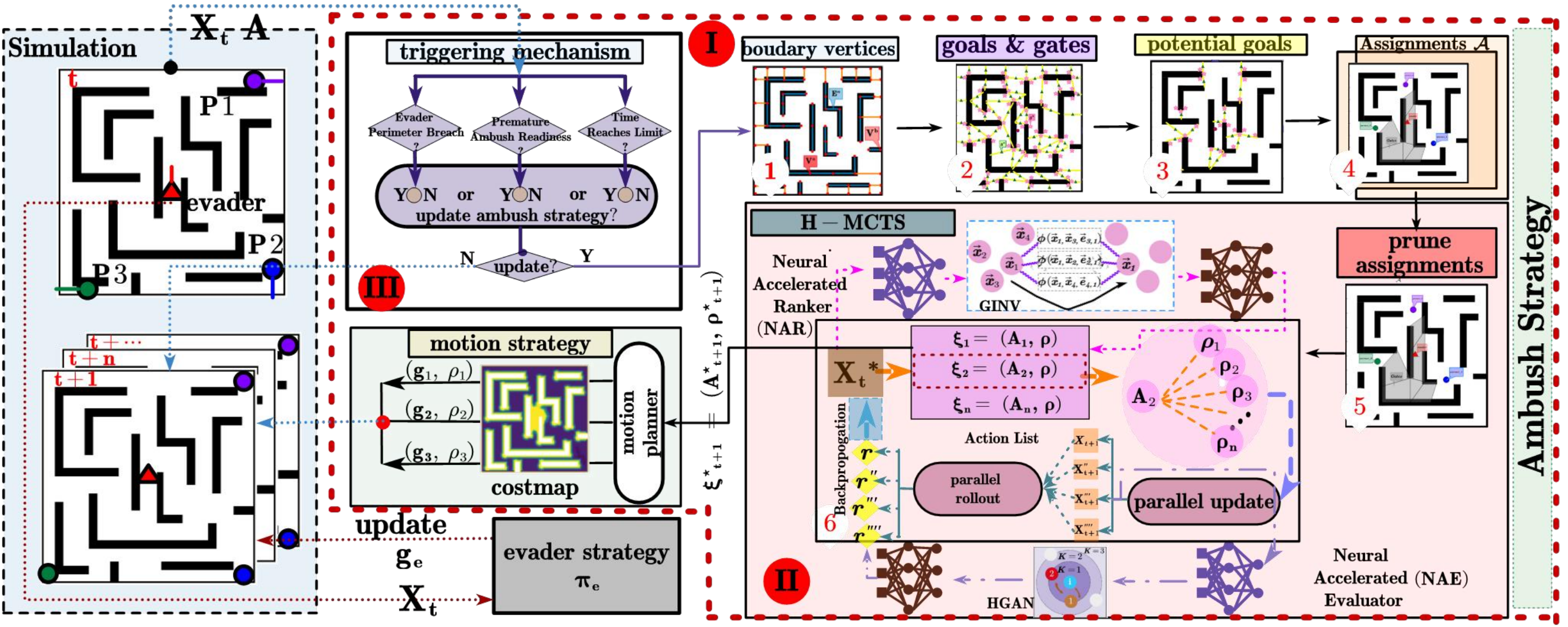}
    \caption{
      Overview of the proposed ambush framework,
      which (I) generates the role of pursuers as attacker or evader and their motion parameters (\textbf{right-top});
      (II) determines the motion policy under the parameters (\textbf{right-bottom});
      and (III) oversees the execution progress and reaction of the evader
      to trigger the adaptation scheme (\textbf{middle-top}).
    }
    \label{fig:ambush-framwork}
    \vspace{-5mm}
  \end{figure*}

As illustrated in Fig.~\ref{fig:ambush}, this work tackles the problem by introducing a coordinated ``ambush'' strategy, where concealed pursuers launch a surprise attack on a faster evader driven into the ambush zone. Unlike prior uses of the term~\cite{xu2025emergent, xu2025modeling}, our ambush focuses on hiding and sudden capture rather than interception along escape paths.
The strategy is realized through three integrated components (Fig.~\ref{fig:ambush-framwork}):  
First, topological analysis via projection and visibility graph minimization locates viable hiding and attack positions.  
Pursuers are assigned roles as hidden ``hiders'' or active ``attackers'' via a combinatorial assignment formulation.  
Then, a Hybrid Monte Carlo Tree Search (H-MCTS) dynamically optimizes role-goal combinations, coupled with motion strategies for precise execution.
To improve efficiency, 
a neural acceleration called Graph-\textbf{N}eur\textbf{a}l-Network-based \textbf{R}anker and \textbf{E}valuator (NARE) 
is trained offline to guide tree expansion and predict node rewards, replacing expensive rollouts.
Lastly, an online triggering mechanism monitors the evader's reactions, 
prompting re-planning when necessary to adapt to unexpected behaviors.
Theoretical analysis and large-scale simulations against strong baselines and human-controlled evaders validate the efficiency of the proposed framework in various complex scenarios.

Main contribution of this work are threefold:
(I) It provides a novel perspective on the problem
of dynamic capture in complex environments,
i.e., the single strategy of ambush is already highly effective;
(II) It designs a hybrid optimization framework that strategically
determines roles and target assignments for pursuers via an MCTS core,
while jointly optimizing their motion trajectories through the sampling of continuous parameters
to ensure successful ambush execution;
and (III) It proposes a general learning-based procedure
to accelerate the H-MCTS planning scheme,
which yields a $25\%$ increase in capture rate
and a $17.1\%$ decrease in capture time.
A similar procedure could be useful for other applications.

\section{Related Work} \label{sec:related}

\subsection{Pursuit in Free Space}
\label{subsec:related-free}

Early studies on pursuit--evasion games mainly focus on bounded,
obstacle-free environments~\cite{mu2023survey}.
Representative works include the geometric formulation in~\cite{garcia2020multiple},
which converts continuous pursuit into discrete combinatorial optimization,
the dimension-reduction method in~\cite{von2018pursuit} for saddle-point solutions,
and the pure pursuit policy in~\cite{li2019dimension} based on reducing the evader's generalized Voronoi partition.
Built on Isaacs' differential game theory~\cite{vajda1967differential},
these methods typically analyze pursuit strategies through the
Hamilton--Jacobi--Isaacs (HJI) equation~\cite{isaacs1999differential}.
However, they are largely restricted to simple obstacle-free domains
and often assume that pursuers are faster than the evader,
which is impractical in the real application.

To handle faster evaders, several extensions have been proposed.
The work in~\cite{li2022pursuit} extends~\cite{garcia2020multiple}
to arbitrary-dimensional spaces with slower pursuers.
Encirclement-based strategies are studied in~\cite{wang2021encirclement},
where slower pursuers are dynamically assigned around the evader,
and further combined with direct pursuit in~\cite{fang2020cooperative}
to reduce capture time and distance.
Other efforts include the game-theoretic utility tree search in~\cite{yang2022game},
the factor graph-based estimation and planning method in~\cite{esfahani2024fg},
and DRL-based collaborative capture approaches
~\cite{lowe2017multi,de2021decentralized, wang2020cooperative, gonultas2024learning, singh2020pursuit}.
Pursuit under limited sensing and environmental uncertainty has also been studied
in~\cite{bopardikar2008discrete}.
Nevertheless, most existing methods still rely on obstacle-free assumptions,
making non-convex or extended obstacles difficult to handle.

\subsection{Pursuit in Obstacle-Cluttered Space}
\label{subsec:related-obstacle}

Several studies have extended pursuit--evasion strategies to obstacle-cluttered environments
~\cite{wang2023distributed, tian2021distributed, rao2024decentralized, yan2024pursuit, ericsson2024pursuit}.
For example,~\cite{tian2021distributed} develops an obstacle-aware buffered Voronoi-cell policy
for capture and collision avoidance, but relies on smooth, radially symmetric obstacles.
The work in~\cite{rao2024decentralized} further handles non-smooth convex polygonal obstacles
in 2D and 3D scenarios.
However, such Voronoi-based policies may suffer from convergence issues
or suboptimal equilibria in environments with many nonconvex obstacles.
To address nonconvex polygonal obstacles,~\cite{wang2023distributed} combines
evader-centered Voronoi partitioning, encirclement, and navigation maps.
Nevertheless, this method is mainly suitable for small obstacles,
since large structures such as U-shaped corridors can prevent effective encirclement.
Probabilistic methods, e.g., entropy-based planning~\cite{hollinger2007probabilistic},
can reduce expected capture time in cluttered spaces,
but lack explicit mechanisms for handling complex nonconvex topology, corridors,
and faster evaders.

Different from geometric methods, recent RL-based approaches have also studied pursuit
in cluttered environments
~\cite{zhang2022game,chen2024multi, kouzeghar2023multi,zhang2022multi, qu2023pursuit}.
MADDPG-based methods are adopted in~\cite{zhang2022game,kouzeghar2023multi},
while~\cite{chen2024multi} proposes a MARL framework for unknown environments,
and~\cite{zhang2022multi} combines deep RL with artificial potential fields.
However, these methods still struggle with complex nonconvex obstacles.
Although~\cite{qu2023pursuit} considers USV pursuit in maritime environments with irregular obstacles,
it does not sufficiently evaluate varying speed ratios,
which limits its applicability when the evader is much faster.

Another related line considers pursuit--evasion under limited sensing or visibility constraints.
For instance,~\cite{wang2024viper} trains an end-to-end direct pursuit policy;
~\cite{sun2023toward} studies cooperative coevolution under limited visibility;
~\cite{sun2023matrixworld} provides the MatrixWorld platform;
and~\cite{durham2012distributed} enables distributed capture without global maps or mutual localization.
Although these works address realistic sensing limitations,
they often define capture as visual detection rather than physical interception with a capture radius.
Therefore, this paper does not focus on visibility-limited settings,
but on enabling a limited number of relatively slower pursuers to capture highly intelligent,
even human-controlled, evaders in complex scenarios.
\section{Problem Description}\label{sec:problem}
Consider a 2D and bounded polygon workspace~$\overline{\mathcal{\textbf{W}}} \subset \mathbb{R}^2$
as shown in Fig.~\ref{fig:ambush},
formed by~$\overline{K}\geq 3$ ordered vertices~$V_{\texttt{b}}\triangleq (\textbf{v}_{\texttt{b}}^1,
\textbf{v}_{\texttt{b}}^2, \cdots, \textbf{v}_{\texttt{b}}^{\overline{K}}) \subset \mathbb{R}^2$.
The workspace is cluttered with a set of polygon obstacles~$\mathcal{O}$,
each of which is defined as~$o \in   \mathcal{O}$ formed by~$K_{\texttt{o}}\geq 3$
ordered vertices~$V_{\texttt{o}} \triangleq (\textbf{v}^1_{\texttt{o}}, \textbf{v}^2_{\texttt{o}},
\cdots, \textbf{v}^{K_{\texttt{o}}}_{\texttt{o}}) \subset \overline{\mathcal{\textbf{W}}}$.
The free space is given by~$\mathcal{W}\triangleq \overline{\mathcal{\textbf{W}}} \backslash \mathcal{O}$.

Moreover, there are~$N_{\texttt{p}}>0$ pursuers and one evader moving
in the workspace,
each of which is modelled as a point-mass with a bounded maximum velocity.
Specifically, each pursuer \(i \in \mathcal{N}_{\texttt{p}} \triangleq \{1,\cdots,N_{\texttt{p}}\}\)
follows the first-order dynamics~\(\dot{\textbf{p}}_i(t) \triangleq \textbf{v}_i(t)\),
where \(\textbf{p}_i(t), \textbf{v}_i(t)\in \mathbb{R}^2\) are the 2D position and velocity,
with \(\|\textbf{v}_i(t)\| \leq \overline{v}_{{\texttt{p}}}\) being the maximum  velocity.
Similarly, the evader follows the same dynamics~\(\dot{\textbf{p}}_{\texttt{e}}(t) \triangleq \textbf{v}_{\texttt{e}}(t)\),
where \(\textbf{p}_{\texttt{e}}(t)\) and \(\textbf{v}_{\texttt{e}}(t)\) are the 2D
position and velocity, with \(\|\textbf{v}_{\texttt{e}}(t)\| \leq \overline{v}_{\texttt{e}}\)
being the maximum velocity.
In this work, the evader has a \emph{larger} velocity than the pursuers,
i.e., \(\overline{v}_{\texttt{p}} \leq  \overline{v}_{\texttt{e}}\).
In addition, all pursuers can communicate freely and are fully aware
of the complete workspace and the position of the evader.
Any pursuer can capture the evader if their line-of-sight (LOS)
distance is less than~$d_{\texttt{c}}>0$,
i.e., their LOS is clear and the relative distance is less than~$d_{\texttt{c}}$.
Once captured, the evader becomes immobile and the mission is accomplished.
On the other hand, the evader can only detect
a pursuer if their LOS distance is less than~$d_{\texttt{o}}>0$.

The overall objective is to design the control strategy for the pursuers,
such that the evader is captured as soon as possible.
It is worth noting that the behaviors of the evader are assumed to be unknown,
with representative ones described in the sequel.



\begin{remark}\label{rm:global-com}
  Global knowledge by the pursuers
  is a common assumption in the related work~\cite{wang2023distributed, fang2020cooperative, de2021decentralized}.
  This assumption can be relaxed by incorporating an exploration mechanism, 
  where pursuers search for the evader
  when it is not visible and switch to a capture mode upon detection,
  as discussed and validated in the subsequent sections.
\hfill $\blacksquare$
\end{remark}

\begin{figure}[t!]
  \centering
  \includegraphics[width=1.0\linewidth]{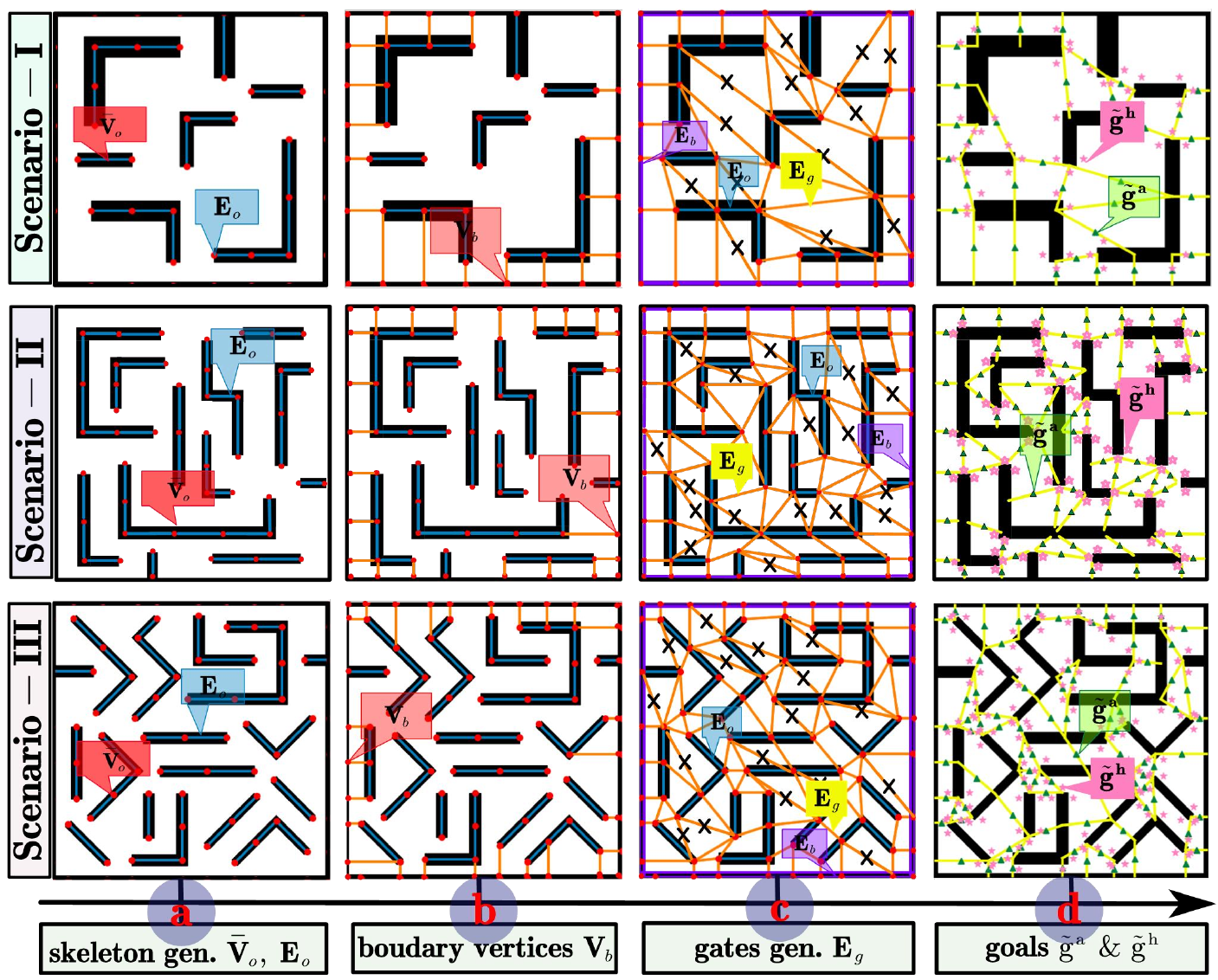}
  \vspace{-3mm}
  \caption{
Illustration of the goal generation and role assignment procedure in different workspaces. 
\textbf{(a)} Red points denote obstacle skeleton vertices $V_\texttt{o}$, and olive segments denote obstacle skeleton edges $E_\texttt{o}$. 
\textbf{(b)} Boundary vertices $V_\texttt{b}$ are obtained by projecting skeleton vertices onto the workspace boundary, where red boundary points denote the projected vertices and orange segments denote the projection lines. 
\textbf{(c)} Candidate gates are constructed between boundary vertices and skeleton vertices, where orange segments denote candidate gate edges $E_\texttt{g}$ and crossed segments indicate invalid gate connections. 
\textbf{(d)} Final pursuer goals are generated from valid gates and obstacle vertices: green triangles denote attacker goals $\tilde{\textbf{g}}^{\texttt{a}}$ at gate midpoints, pink stars denote hider goals $\tilde{\textbf{g}}^{\texttt{h}}$ 
near obstacle vertices, and yellow segments denote the selected valid gates.
}
  \label{fig:ambush-goal}
  \vspace{-4mm} 
\end{figure}

\section{Proposed Solution}\label{sec:solution}


\subsection{Parameterized Strategy of Ambush}\label{sec:ambush}


The ambush strategy partitions pursuers into hiders and attackers. Hiders conceal themselves near obstacles, out of the evader's sight, while attackers actively herd the evader toward these hidden positions for a surprise capture. First, a topological map is constructed by connecting the vertices of obstacles and boundaries. Key positions are then identified,
i.e., hiding spots near obstacles and strategic gates between them. 
Pursuers are assigned roles, i.e., 
hiders (\(\mathcal{N}_{\texttt{h}}\)) at corners while
others as attackers (\(\mathcal{N}_{\texttt{a}}\)) at gates. 
Finally, a cooperative movement strategy guides the attackers to encircle the evader using parameterized cost maps.

\subsubsection{Goal Generation and Role Assignment}\label{subsec:goal-assign}


As shown in Fig.~\ref{fig:ambush-goal}, 
we present a geometric approach to goal generation and role
assignment, derived from structured collision-free connections in the environment.
The terrain is first partitioned into discrete regions according to the obstacle configuration. By connecting vertices of different obstacles, a set of critical edges is obtained, which represent strategic locations for attackers to establish an enclosing formation. Meanwhile, positions near obstacle vertices are treated as candidate goals for hiders, since they naturally support concealment and ambush behaviors. Assigning pursuers to these candidate locations yields different assignments, and each assignment further defines an ambush region shown in Fig.~\ref{fig:capture-graph}, i.e., the enclosing capture graph considered in the subsequent analysis.
To this end, we introduce the following concepts
used in our method: \emph{gate},
\emph{pursuer goals}, \emph{assignment}, and \emph{capture graph}.

\begin{figure}[!t]
  \centering
  \includegraphics[width=0.99\linewidth]{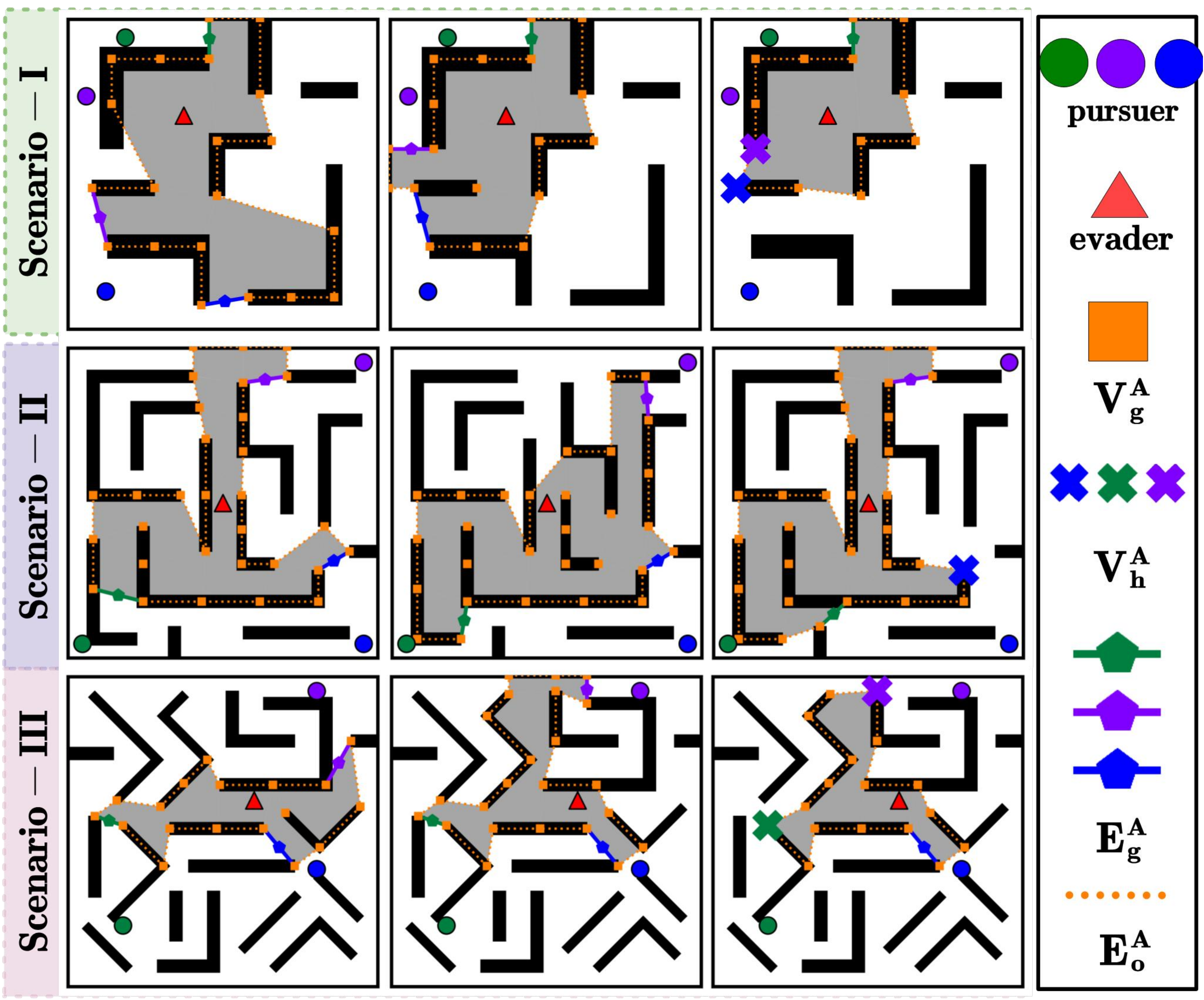}
  \caption{
  Examples of the capture graphs $G_A$ in Def.~\ref{def:graph} in different workspaces. 
  The generated capture graph forms a polygonal enclosure around the evader. 
  Green, purple, and blue circles denote pursuers, and the red triangle denotes the evader. 
  Orange points and segments denote obstacle skeleton vertices and edges, respectively. 
  Cross markers indicate hider goals, while star markers indicate attacker goals.
  }
  \label{fig:capture-graph}
  \vspace{-5mm}
\end{figure}

\begin{definition}(Gate)\label{def:gate}
  A \emph{gate}~$e_{\texttt{g}}$ 
  which is used to form the boundary of an enclosing area for the ambush strategy,
  is the obstacle-avoiding line segment connecting specific vertices in the workspace.
  A line segment is valid only if its
open segment is contained in $\mathcal{W}$ and does not intersect
the interior or boundary of any obstacle, except for admissible
endpoint contacts with workspace-boundary vertices. Specifically,
a valid gate connects either:
  (i) a boundary vertex $\textbf{v}_{\texttt{b}} \in V_{\texttt{b}}$ to an obstacle skeleton vertex $\overline{\textbf{v}}_{\texttt{o}} \in \overline{V}_{\texttt{o}}$, or
  (ii) two skeleton vertices from different obstacles $\overline{\textbf{v}}_{\texttt{o}}^{1}, \overline{\textbf{v}}_{\texttt{o}}^{2} \in \overline{V}_{\texttt{o}}$.
  The set of all gates is denoted as $E_{\texttt{g}} \triangleq \{e_{\texttt{g}}\}$,
  which are used to define the goals for pursuers in the ambush strategy.
  \hfill $\blacksquare$
\end{definition}

\begin{definition}(Goals of Pursuers)\label{def:goals}
  The goals of pursuers $\tilde{\textbf{g}} \triangleq \tilde{\textbf{g}}^{\texttt{a}} \cup \tilde{\textbf{g}}^{\texttt{h}}$
  used to guard the boundary to herd the evader and prevent the evader from escaping from the ambush region,
  and hider goals $\tilde{\textbf{g}}^{\texttt{h}}$ used to capture the evader at concealed locations.
  The attacker goals $\tilde{\textbf{g}}^{\texttt{a}} \subset \mathbb{R}^2$ are defined as the midpoints of gates:
  \begin{equation}\label{eq:attacker-goals}
    \tilde{\textbf{g}}^{\texttt{a}} \triangleq \left\{ \frac{\textbf{v}_i + \textbf{v}_j}{2} \mid (\textbf{v}_i, \textbf{v}_j) \in E_{\texttt{g}} \right\},
  \end{equation}
  where $E_{\texttt{g}}$ is the set of gates defined above,
  with each edge $e_{\texttt{g}}$ defined as an ordered pair~$(\textbf{v}_i, \textbf{v}_j)$ of vertices.
  The hider goals $\tilde{\textbf{g}}^{\texttt{h}} \subset \mathbb{R}^2$ 
  are flanking points near the obstacle vertices:
  \begin{equation}\label{eq:hider-goals}
    \tilde{\textbf{g}}^{\texttt{h}} \triangleq \left\{ \textbf{v}_{\texttt{o}} + \delta \hat{e}_k \mid \textbf{v}_{\texttt{o}} \in V_{\texttt{o}},\ k \in \{1, 2\} \right\},
  \end{equation}
  where $\hat{\mathbf{e}}_1$ and $\hat{\mathbf{e}}_2$ are the normalized edge-tangent directions from 
$\mathbf{v}_{\texttt{o}}$ to its two adjacent vertices in the ordered obstacle polygon, and each generated candidate is projected to the free-space side of the obstacle boundary with a small safety clearance if necessary. 
The flanking distance is chosen as $\delta=\min\{\eta l_{\texttt{o}},\delta_{\max}\}$, where $l_{\texttt{o}}$ is the length of the corresponding adjacent obstacle edge, $0<\eta<1$ is fixed in all experiments,
 and $\delta_{\max}$ prevents the hider goal from being placed too far from the obstacle corner.
  \hfill $\blacksquare$
\end{definition}

\begin{definition}(Assignment)\label{def:assignment}
  The \emph{assignment}, 
  which delineates the allocation of pursuers to distinct goal positions and their corresponding roles to establish an enclosing formation,
  denoted as $A \in \mathcal{A}$,
  which is the set of assignments,
  is a 3-tuple:
 \begin{equation}\label{eq:assignment}
  A \triangleq \left\{ (i, \textbf{g}_i, r_i) \mid \forall i \in \mathcal{N}_{\texttt{p}} \right\},
\end{equation}
where~$i$ is a pursuer index from $\mathcal{N}_{\texttt{p}} \triangleq \{1,\cdots,N_{\texttt{p}}\}$,
  $\textbf{g}_i \in \tilde{g}$ is the goal position,
  $r_i$ is the role determined by goal type, i.e.,
  each $r_i$ is $\texttt{attacker}$ if $\textbf{g}_i \in \tilde{g}^{\texttt{a}}$ or $\texttt{hider}$ if $\textbf{g}_i \in \tilde{g}^{\texttt{h}}$,
  and satisfies constraints that all $\textbf{g}_i$ are distinct. 
\hfill $\blacksquare$
\end{definition}

For each assignment \(A \in \mathcal{A}\),
we introduce a key concept: the \textit{Capture Graph} as illustrated in Fig.~\ref{fig:capture-graph}.
This graph  
essentially representing the enclosing region formed by the assignment,
serves as a fundamental construct in our subsequent theoretical analysis
for determining the success conditions.

\begin{definition}(Capture Graph)\label{def:graph}
For an assignment $A \in \mathcal{A}$,
the \emph{capture graph} $G_A \triangleq (V_A, E_A)$ is a geometric graph embedded in $\mathcal{W}$
that forms a closed polygonal enclosure.
The vertex set comprises gate endpoints~$V_{\texttt{g}}^A$ and obstacle skeleton vertices~$V_{\texttt{h}}^A$,
while the edge set consists of gates~$E_{\texttt{g}}^A$ and obstacle skeleton segments~$E_{\texttt{o}}^A$.
\hfill $\blacksquare$
\end{definition}
The specific definitions of these vertices and edges are as follows.
The gate endpoints $V_{\texttt{g}}^A$ comprise obstacle skeleton vertices
that are endpoints of gates in $E_{\texttt{g}}^A$,
formally $V_{\texttt{g}}^A \triangleq \{  \overline{\textbf{v}}_{\texttt{o}} \in \overline{V}_{\texttt{o}} \mid \overline{v}_{\texttt{o}} \text{ is an endpoint of } e_{\texttt{g}} \in E_{\texttt{g}}^A \}$.
The obstacle skeleton vertices $V_{\texttt{h}}^A$
consist of vertices adjacent to hiders' goal points in $A$,
which are defined as $V_{\texttt{h}}^A \triangleq \left\{ \overline{v}_{\texttt{o}} \in \overline{V}_{\texttt{o}} \mid \exists\, (i, \textbf{g}_i, \texttt{hider}) \in A,\ \operatorname{adj}(\textbf{g}_i,  \overline{\textbf{v}}_{\texttt{o}}) \right\}$
where $\operatorname{adj}(g, v)$ indicates the adjacency between $g$ and~$v$.
The gates $E_{\texttt{g}}^A \triangleq \left\{ e_{\texttt{g}} = (\textbf{v}_1, \textbf{v}_2) \in E_{\texttt{g}} \mid \exists\, (i, \textbf{g}_i, r_i) \in A,\ \textbf{g}_i = \frac{\textbf{v}_1 + \textbf{v}_2}{2} \right\}$
are formed by these gates,
midpoints of which are in pursuers' assignments,
with $E_{\texttt{g}}$ denoting all possible gate e dges.
Finally, the obstacle skeleton segments $E_{\texttt{o}}^A$
connect consecutive vertices in $V_A$ along obstacle skeletons.

\begin{algorithm}[!t]
  \caption{Goal Generation and Role Assignment: $\texttt{Goals\&Assigns}(\cdot)$}
  \label{alg:goals_assigns}
  \KwIn{Workspace $\mathcal{W}$, obstacles~$V_{\texttt{o}}$, pursuers~$\mathcal{N}_{\texttt{p}}$.}
  \KwOut{Assignments $\mathcal{A}$.}

  \SetKwFunction{FCombAgn}{CombAgn}
  \SetKwProg{Fn}{Function}{:}{}
  \Fn{\FCombAgn{$\mathcal{N}, G$}}{
      \If{$\mathcal{N} = \emptyset$}{
          \Return $\emptyset$ \;  \label{alg-line:base-case}
      }

      $i \gets \text{first element in } \mathcal{N}$ \; \label{alg-line:first-pursuer}
      $\mathcal{N}' \gets \mathcal{N} \setminus \{i\}$ \; \label{alg-line:remaining-pursuers}
      $\mathcal{A} \gets \emptyset$ \; \label{alg-line:init-assignment-set}
      \ForEach{$\textbf{g} \in G$}{ \label{alg-line:goal-loop-start}
          $G' \gets G \setminus \{\textbf{g}\}$ \; \label{alg-line:remaining-goals}
          $\mathcal{A}_{\texttt{sub}} \gets \FCombAgn(\mathcal{N}', G')$ \; \label{alg-line:recursive-call}

          \ForEach{assignment $A' \in \mathcal{A}_{\texttt{sub}}$}{ \label{alg-line:sub-assignment-loop}
              $r_i \gets \begin{cases}
                  \texttt{attacker} & \text{if } \textbf{g} \in \tilde{g}^{\texttt{a}} \\
                  \texttt{hider} & \text{otherwise}
              \end{cases}$ \; \label{alg-line:role-determination}
              $\mathcal{A} \gets \mathcal{A} \cup \{ A' \cup \{(i, \textbf{g}, r_i)\} \}$ \; \label{alg-line:combine-assignment}
          }
      }
      \Return $\mathcal{A}$ \; \label{alg-line:return-assignments}
  }

  $\overline{V}_{\texttt{o}} \gets \texttt{SktVet}(V_{\texttt{o}})$ \; \label{alg-line:skeleton}
  $V_{\texttt{b}} \gets \texttt{BodVet}(\mathcal{W}, \overline{V}_{\texttt{o}})$ \; \label{alg-line:boundary}

  $E_{\texttt{g}} \gets \texttt{BudGat}(V_{\texttt{b}}, \overline{V}_{\texttt{o}})$ \; \label{alg-line:gates}
  $E_{\texttt{g}} \gets \texttt{ConEdg} (E_{\texttt{g}})$ \; \label{alg-line:constraints}

  $\tilde{g}^{\texttt{a}} \gets \texttt{AttGol}(E_{\texttt{g}})$ by~\eqref{eq:attacker-goals} \; \label{alg-line:attacker_goals}
  $\tilde{g}^{\texttt{h}} \gets \texttt{HidGol}(V_{\texttt{o}})$ by~\eqref{eq:hider-goals} \;\label{alg-line:hider_goals}
  $\tilde{g} \gets \tilde{g}^{\texttt{a}} \cup \tilde{g}^{\texttt{h}}$ \; \label{alg-line:combine_goals}


  $\mathcal{A} \gets \FCombAgn(\mathcal{N}_{\texttt{p}}, \tilde{g})$ \; \label{alg-line:call-combinatorial}

  \Return{$\mathcal{A}$} \; \label{alg-line:return}
\end{algorithm}

\begin{problem}\label{prb:assignments}
Given a workspace $\mathcal{W}$ with obstacle vertices $V_{\texttt{o}}$ and a set of pursuers $\mathcal{N}_{\texttt{p}}$,
the problem is to generate a set of candidate assignments $\mathcal{A}$,
where each assignment specifies a unique goal and role for every pursuer.
\hfill $\blacksquare$
\end{problem}

We propose an efficient goal generation and role assignment algorithm, $\texttt{Goals\&Assigns}(\cdot)$, to produce candidate assignments for pursuers. As outlined in Alg.~\ref{alg:goals_assigns} and illustrated in Fig.~\ref{fig:ambush-goal}, the algorithm first extracts obstacle skeleton vertices $\overline{V}_{\texttt{o}}$ via medial-axis transformation in Line~\ref{alg-line:skeleton}, as shown in Fig.~\ref{fig:ambush-goal}(a). Boundary vertices $V_{\texttt{b}}$ are then generated by orthogonally projecting $\overline{V}_{\texttt{o}}$ onto workspace edges (Line~\ref{alg-line:boundary}), retaining only projections that do not intersect other obstacles in Fig.~\ref{fig:ambush-goal}(b). Gates $E_{\texttt{g}}$ are constructed as collision‑free connections between $V_{\texttt{b}}$ and $\overline{V}_{\texttt{o}}$ according to Def.~\ref{def:gate} (Line~\ref{alg-line:gates}) in Fig.~\ref{fig:ambush-goal}(c).
Geometric constraints are applied to reduce the combinatorial complexity (Line~\ref{alg-line:constraints}). An optimal subset of gates is then selected by solving the integer program:
\[
\begin{aligned}
\min_{\mathbf{x}} & \sum_{(\textbf{v}_i,\textbf{v}_j) \in E_{\texttt{g}}} \|\textbf{v}_i - \textbf{v}_j\| \cdot x_{ij} + \lambda \sum_{(\textbf{v}_i,\textbf{v}_j) \in E_{\texttt{g}}} (1 - x_{ij}) \\
\text{s.t.} \quad & \sum_{\textbf{v}_j} x_{ij} \leq 1,\; x_{ij} = x_{ji}, \quad \forall (\textbf{v}_i,\textbf{v}_j) \in E_{\texttt{g}},
\end{aligned}
\]
where $\mathbf{x} = \{x_{ij}\}$ is a binary decision vector indicating whether gate $(\textbf{v}_i,\textbf{v}_j)$ is selected, $\|\textbf{v}_i-\textbf{v}_j\|$ is the Euclidean distance, and $\lambda>0$ is a penalty parameter. The problem is solved with GUROBI~\cite{gurobi}, yielding the optimal gate set $E_{\texttt{g}}$.
The gate-selection program is a topology-aware candidate generation step rather than a direct optimizer of capture success.
Each valid gate represents a collision-free cross-section of the free space, whose midpoint provides an attacker goal for blocking a potential escape passage.
The distance term favors short and easily guarded gates, while the penalty on unselected gates preserves sufficient topological connections for forming feasible capture graphs before Sec.~\ref{sec:mcts} further optimizes the final assignment and motion parameters.

As Fig.~\ref{fig:ambush-goal}(d) illustrates, goal positions are generated as follows: attacker goals $\tilde{g}^{\texttt{a}}$ are computed as gate midpoints (Line~\ref{alg-line:attacker_goals}), while hider goals $\tilde{g}^{\texttt{h}}$ are placed near obstacle vertices (Line~\ref{alg-line:hider_goals}). The final role assignments are generated by the recursive function $\texttt{CombinatorialAssign}(\mathcal{N}, G)$ (Line~\ref{alg-line:call-combinatorial}). The function first handles the base case (Line~\ref{alg-line:base-case}), then for the first pursuer $i \in \mathcal{N}$ (Line~\ref{alg-line:first-pursuer}), it iterates over each available goal $\textbf{g} \in G$ (Line~\ref{alg-line:goal-loop-start}), computes the remaining goals $G'$ (Line~\ref{alg-line:remaining-goals}), and recursively solves for the remaining pursuers $\mathcal{N}'$ (Line~\ref{alg-line:recursive-call}). Each returned sub-assignment $A'$ (Line~\ref{alg-line:sub-assignment-loop}) is extended with a role $r_i$ (attacker if $\textbf{g} \in \tilde{g}^{\texttt{a}}$, otherwise hider) (Line~\ref{alg-line:combine-assignment}). The function returns all valid assignments $\mathcal{A}$ (Line~\ref{alg-line:return-assignments}), which satisfy distinct goal assignments in Def.~\ref{def:assignment}, enabling pursuers to execute their designated roles.
Since the evader's future response is unknown, these assignments are worth retaining as admissible candidates, while their actual quality is evaluated later by the following section.

\subsubsection{Motion Strategy}
\label{subsec:move-strategy}

Given feasible assignments $\mathcal{A}$ that define each pursuer's goal and role, a two-stage motion strategy is employed. First, pursuers converge toward assigned goals $\textbf{g}^i \in \tilde{g}$ via coordinated paths to block the early escape. Upon arrival, the second phase starts,
i.e., hiders $\mathcal{N}_{\texttt{h}}$ remain static until the evader nears the hider gate, prompting a sudden attack;
attackers $\mathcal{N}_{\texttt{a}}$ sweep to herd the evader toward the gates.
Due to space limitations, the detailed implementation of the two-stage motion strategy is provided in Appendix~\ref{sec:app-motion}.

\begin{figure}[t!]
  \centering
  \includegraphics[width=1.0\linewidth]{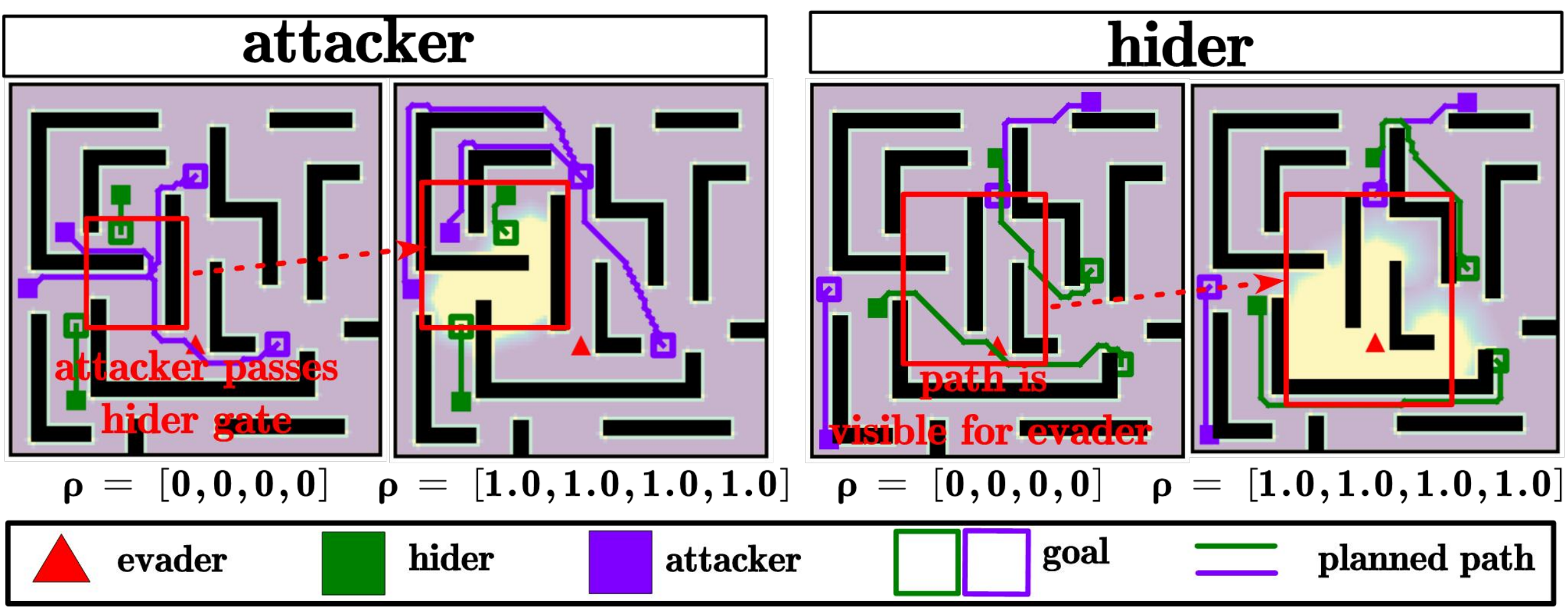}
  \caption{
    The overall ambush strategy under different
    parameters~$\xi$,
    including the role assignment~$A$ and
    motion parameters~$\boldsymbol{\rho}$,
    and their impact on the pursuers' trajectories.
  }
  \label{fig:rho-traj}
  \vspace{-6mm}
\end{figure}

\begin{definition}(Ambush Strategy Parameters)\label{def:ambush-params}
The \emph{ambush strategy} is fully characterized
by the parameter tuple $\xi \triangleq (A, \boldsymbol{\rho})$,
where $A \in \mathcal{A}$
and $\boldsymbol{\rho} = \{\rho^i\}_{i\in\mathcal{N}_{\texttt{p}}}$
is the set of coefficients for the motion strategy as shown in Fig.~\ref{fig:rho-traj}.
The parameter space $\Xi \triangleq \mathcal{A} \times \mathbb{R}_+^{N_{\texttt{p}}}$
encompasses all feasible configurations of the ambush strategy,
hence $\xi \in \Xi$.
\hfill $\blacksquare$
\end{definition}
\subsection{Hybrid Monte-Carlo Tree Search}
\label{sec:mcts}

Each ambush strategy call yields a parameter set~$\Xi$, where each $\xi \in \Xi$ affects the capture success. The goal is to find the optimal $\xi^\star \in \Xi$ that maximizes capture probability under the current state. Unlike prior works limited to discrete search~\cite{xu2022belief, ren2025holistically, king2019decentralized}, this is a hybrid optimization: assignment selection $A \in \widehat{\mathcal{A}}$ is discrete, while the corresponding motion coefficients $\boldsymbol{\rho} \subset \mathbb{R}_+^{\mathcal{N}_{\texttt{p}}}$ are continuous, and the evader’s strategy is dynamic and unknown. To address this, we propose a Hybrid Monte‑Carlo Tree Search (H‑MCTS) algorithm to explore the parameter space and determine $\xi^\star \triangleq (A^\star, \boldsymbol{\rho}^\star)$. 
The specific steps of the H-MCTS algorithm are detailed below.

\subsubsection{Problem of Hybrid Optimization}
\label{subsec:hybrid-opt}
Let $\mathbf{X}_t = (\{\mathbf{p}_i^t\}_{i\in\mathcal{N}_{\texttt{p}}}, \mathbf{p}_{\texttt{e}}^t)$ be the joint system state at time $t$, comprising all pursuer positions and the evader position. The system dynamics follow $\mathbf{X}_{t+1} = f(\mathbf{X}_t, \xi, \pi_{\texttt{e}})$, where $f(\cdot)$ integrates the motion strategies in~\eqref{eq:cost-map} and the evader's response.
Given the current time $t_n$ and future discrete steps $\{t_k\}_{k=n}^{n+\Delta}$ ($\Delta > 0$ is the maximum capture duration), the capture time $T_{\texttt{cap}}$ is defined as the earliest time when the evader is within a distance $d_{\texttt{c}}$ of any pursuer:
$
T_{\texttt{cap}} \triangleq \min \left\{ t \geq t_n \mid \exists i \in \mathcal{N}_{\texttt{p}} : \|\mathbf{p}_i^t - \mathbf{p}_{\texttt{e}}^t\| \leq d_{\texttt{c}} \right\}.
$
The capture indicator $\mathbb{I}_{\texttt{cap}}$ equals 1 if capture occurs within $[t_n, t_{n+\Delta}]$, and 0 otherwise.

\begin{problem}\label{prb:hybrid_opt}
Determine the optimal ambush parameter $\xi^\star \triangleq (A^\star, \boldsymbol{\rho}^\star)$
in a time window~$[t_n, t_{n+\Delta}]$ that maximizes:
\begin{equation}\label{eq:dual_objective}
\begin{aligned}
\underset{{\xi \in \Xi}}{\textbf{max}} \quad & \mathbb{E}_{\pi_{\texttt{e}}} \Bigg[
\mathbb{I}_{\texttt{cap}} \left(1 - \frac{T_{\texttt{cap}} - t_n}{t_{n+\Delta}-t_n }\right) 
+ (1 - \mathbb{I}_{\texttt{cap}}) \left(1 - \frac{S(\xi)}{S_{\texttt{max}}}\right)\Bigg] \\
\textbf{s.t.} \quad
& \mathbf{X}_{t_{k+1}} = f(\mathbf{X}_{t_k}, \xi, \pi_{\texttt{e}}), \quad k = n, \cdots, n + \Delta,
\end{aligned}
\end{equation}
where~$\mathbb{E}_{\pi_{\texttt{e}}}[\cdot]$ denotes expectation over the evader's strategy $\pi_{\texttt{e}}$,
$S(\xi)$ denotes the area of the capture graph associated with the assignment $A$,
and $S_{\texttt{max}}$ is the total area of the workspace.
\hfill $\blacksquare$
\end{problem}

\subsubsection{H-MCTS Method Overview}
\label{subsec:H-MCTS}

We propose a Hybrid Monte-Carlo Tree Search (H-MCTS) algorithm to solve this hybrid decision-making problem. The Alg.~\ref{alg:H-MCTS} builds a search tree $\mathcal{T} \triangleq (\mathcal{V}, \mathcal{E})$
where each node $\nu \in \mathcal{V}$ stores the system state $\mathbf{X}$ and four attributes: the cumulative reward $\nu.Q$, the visit count $\nu.N$, a parent pointer $\nu.\texttt{Parent}$, and a set of child nodes $\nu.\texttt{Children}$. Each edge in $\mathcal{E}$ is labeled with an ambush parameter $\xi = (A, \boldsymbol{\rho})$, and $\Psi(\nu)$ retrieves the node state.
The tree is initialized (lines~\ref{step:init_node}--\ref{step:init_tree}) with a root node $\nu_0$ whose state is $\mathbf{X}_{t_n}$, parent pointer is null, and child set is $\emptyset$.
A schematic illustration of the H-MCTS procedure is provided in Appendix~\ref{sec:app-hmcts} to complement the algorithmic description.

\textbf{Selection Phase.}
Starting from the root node $\nu_0$, the selection phase follows the standard UCT rule to recursively choose the most promising child until a leaf node or a node with unexpanded assignments is reached (Line~\ref{step:selection_condition}).

\textbf{Expansion Phase.}
When encountering an expandable node $\nu$ with unselected assignments, the algorithm first randomly selects an assignment $A$ from $\widehat{\mathcal{A}}$ as Line~\ref{step:sample_gate}. A candidate set of motion coefficients $\widehat{\boldsymbol{\rho}}_+$ is generated by discretizing each component $\rho^i \in [0,1]$ with a step $\Delta\rho>0$ in Line~\ref{step:discrete}.
To manage combinatorial explosion, a downsampling procedure selects high-potential candidates while preserving diversity in Line~\ref{step:perturb}. Each candidate $\boldsymbol{\rho}^k_+$ is evaluated by a cost function:
$$\Phi(\boldsymbol{\rho}^k_+) \triangleq \sum_{i=1}^{\mathcal{N}_{\texttt{p}}} \Big( \sum_{k=1}^{L^i(\boldsymbol{\rho}^k_+)} \sigma_{\texttt{vis}}(\mathbf{m}_k^i(\boldsymbol{\rho}^k_+), \mathbf{p}_{\texttt{e}})
 + \beta  C_{\texttt{pth}}(\boldsymbol{\rho}^k_+) \Big),$$
where $\mathbf{m}_k^i(\boldsymbol{\rho}^k_+)$ denotes the center coordinates of the $k$-th grid cell
along $\texttt{A}^\star$-generated paths over the cost map $\mathcal{M}^i$ in~\eqref{eq:cost-map};
$L^i(\boldsymbol{\rho}^k_+)$ is the path length in grid cells;
$\sigma_{\texttt{vis}}(\mathbf{m}_k^i(\boldsymbol{\rho}^k_+), \mathbf{p}_{\texttt{e}})$ is a binary visibility risk indicator,
i.e., 1 if visible from $\mathbf{p}_{\texttt{e}}$, 0 otherwise;
and $C_{\texttt{pth}}(\boldsymbol{\rho}^k_+)$ represents the cumulative navigation cost.
Candidates are sorted by $\Phi$, partitioned into $K$ segments, and the top $M$ from each are retained, forming $\widetilde{\boldsymbol{\rho}}_+$.
For each $\boldsymbol{\rho}_+^k \in \widetilde{\boldsymbol{\rho}}_+$, the ambush parameter $\xi = (A, \boldsymbol{\rho}_+^k)$ is formed. An evader action $a^{\texttt{e}}$ is sampled from $\hat{\pi}_{\texttt{e}}$ in Line~\ref{step:sample_action}, and the successor state $\mathbf{X}^+ = f(\Psi(\nu), \xi, a^{\texttt{e}})$ is computed in Line~\ref{step:state_transition}. A new node $\nu^+$ with state $\mathbf{X}^+$ is created and added as a child of $\nu$, with the edge labeled by $\xi$ in Line~\ref{step:add_node}. If no unselected assignments remain, the algorithm proceeds to selecting the best child as seen in Line~\ref{step:no_expand}.

\begin{algorithm}[!t]
  \caption{Hybrid Monte Carlo Tree Search algorithm~$\texttt{H-MCTS}(\cdot)$}
  \label{alg:H-MCTS}
  \SetKwInOut{Input}{Input}
  \SetKwInOut{Output}{Output}
  \Input{Current state $\mathbf{X}_{t_n}$, assignment set $\widehat{\mathcal{A}}$,
         assumed evader policy $\hat{\pi}_{\texttt{e}}$,
         computational budget $B$}
  \Output{Optimal ambush parameters $\xi^\star = (A^\star, \boldsymbol{\rho}^\star)$}

  \BlankLine

   \tcp{\textbf{Initialize the search tree}}
  $\nu_0 \gets (\mathbf{X}_{t_n}), \nu_0.N \gets 0, \nu_0.Q \gets 0;$ \label{step:init_node};\\
  $\mathcal{T} \gets \texttt{IntTree}(\nu_0)$ \label{step:init_tree};\\
  $iter \gets 0, \nu \gets \nu_0$ \label{step:init_iter};\\

  \While{$iter < B$ \textbf{and} $\neg \texttt{IsTerminal}(\nu)$ \label{step:loop_condition}}{
    \tcp{\textbf{Selection}}
    \While{$\texttt{HasChildren}(\nu)$ \textbf{and} $\neg \texttt{IsLeaf}(\nu)$ \label{step:selection_condition}}{
      $\nu \gets \underset{\nu' \in \texttt{Children}(\nu)}{\textbf{argmax}} \texttt{UCTS}(\nu', \nu_0)$ \label{step:uct_selection} \;
    }

    \tcp{\textbf{Expansion}}
    \eIf{$\texttt{HasUnSelAgn}(\nu, \widehat{\mathcal{A}})$ \label{step:expand_condition}}{
      $A \gets \texttt{SplUnSelGate}(\nu, \widehat{\mathcal{A}})$ \label{step:sample_gate} \;
      $\widehat{\boldsymbol{\rho}}_+ \gets \texttt{Sample}(\boldsymbol{\rho})$ \label{step:discrete};\\
      $\widetilde{\boldsymbol{\rho}}_+ \gets \texttt{Downsample}(\widehat{\boldsymbol{\rho}}_+)$ \label{step:perturb}; \\
      \ForEach{$\boldsymbol{\rho}_+^k \in \widehat{\boldsymbol{\rho}}_+ \textbf{in parallel}$}{
        $\xi \gets (A, \boldsymbol{\rho}_+^k)$,
        $a^{\texttt{e}} \sim \hat{\pi}_{\texttt{e}}(\Psi(\nu))$ \label{step:sample_action} \;
        $\mathbf{X}^+ \gets f(\Psi(\nu), \xi, a^{\texttt{e}})$ \label{step:state_transition} \;
        $\nu^+ \gets \texttt{CteNode}(\mathbf{X}^+)$ \label{step:create_node} \;
        $\mathcal{T} \gets \texttt{AddNode}(\mathcal{T}, \nu, \nu^+, \xi)$ \label{step:add_node} \;
      \tcp{\textbf{Simulation}}
      $r \gets \texttt{RollSml}(\nu^+, \xi, \hat{\pi}_{\texttt{e}})$ \label{step:rollout} \;
      \tcp{\textbf{Back propagation}}
      $\texttt{Backpropagation}(\nu^+)$ \label{step:back} \;
      }
    }{
      $\nu \gets \textbf{argmax}_{\nu^- \in \texttt{Children}(\nu)}Q(\nu^-)/N(\nu^-)$ \label{step:no_expand} \;
    }

    $iter \gets iter + 1$ \label{step:increment_iter}
  }

  \tcp{\textbf{Termination}}
  $\nu^\star \gets \underset{\nu \in \texttt{Children}(\nu_0)}{\textbf{argmax}} \, Q(\nu)/N(\nu)$ \label{step:select_best} \;
  $\xi^\star \gets \texttt{GetAction}(\mathcal{T}, \nu_0, \nu^\star)$ \label{step:get_action} \;
  \Return{$\xi^\star$} \label{step:return}
\end{algorithm}


\textbf{Simulation Phase.}
Unlike standard rollout policies in~\cite{kocsis2006bandit}, our method adopts a greedy selection strategy, i.e.,
the candidate $\boldsymbol{\rho}^k_+$ with the highest $\Phi(\boldsymbol{\rho}^k_+)$ from $\widetilde{\boldsymbol{\rho}}_+$ is chosen for simulation. Starting from node $\nu^+$, a simulation is executed until either the horizon $T$ is reached or the evader is captured. The rollout return is computed as $r \triangleq \frac{1}{S_{A}} + \lambda \cdot \mathbb{I}_{\texttt{cap}}$, where $\lambda>0$ weights the capture reward, $S_{A}>0$ is the final area of the capture graph, and $\mathbb{I}_{\texttt{cap}}$ indicates the capture success.
It is worth mentioning that the evader strategy considered here is different from the strategies defined in Appendix~\ref{subsec:evader}. For the evaluation of a candidate assignment, we do not assume prior knowledge of the evader's intended destination or future trajectory. Instead, we introduce a surrogate evader policy $\hat{\pi}_{\texttt{e}}$, modeled as a random walk with a speed 1.2 times that of each pursuer. By conducting long-horizon simulations under $\hat{\pi}_{\texttt{e}}$, the proposed framework estimates the expected capture performance of each candidate assignment and the associated motion coefficients. Therefore, the ambush strategy does not rely on explicit prediction of the evader's actual motion, but instead selects assignments that are probabilistically favorable for capture under uncertain evader behavior.

\textbf{Backpropagation Phase.}
After simulation, the reward $r$ is propagated back to the root following the standard MCTS backpropagation procedure, updating the visit count and accumulated reward of each node along the selected path (Line~\ref{step:back}).

\textbf{Termination Phase.}
Once the computational budget $B$ is exhausted or a terminal state is reached, the algorithm selects the optimal child node $\nu^\star$ from the root $\nu_0$'s children (Line~\ref{step:select_best}). The corresponding optimal ambush parameters $\xi^\star = (A^\star, \boldsymbol{\rho}^\star)$ are then retrieved from the tree $\mathcal{T}$ as the label on the edge from $\nu_0$ to $\nu^\star$ (Line~\ref{step:get_action}), and deployed for real-time execution.


\begin{theorem}\label{thm:capture-guarantee}
For an assignment \(A \in \mathcal{A}\) with capture graph \(G_A\),
the evader is guaranteed to be captured under the arbitrary evader strategy \(\pi_{\texttt{e}}\) when:
(I) the number of exit gates must not exceed the number of pursuers,
expressed as \(|E_{\texttt{g}}^A| \leq |\mathcal{N}_{\texttt{p}}|\);
(II) the evader must reside within the interior of the capture polygon,
denoted \(\mathbf{p}_{\texttt{e}} \in \mathrm{int}(G_A)\);
(III) the capture radius of each attacker \(i \in \mathcal{N}_{\texttt{a}}\) must satisfy
\(d_{\texttt{c}} \geq \textup{\textbf{max}}\left\{ \frac{1}{2}
\textup{\textbf{max}}_{e \in E_{\texttt{g}}^A} \|e\|,\, \frac{1}{2} D \sin\varphi_i \right\}\).
\end{theorem}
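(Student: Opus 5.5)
The plan is a containment-then-shrinking argument. Once conditions (I)--(III) hold, the evader can never leave the capture polygon $G_A$. The attackers' sweeping motion then shrinks the reachable region until some pursuer has clear line-of-sight within distance $d_{\texttt{c}}$.

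\emph{Step 1: the polygon boundary is sealed.} The boundary of $G_A$ consists of obstacle skeleton segments $E_{\texttt{o}}^A$ and gates $E_{\texttt{g}}^A$. The skeleton segments lie inside obstacles by the medial-axis construction, so they are impassable. By the goal definition \eqref{eq:attacker-goals}, each gate $e\in E_{\texttt{g}}^A$ has an attacker goal at its midpoint. Condition (I) gives $|E_{\texttt{g}}^A|\le|\mathcal{N}_{\texttt{p}}|$, and Def.~\ref{def:assignment} requires distinct goals, so every gate is guarded by its own pursuer. Any point of a gate is within $\frac12\|e\|$ of its midpoint. The gate is an obstacle-free segment by Def.~\ref{def:gate}, so the line of sight along it is clear. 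The first term of (III) then shows that any crossing of a gate by the evader happens inside the capture disk of the guarding attacker. Combining this with (II), $\mathbf{p}_{\texttt{e}}\in\mathrm{int}(G_A)$, and continuity of the trajectory, the evader is confined to $G_A$ for all time regardless of $\pi_{\texttt{e}}$, even though $\overline v_{\texttt{e}}\ge\overline v_{\texttt{p}}$. The speed ratio is irrelevant here because the guard is stationary and the gate is fully covered.

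\emph{Step 2: interior coverage during the sweep.} Next I would handle the phase in which the attackers leave their midpoints to herd the evader. Following the motion strategy in Appendix~\ref{sec:app-motion}, attacker $i$ advances along a sweep direction making angle $\varphi_i$ with its gate. Here $D$ is the characteristic depth or diameter of the region swept. I would show that the lateral half-width of the swept corridor that must stay covered is at most $\frac12 D\sin\varphi_i$. The second term of (III) then guarantees that the moving attacker's capture disk still spans the cross-section it currently blocks. The sealed region is therefore monotonically nested, $G_A(t_2)\subseteq G_A(t_1)$ for $t_2>t_1$. Each attacker advances at a positive speed, so the area $S_A(t)$ strictly decreases.

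\emph{Step 3: termination.} Hiders sit at obstacle corners adjacent to $V_{\texttt{h}}^A$ and are invisible to the evader, but this is not needed for the worst case. Because the region shrinks in finite time to a set of diameter at most $2d_{\texttt{c}}$ covered by attacker or hider disks, the evader is captured.

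The main obstacle is Step 2. The geometric relation producing $\frac12 D\sin\varphi_i$ depends on the appendix's sweep parametrization, and with nonconvex obstacles the line of sight inside the shrinking region may be blocked. I would first prove Step 2 for a single convex cell between consecutive gates. I would then argue the general case cell by cell, using the fact that the capture graph's edges already follow the obstacle skeleton, so occlusion occurs only across impassable boundaries.
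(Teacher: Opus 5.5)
Your overall skeleton (seal the gates, sweep, contract) is the paper's, and Step~1 is a more careful version of how the paper uses (I), (II) and the first term of (III). The gap is in Step~2, which rests on a reading of the second term of (III) that does not match the construction the theorem refers to. In the paper, $D$ and $\varphi_i$ come from the Stage~II sweep in Appendix~\ref{sec:app-motion}. There, $\mathcal{C}$ is a circle centered at the \emph{evader} $\mathbf{p}_{\texttt{e}}$ with diameter $D$ enclosing $V_A$. The central angle $\alpha_{\mathrm{h}}$ occupied by the hider gates is removed, and the remainder is split equally, $\varphi_i=(2\pi-\alpha_{\mathrm{h}})/|\mathcal{N}_{\texttt{a}}|$. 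The paper's proof uses the second term only to assert that each attacker's capture disk spans the chord of its own sector. (That chord has half-length $\frac{D}{2}\sin\frac{\varphi_i}{2}$, which $\frac12 D\sin\varphi_i$ dominates when $\varphi_i\le 2\pi/3$.) Containment during the sweep is attributed to these evader-centered chords together with the hider gates, rebuilt at every instant as the remark after the theorem explains, not to gate cross-sections pushed inward. There is no ``angle between the advance direction and the gate'' and no ``corridor depth'', and the bound you plan to prove does not hold in your picture. Once an attacker moves in from its gate, the segment it must block is a cross-section of the cell it enters. Its length is set by the width of that cell, which can exceed the gate even for a convex cell that widens away from it, and nothing in (I)--(III) under your reading controls it. So the seal, and with it the nesting $G_A(t_2)\subseteq G_A(t_1)$, is not maintained. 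Your cell-by-cell fallback addresses occlusion, not width, so it does not repair this.

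The correct geometry also changes Steps~1 and~3. First, the hiders are not optional. Once the attackers leave their gate midpoints, Step~1's confinement no longer applies, and the chord ring covers only the span $2\pi-\alpha_{\mathrm{h}}$. The hider-gate span is left open on purpose, so the evader is herded onto the waiting hiders, who close it and make the capture. Your claim that hiders are not needed in the worst case, and the picture of every gate permanently sealed by an attacker, therefore do not survive the switch to the sweep. Second, the sectors are re-centered on an evader that is faster than the attackers, so your observation that the speed ratio is irrelevant holds only in the stationary phase. Moreover, a strictly decreasing $S_A(t)$ does not yield contraction to a set of diameter at most $2d_{\texttt{c}}$ in finite time without a uniform lower bound on the contraction rate. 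The paper does not prove this either; it only asserts, in the remark after the theorem, that evader motion shifts the sweeping center without breaking the contraction. You would have to supply that argument or state it explicitly as an assumption of your proof.
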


\begin{proof}
Condition (I) enables pursuers to guard all exit gates.
With \(d_{\texttt{c}} \geq \frac{1}{2} \textbf{max}_{e \in E_{\texttt{g}}^A} \|e\|\),
guards can be achieved by attackers via complete coverage for the gates.
Condition (II) ensures the initial confinement within \(G_A\),
such that the evader cannot escape through guarded gates \(E_{\texttt{g}}^A\)
under any \(\pi_{\texttt{e}}\).
Condition (III) guarantees that each attacker can cover its sector chord,
ensuring no escape routes remain for the evader.
Then the continuous region contraction of the sweeping strategy under Alg.~\ref{alg:H-MCTS}
guarantees capture by an attacker or a hider.
\end{proof}

\begin{remark}
It should be noted that the AMBUSH strategy assumes that the obstacles can induce a valid capture polygon, either directly from polygonal obstacles or through conservative polygonal approximations of non-polygonal obstacles; cases where such a capture polygon cannot be constructed are discussed separately in Sec.~IV-D.
In addition,
the case where the evader avoids all gates and exploits a speed advantage to circulate inside the enclosure is still covered by Theorem~\ref{thm:capture-guarantee} under the stated assumptions. The key reason is that the attackers do not sweep toward fixed positions; instead, they continuously form a sector-sweeping configuration with respect to the current evader position. At each instant, the sectors are constructed around the evader, and the attacker coverage condition in Theorem~\ref{thm:capture-guarantee} ensures that the corresponding sector chords and guarded gates remain fully covered within the capture radius. Hence, no uncovered gap is created for escape, even when the evader moves faster inside the enclosure. Such motion can only shift the sweeping center and may delay capture, but it cannot break the continuous contraction of the admissible region.
\hfill $\blacksquare$
\end{remark}

It is also worth noting that 
the constructed capture polygon is conservative because it provides a sufficient condition for capture rather than an exact characterization of the full capture region. In other words, states outside $\mathcal{P}_{c}$ may still be capturable, but they are not certified by the theorem. This conservativeness may reduce the certified capture region in practice, but it improves the reliability of the theoretical guarantee and provides a computationally tractable safety certificate for the AMBUSH framework.

\begin{theorem}\label{thm:hmcts-optimality}
Given any environment,
the proposed H-MCTS algorithm in Alg.~\ref{alg:H-MCTS}
guarantees the existence of the optimal parameter $\xi^\star$,
which solves Problem~\ref{prb:hybrid_opt}.
\end{theorem}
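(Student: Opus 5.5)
The plan is to split the claim into two parts. First, an optimizer $\xi^\star$ of \eqref{eq:dual_objective} exists over the search space that Alg.~\ref{alg:H-MCTS} actually explores. Second, the H-MCTS procedure recovers that optimizer as the budget $B$ grows. Throughout, the objective in \eqref{eq:dual_objective} is denoted $J(\xi)$.

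\textbf{Step 1: the effective search space is finite.} By Alg.~\ref{alg:goals_assigns}, the number of gates $E_{\texttt{g}}$ and hider goals $\tilde{\textbf{g}}^{\texttt{h}}$ is finite, because there are finitely many skeleton and boundary vertices. Hence $\mathcal{A}$, and any subset $\widehat{\mathcal{A}}$ of it, is finite: it has at most $|\tilde{\textbf{g}}|!/(|\tilde{\textbf{g}}|-N_{\texttt{p}})!$ elements. In the expansion phase, each $\rho^i\in[0,1]$ is discretized with step $\Delta\rho$, so $\widehat{\boldsymbol{\rho}}_+$ has at most $(1+1/\Delta\rho)^{N_{\texttt{p}}}$ elements. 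The effective parameter set $\widehat\Xi=\widehat{\mathcal{A}}\times\widehat{\boldsymbol{\rho}}_+$ is therefore finite.

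\textbf{Step 2: the objective is bounded, so a maximizer exists.} For every $\xi$, both terms inside the expectation lie in $[0,1]$. This uses $t_n\le T_{\texttt{cap}}\le t_{n+\Delta}$ when $\mathbb{I}_{\texttt{cap}}=1$, and $0<S(\xi)\le S_{\texttt{max}}$. So $J(\xi)$ is well defined and bounded for any evader policy $\pi_{\texttt{e}}$ whose induced trajectory distribution is measurable. A bounded real function on a finite set attains its maximum, which gives $\xi^\star\in\arg\max_{\xi\in\widehat\Xi}J(\xi)$. This settles existence relative to the discretized problem that the algorithm solves.

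If one wants existence over the continuous set $\mathcal{A}\times[0,1]^{N_{\texttt{p}}}$, I would instead try to show upper semicontinuity of $\boldsymbol{\rho}\mapsto J(A,\boldsymbol{\rho})$ and apply Weierstrass on the compact cube. That argument would need the cost maps \eqref{eq:cost-map} to depend continuously on $\rho^i$, and capture to be a closed condition in the trajectory. I expect this to be the \textbf{main obstacle}, since the $\texttt{A}^\star$ paths on a grid change discontinuously in $\boldsymbol{\rho}$. I would therefore state the theorem on the discretized space and note that refining $\Delta\rho$ approximates the continuous supremum.

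\textbf{Step 3: H-MCTS returns $\xi^\star$ in the limit.} The root has finitely many children, one per element of $\widehat\Xi$ retained after downsampling. The rollout rewards $r=1/S_A+\lambda\mathbb{I}_{\texttt{cap}}$ are bounded, since $S_A$ is bounded below by the smallest capture-graph area. I would then invoke the UCT analysis of~\cite{kocsis2006bandit}. Every child is visited infinitely often as $B\to\infty$. Each empirical mean $Q/N$ converges to its expected reward under $\hat\pi_{\texttt{e}}$, and the probability of selecting a suboptimal child in Line~\ref{step:select_best} decays to zero.

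To connect this to $J$, I would argue that the rollout return is a monotone surrogate for the objective in \eqref{eq:dual_objective}: both reward capture and penalize the enclosed area. Showing that this surrogate ordering agrees with the ordering by $J$ is the second delicate point. If it cannot be shown in general, I would redefine the rollout reward to be exactly the integrand of \eqref{eq:dual_objective}, so that the argmax returned by Alg.~\ref{alg:H-MCTS} coincides with $\xi^\star$.
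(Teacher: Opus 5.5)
Your outline follows the paper's proof. You discretize $\boldsymbol{\rho}$ to get a finite hybrid space, appeal to UCT so that every root child is expanded and visited infinitely often, use the law of large numbers for a.s.\ convergence of $Q/N$, and read off the argmax in Line~\ref{step:select_best}, claiming optimality only over the sampled space. Your Step~2 (a bounded function on a finite set attains its maximum) is all that ``existence'' needs after that restriction; the paper leaves it implicit. Your remark that grid $\texttt{A}^\star$ paths break the semicontinuity needed for the continuous version is a fair reason to restrict. You go beyond the paper at the bridge from $\lim Q/N$ to $J$, which the paper never builds. It writes $\lim_{N(\nu)\to\infty}Q(\nu)/N(\nu)=\bar V_B(\nu)$ and declares the argmax a solution of Problem~\ref{prb:hybrid_opt}. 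In effect it identifies Problem~\ref{prb:hybrid_opt} with maximizing the tree value produced by the algorithm's own return under $\hat\pi_{\texttt{e}}$.

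So the step you flag is a genuine gap. The paper shares it and settles it only by that identification, and neither of your routes closes it for the algorithm as stated. The monotone-surrogate route fails. On the capture event the integrand of \eqref{eq:dual_objective} depends only on $T_{\texttt{cap}}$, while $r=1/S_A+\lambda\mathbb{I}_{\texttt{cap}}$ has no $T_{\texttt{cap}}$ term. Two parameters that both capture surely are therefore ranked by capture time under $J$ and by final area under $r$, and these orders can disagree. Redefining the reward proves a statement about a different algorithm, and it still does not reach $J$, for two reasons. First, the expectation in \eqref{eq:dual_objective} is over the true $\pi_{\texttt{e}}$, while every rollout samples $\hat\pi_{\texttt{e}}$. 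Second, a root child's $Q/N$ limit is governed by deeper nodes, where new parameters are chosen from a successor state drawn only once at expansion; it is not the value of $\xi$ held fixed over $[t_n,t_{n+\Delta}]$. To prove what the paper actually proves, state the identification explicitly (optimality with respect to the simulated tree value under $\hat\pi_{\texttt{e}}$) rather than modifying the algorithm.

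Separately, your Steps~2 and~3 use different sets. Step~2 maximizes over all of $\widehat{\mathcal{A}}\times\widehat{\boldsymbol{\rho}}_+$. The root children in Step~3, however, are only those kept by the deterministic top-$M$-per-segment \texttt{Downsample} (Line~\ref{step:perturb}), so your Step-2 maximizer may never become a child. Define $\widehat\Xi$ after downsampling; the paper instead simply asserts coverage of all of $\widehat{\boldsymbol{\rho}}_+$.
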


\begin{proof}
The asymptotic optimality of H-MCTS is established through three principal arguments.
First, the continuous motion coefficient space $\boldsymbol{\rho} \in [0,1]^{\mathcal{N}{\texttt{p}}}$
is systematically sampled as Line~\ref{step:discrete},
yielding a finite hybrid action space $\Xi$.
Second, the UCT selection policy with $C_p$ in Line~\ref{step:uct_selection}
guarantees exhaustive exploration, i.e.,
as $B \to \infty$, every node is visited infinitely often and all $\xi \in \Xi$ are expanded
through the systematic sampling of assignments in Line~\ref{step:sample_gate}
and asymptotic coverage of the parameter space~$\widehat{\boldsymbol{\rho}}_+$.
Third, the simulation phase provides the consistent value estimation to
ensure $\lim_{N(\nu) \to \infty} Q(\nu)/N(\nu) = \bar{V}_B(\nu)$ almost surely
according to the Strong Law of Large Numbers.
Consequently, the termination rule in Line~\ref{step:select_best}
can select~$\xi^\star$, which solves Problem~\ref{prb:hybrid_opt} over the sampled space.
\end{proof}

\begin{remark}\label{rm:thm-discussion}
Theorem~\ref{thm:capture-guarantee} provides conservative sufficient conditions for the guaranteed capture under arbitrary evader strategies.
When these conditions are not met,
the capture becomes dependent on the evader's specific strategies and
cannot be predetermined.
Moreover,
our proposed method is specifically designed for complex environments where walls facilitate enclosure formation. 
Future work will explore multiple strategies switching to handle diverse environmental layouts.
\hfill $\blacksquare$
\end{remark}
\vspace{-5mm}
\subsection{Neural Acceleration}\label{sec:learn}

\begin{figure*}[!t]
    \centering
    \includegraphics[width=0.8\linewidth,height=0.4\linewidth]{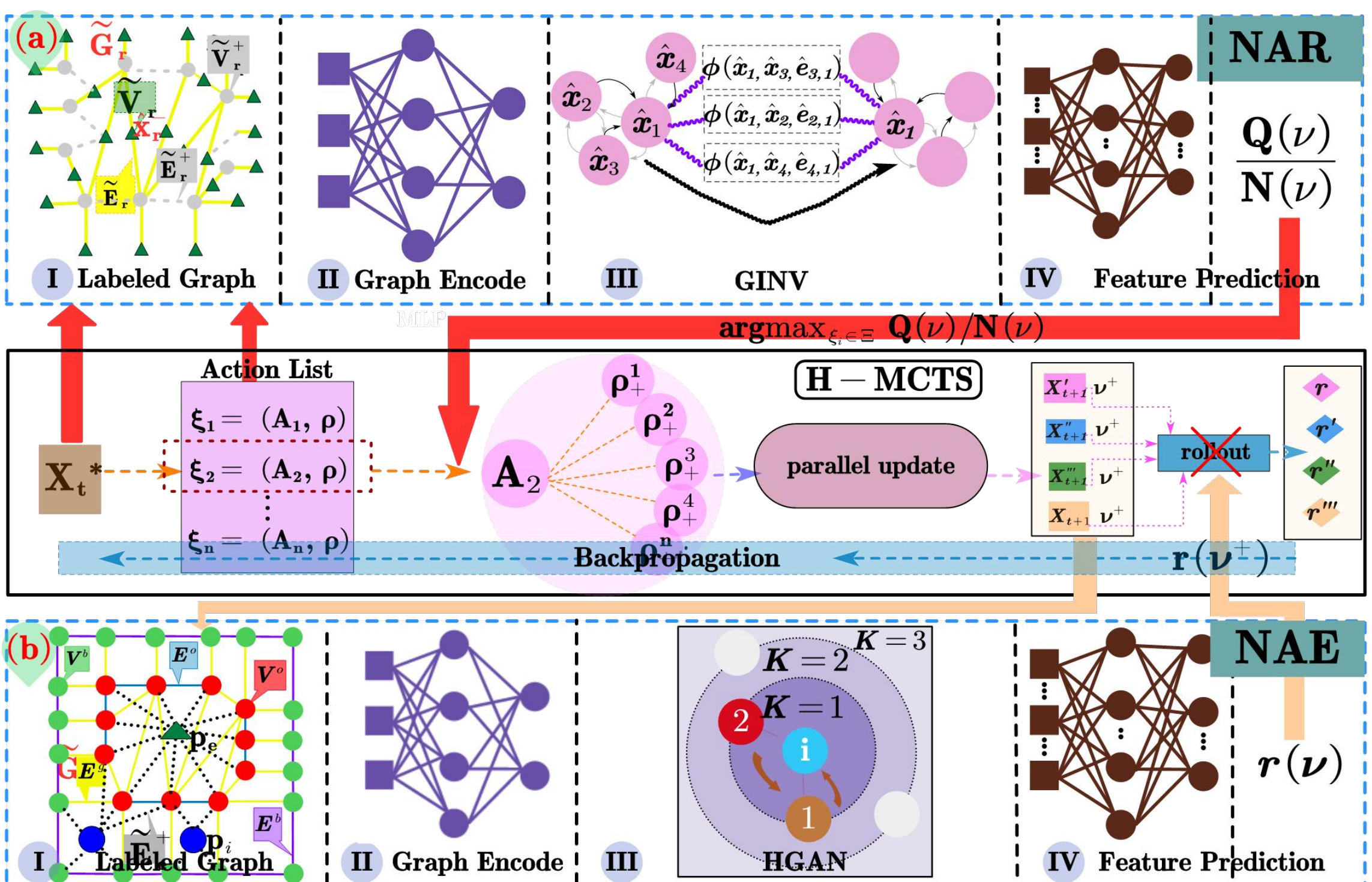}
    \caption{
      Architecture of the proposed neural accelerator NARE:
      \textbf{(a)} The Neural Network Ranker (NAR) takes the visibility graph $\widetilde{G}_{\texttt{r}}$
      to predict the cumulative Q-value for ambush parameters $\xi \in \Xi$;
      \textbf{(b)} The Neural Network Evaluator (NAE) requries the state-encoding graph $\widetilde{G}_{\texttt{e}}$
      to predict the immediate simulation rewards $r(\nu)$ for nodes $\nu$ in the search tree.
      Both components share a common three pre-processing steps including
      the graph encoding, GNN processing, and feature prediction.
    }
    \label{fig:NARE}
    \vspace{-6mm}
  \end{figure*}

Our system evolves dynamically, making the current optimal ambush strategy potentially suboptimal later due to the high cost of evaluating parameters $\xi \in \Xi$ and running simulations in H-MCTS. To tackle this, we propose a neural acceleration called NARE, (Fig.~\ref{fig:NARE}). It has two parts: (I) a Neural Network Ranker (NAR) that ranks parameters by their Q-value ratio to guide the node expansion; and (II) a Neural Network Evaluator (NAE) that predicts the rollout reward $r(\nu)$ for nodes $\nu$, replacing costly simulations.
It should be noted that NARE is used only as an accelerator rather than a theoretically guaranteed policy.
The final decision is still made by H-MCTS, while NARE only prioritizes node expansion and approximates rollout rewards to improve computational efficiency.

\subsubsection{Labeled Graph Input of NARE}
\label{subsec:graph-enc}
Since the NAR predicts the ratio of cumulative rewards to visits ($Q/N$) from complete MCTS procedures, 
while the NAE estimates the immediate simulation reward $r$ of individual nodes, their input graph representations differ fundamentally. 
As shown in Fig.~\ref{fig:NARE}, 
NAR uses a visibility graph representing environmental topology,
whereas NAE employs a heterogeneous state graph that captures the real-time system dynamics.

The NAR input graph $\widetilde{G}_{\texttt{r}} = (\widetilde{V}_{\texttt{r}}, \widetilde{E}_{\texttt{r}})$, shown in Fig.~\ref{fig:NARE}(a), 
is a visibility graph built from boundary vertices $V_{\texttt{b}}$ and obstacle skeleton vertices $\overline{V}_{\texttt{o}}$ in Def.~\ref{def:gate}. Its vertex set $\widetilde{V}_{\texttt{r}}$ corresponds to attacker goal candidates $\tilde{g}^{\texttt{a}}$ from~\eqref{eq:attacker-goals}. Each vertex $v_{\texttt{r}} \in \widetilde{V}_{\texttt{r}}$, located at $\mathbf{p}_{v_r} \in \mathbb{R}^2$, includes three features $\hat{x}_{\texttt{r}}^-$: betweenness centrality $C_b(v_{\texttt{r}})$, its minimum distance to any pursuer, and its distance to the evader. Edges $\widetilde{E}_{\texttt{r}}$ are the gate connections $E_{\texttt{g}}$ from Def.~\ref{def:gate}, linking visible vertices to enable the message passing. This structure allows NAR to capture long-term reward dependencies via the graph connectivity.

The NAE processes a state-encoding graph $\widetilde{G}_{\texttt{e}} = (\widetilde{V}_{\texttt{e}}, \widetilde{E}_{\texttt{e}})$ that models system states via heterogeneous nodes, as in Fig.~\ref{fig:NARE}(b). 
Its vertex set $\widetilde{V}_{\texttt{e}}$ includes seven types: pursuer positions $\mathbf{p}_i$, evader position $\mathbf{p}_{\texttt{e}}$, the combined goal set $\tilde{g}$, the evader's current goal $g_{\texttt{e}}$, boundary vertices $V^{\texttt{b}}$, and obstacle vertices $V^{\texttt{o}}$. The edge set $\widetilde{E}_{\texttt{e}}$ encodes functional relationships: 
an assignment link denotes the geometric association between the current position of pursuer $i$ and its assigned goal, 
which is represented by the line segment $(\mathbf{p}_i \leftrightarrow \tilde{\mathbf{g}}_i)$;
a navigation link denotes the geometric association between the current position of the evader and its intended goal, 
which is represented by the line segment $(\mathbf{p}_{\texttt{e}} \leftrightarrow \mathbf{g}_{\texttt{e}})$;
boundary edges $E_{\texttt{b}}$; obstacle edges $E_{\texttt{o}}$; and gate edges $E_{\texttt{g}}$. 
Each edge has a one-hot encoded type attribute $\hat{e}^-$. 
In addition, distance-based edges are introduced to provide local spatial
connectivity for message passing. Specifically, for any two spatial nodes
$u,v\in\widetilde{V}_{\texttt{e}}$, 
an undirected distance edge is added if
$v\in\mathcal{N}_{k_d}(u)$ and $\|\mathbf{x}_u-\mathbf{x}_v\|_2\le r_d$,
where $\mathcal{N}_{k_d}(u)$ denotes the set of the $k_d$ nearest neighbors
of node $u$ and $r_d$ is a fixed distance threshold used in all experiments.
This design allows NAE to assess immediate rewards from spatial
and assignment relations.

\subsubsection{Network Architecture of NARE}
\label{subsec:nare}

Following the tailored graph structures depicted in Fig.~\ref{fig:NARE}, 
both the NAR and NAE components share a unified three-stage processing framework: graph encoding, GNN-based message passing, and feature prediction.
(I) Graph Encoding. Based on their respective input graphs, initial feature embeddings are generated. For NAR, each vertex's 3-dimensional feature vector $\hat{x}_{\texttt{r}}^-$ is encoded via a two-layer MLP. Since its edges are unweighted visibility connections, they require no parametric encoding. For NAE, its heterogeneous vertices and edges~$\hat{e}^-$ are embedded into a shared feature space using a three-layer MLP.
(II) GNN-based Module. The encoded features are refined through $L$ stacked GNN layers. To suit its homogeneous topology, NAR employs Graph Isomorphism Networks (GINV)~\cite{xu2018powerful}, which ensure the permutation-invariant aggregation. For NAE's heterogeneous graph, we adopt Heterogeneous Graph Attention Networks (HGAN)~\cite{velivckovic2017graph} with modality-specific attention mechanisms to model the distinct interaction types, e.g., assignment and navigation.
(III) Feature Prediction. The final graph-level output is obtained via a virtual super-node bidirectionally connected to all vertices, followed by an MLP. This design enables fully distributed execution, as all robots share identical network parameters. 
Each robot only needs to broadcast its own position~$(\mathbf{p}_i$, $\mathbf{p}_{\texttt{e}})$ and assignment~$\tilde{g}_i$ to construct the graph locally and perform the independent inference.

\subsubsection{Training and Network Parameter Setting}
\label{subsec:train-exc}

Training datasets for NAR and NAE are generated by executing H-MCTS across diverse scenarios with varying robot numbers, initial positions, and velocities. The NAR dataset $\mathcal{D}_{\texttt{r}}$ collects every first-layer child node $\nu$ of the MCTS root, annotated with its obtained Q-value ratio $\hat{Q}(\nu)/\hat{N}(\nu)$. The NAE dataset $\mathcal{D}_{\texttt{e}}$ comprises all expanded nodes $\nu$ from the tree, each labeled with the immediate simulation reward $\hat{r}(\nu)$ from its rollout. All nodes are encoded into their corresponding graph representations in Sec.~\ref{subsec:graph-enc}. 
Both components are trained supervisedly, i.e.,
NAR minimizes the MSE loss $\mathcal{L}_{\texttt{r}} = \frac{1}{|\mathcal{D}_{\texttt{r}}|}\sum (Q(\nu)/N(\nu) - \hat{Q}(\nu)/\hat{N}(\nu))^2$ between predicted and obtained Q-ratios, while NAE minimizes $\mathcal{L}_{\texttt{e}} = \frac{1}{|\mathcal{D}_{\texttt{e}}|}\sum (r(\nu) - \hat{r}(\nu))^2$ between predicted and obtained rewards. The training and validation process is illustrated in Fig.~\ref{fig:learn-result}.
Notably, to ensure the generalization capability of the NARE, 
the validation set is composed of entirely different scenarios 
featuring distinct topologies and velocity ratios compared to those in the training set.

\subsubsection{Execution of H-MCTS via NARE}
\label{subsec:exec}

To maintain the solution quality while accelerating online execution, we integrate the trained NARE modules into H-MCTS via a dual-stage framework. 
The process, illustrated in Fig.~\ref{fig:NARE}, is as follows:
(I) Expansion Acceleration: During node expansion (Alg.~\ref{alg:H-MCTS}, Line~\ref{step:sample_gate}), NAR ranks candidate assignments $\widehat{\mathcal{A}}$ by their predicted $Q/N$ values in descending order, as follows:
\begin{equation}
\frac{Q(\nu_1)}{N(\nu_1)} \geq \frac{Q(\nu_2)}{N(\nu_2)} \geq \cdots \geq \frac{Q(\nu_{|\widehat{\mathcal{A}}|})}{N(\nu_{|\widehat{\mathcal{A}}|})},
\label{eq:ranking}
\end{equation}
The assignment with the highest unexplored score is selected first, after which standard expansion proceeds. Example outputs are shown in Fig.~\ref{fig:NAR}.
(II) Simulation Acceleration:  For leaf nodes, NAE predicts $r(\nu)$ directly from graph $\widetilde{G}_{\texttt{e}}$, replacing the rollout in Line~\ref{step:rollout} of Alg.~\ref{alg:H-MCTS}.


\begin{figure}[!t]
  \centering
  \includegraphics[width=1.0\linewidth]{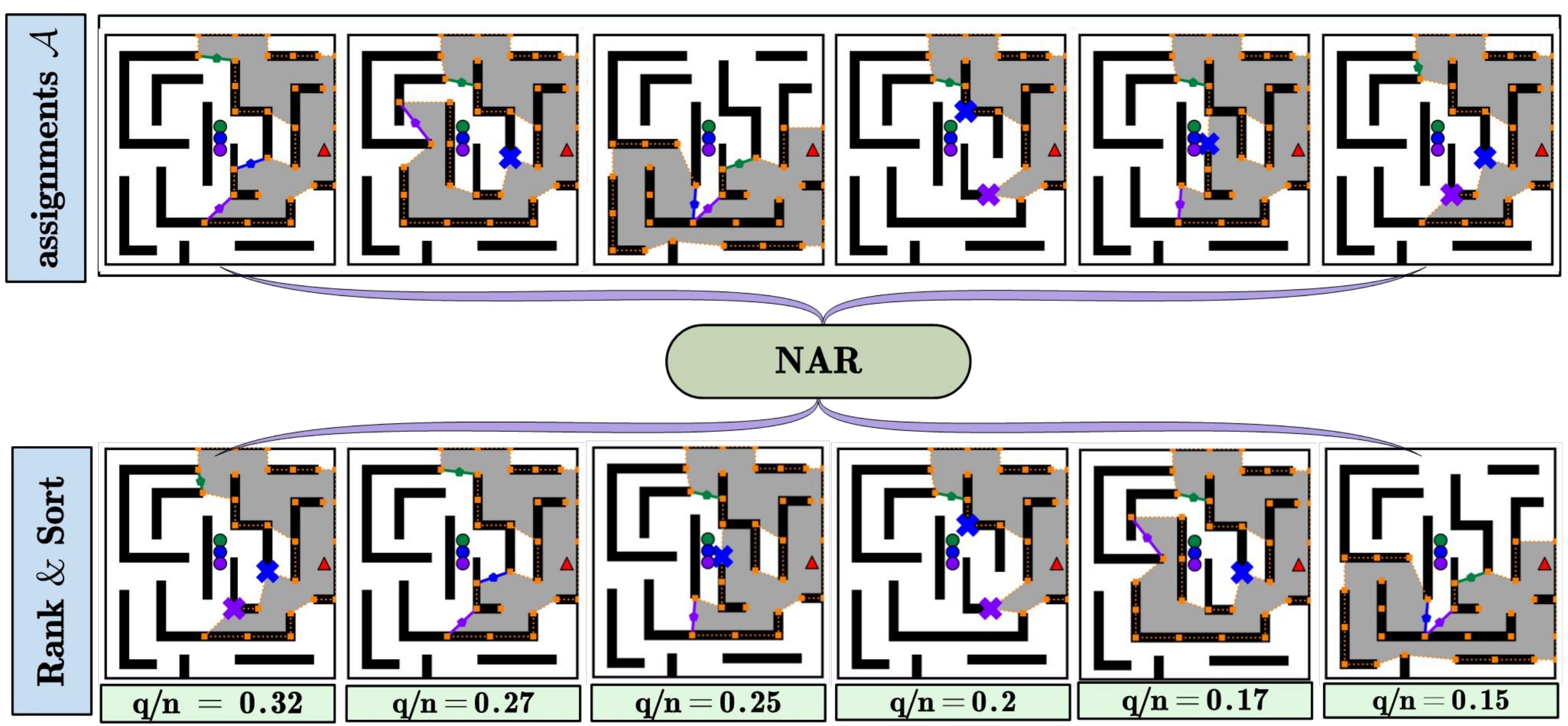}
  \caption{
    Illustration of the ambush assignments (\textbf{top}),
    which are ranked and sorted via the learned NAR (\textbf{bottom}).
    Each gray enclosure denotes the capture graph induced by a specific assignment, 
    including the goals of hiders and attackers.
  }
  \label{fig:NAR}
  \vspace{-6mm}
\end{figure}

\subsection{Online Execution and Adaptation}\label{subsec:online}

In dynamic pursuit-evasion, fixed-interval replanning has two key limitations,
i.e., sensitivity to parameter tuning towards different scenarios and unnecessary interruptions of effective strategies. 
To address these, we propose a dynamic triggering mechanism that initiates H-MCTS replanning along with the role switching for hiders
under three conditions.
First,
replanning triggers when the evader breaches the ambush perimeter, meaning no pursuer can reach its interception position before the evader escapes its assigned sector, namely:
\[
\min_{i \in \mathcal{N}_{\texttt{p}}} \|\mathbf{p}^t_i - \mathbf{p}^t_{\texttt{e}}\| > \big(2d_{\texttt{c}} + \max_{i \in \mathcal{N}_{\texttt{p}}}(\overline{v}_{\texttt{p}} \cdot \Delta t_{\texttt{rct}})\big),
\]
where $\Delta t_{\texttt{rct}}$ is the pursuer reaction time. This detects unexpected evader maneuvers that invalidate the current ambush, resetting all  positions $g_i$ based on the evader's current location.
Second, replanning activates if all pursuers reach their ambush positions significantly earlier than the estimated capture time:
\[
\|\mathbf{p}^t_i - g_i\| < \varepsilon_{\texttt{pos}},\; \forall i \in \mathcal{N}_{\texttt{p}},\; \text{and} \; t < t_n + T_{\texttt{est}},
\]
where $\varepsilon_{\texttt{pos}}$ is a position tolerance, $t_n$ is the last replanning time, and $T_{\texttt{est}}$ is the expected capture time. This adapts to evader behavior changes by regenerating $T_{\texttt{est}}$ via updated simulations.
Lastly, a maximum interval $\Delta t_{\texttt{max}}$ ensures the timeliness. Replanning is forced if this interval elapses without other triggers, indicating the current strategy is ineffective. This hybrid approach minimizes unnecessary replanning while ensuring the responsiveness. 
During execution, transition data $\{(\mathbf{X}_t, \xi, \mathbf{X}_{t+1})\}$ are logged to refine the estimated evader policy $\hat{\pi}_{\texttt{e}}$ and reward calibration.
\subsection{Generalization}\label{subsec:online}

\subsubsection{Heterogeneous pursuers}
\label{subsubsec:heterogeneous}

The proposed ambush strategy supports heterogeneous pursuer teams with distinct capabilities, such as varying capture radius and velocities. This is enabled by two aspects: (i) the agent model in Sec.~\ref{sec:problem} configures per-pursuer parameters like $d_{\texttt{c}}$ and $\overline{v}_{\texttt{p}}$;
(ii) the core strategy modules including goal generation and role assignment,
operate independently of kinematics, while the motion strategy $f(\cdot)$ naturally accommodates speed differences. Furthermore, the H-MCTS algorithm evaluates strategies via simulation rather than explicit motion constraints, allowing it to adapt automatically to heterogeneous robots without modification. This decoupled design resolves capability differences at the configuration level, maintaining computational efficiency comparable to homogeneous cases.

\subsubsection{Limited view for pursuers}
\label{subsubsec:limited_view}

The ambush strategy may degrade if pursuers lack the real-time evader positioning and must rely on memorized locations. To address this, we introduce a phase-switching mechanism. After executing an ambush, each pursuer locally checks evader visibility via its sensors. If any pursuer detects the evader, the ambush continues; otherwise, the team switches to a cooperative search-and-exploration strategy as seen in the work~\cite{rockenbauer2024traversing, gul2021novel}, systematically exploring regions around the evader’s last known position through coordinated waypoints. This decoupled design separates perception constraints from strategic planning, and the switching mechanism provides extensibility for similar partial-observation challenges.

\subsubsection{Multiple evaders}
\label{subsubsec:multiple_evaders}

The modular design can be extended to multi-evader scenarios using task allocation methods~\cite{chen2023accelerated}. Neighboring pursuers form dynamic, evader-specific coalitions, each executing an independent ambush strategy. Capture times provide utility metrics, enabling convergence to a Nash equilibrium through iterative switching.
The process consists of three phases: (i) coalition initialization via partial partitioning; (ii) utility-driven optimization to reach an equilibrium assignment; (iii) dynamic reallocation upon evader escapes or captures, which redistributes pursuers while maintaining active strategies. Formally, the allocation maximizes the mean coalition utility:
$
\textbf{max}_{\mathcal{A}} \sum_{m=1}^{M} u(\mathcal{R}_m, \omega_m),
$
where \(\mathcal{A}\) is the set of robot-task assignments, \(\mathcal{R}_m\) is the coalition for the \(m\)-th evader's task \(\omega_m\), and \(u(\cdot)\) is the utility function. The solution satisfies constraints of valid robot subsets, mutual exclusivity, and complete assignment. This approach supports the adaptation through local communication, maintaining the real-time performance in such dynamic scenarios.


\subsubsection{Sparse Environments}
\label{subsubsec:sparse_envs}
While the proposed ambush strategy is primarily designed for obstacle-rich environments, where discretization of the space and gate formation enable effective encirclements, it can also be adapted to sparse or open scenarios. Such scenarios include a few large obstacles, small scattered obstacles, or largely free space. 
In these cases, the Alg.~1 can be bypassed, and the rollout procedures in Alg.~2 can be executed with motion strategies replaced by encirclement algorithms capable of handling minimal obstacles, as proposed in~\cite{fang2020cooperative}.
This modular substitution preserves coordination and planning structures while extending applicability to low-density and free-space environments.

\begin{figure*}[!t]
    \centering
    \includegraphics[width=0.95\linewidth]{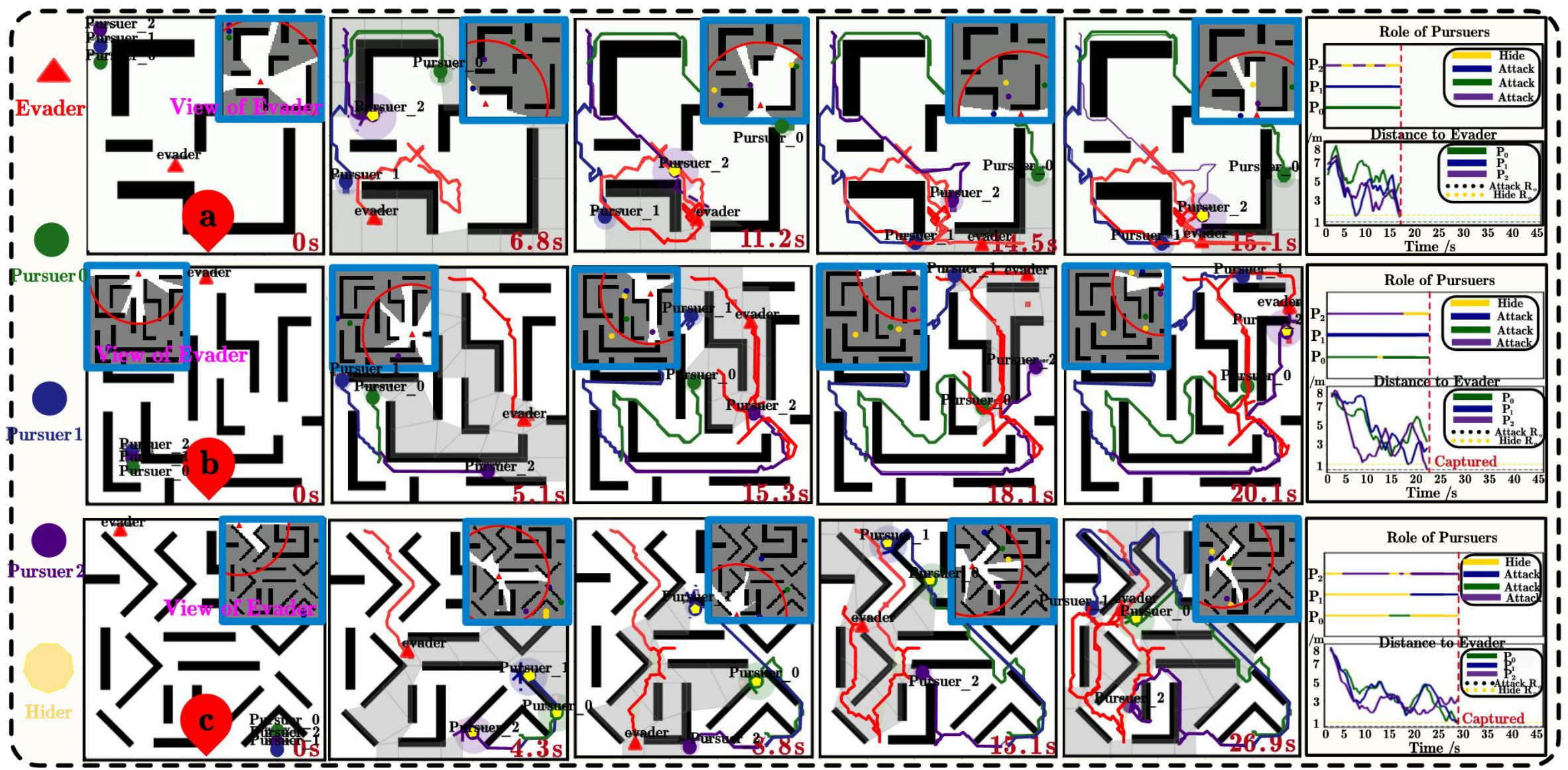}
    \caption{
      Snapshots of the AMBUSH strategy for $3$ pursuers and $1$ evader,
      under different scenarios in the evaluation:
      Scenario-I (\textbf{top}),
      Scenario-II (\textbf{middle});
      and Scenario-III (\textbf{bottom}).
      Note that the local view of the evader is shown in the corners,
      while the switching of pursuer roles and their relative distance
      to the evader are shown in the right column.
    }
    \label{fig:all-case}
    \vspace{-4mm}
  \end{figure*}

\subsubsection{Different Robot Model}
\label{subsubsec:robot_model}

The proposed framework adopts a hierarchical structure, where the high-level planner 
generates discrete waypoints, while the low-level controller 
tracks the resulting reference trajectories. After solving the assignment problem 
via Alg.~2, collision-free paths are generated using A* search. Since the planner 
only outputs geometric paths and task-level decisions, it remains independent of 
specific robot dynamics.
Then we consider two representative models. 
For non-holonomic ground robots, a unicycle model is used, where the controller 
regulates forward and angular velocities to align the robot with the reference path. 
For systems with inertial dynamics, such as UAVs or acceleration-limited vehicles, 
a double-integrator model is adopted, where acceleration is regulated to track the 
desired velocity. In both cases, saturation is imposed to ensure physical feasibility, 
so the same planning pipeline can be transferred across heterogeneous platforms by 
only modifying the low-level control law.

\vspace{2mm}

\subsubsection{Non-polygonal Obstacle Environments}
\label{subsubsec:non_poly}
The proposed vertex-based representation can be extended to non-polygonal obstacles, 
such as circular or curved obstacles, by using conservative polygonal approximations. 
Specifically, each obstacle is enclosed by a bounding rectangle, or decomposed into 
several local rectangles for finer approximation. The resulting vertices and edges 
enable the same skeleton extraction and gate construction procedures without modifying 
the high-level planning framework. 
For hider-goal generation,
candidate goals are sampled along the angular span of the original obstacle boundary 
and placed with a safety offset.
These sampled goals are then checked against the approximated obstacle boundary to avoid placing them inside the conservative enclosure. In this way, the extension only affects the geometric preprocessing step, while the subsequent visibility reasoning and gate-based planning remain unchanged.

\section{Numerical Experiments} \label{sec:experiments}


\subsection{Experimental Setup}
\label{sec:exp-setup}

As shown in Fig.~\ref{fig:all-case}, experiments are conducted with multiple pursuers ($N_{\texttt{p}}\ge2$) and one evader in three $10\times10$ m\textsuperscript{2} environments: a basic one with regular obstacles in Fig.~\ref{fig:all-case}(a), a complex one with polygonal obstacles in Fig.~\ref{fig:all-case}(b), and a challenging one with irregular obstacles (c) in Fig.~\ref{fig:all-case}. All agents are randomly initialized in free space. The pursuers’ maximum speed is $\overline{v}_{\texttt{p}} = 1.0$ m/s, compared to the evader’s $\overline{v}_{\texttt{e}} = 1.6$ m/s. The line-of-sight detection range is $d_{\texttt{o}} = 10$ m, and the capture radius is $d_{\texttt{c}} = 0.4$ m. Our event-triggered control strategy updates only when all pursuers reach their goals or the evader escapes. Trials are limited to $30$ s; failure to capture within this time results in the termination.
Each experiment runs five trials with different initial positions.

To further validate the effectiveness of the proposed method,
five state-of-the-art approaches are compared as follows:
(I) Analytic-based Method~\cite{fang2020cooperative}:
        A distributed algorithm enables slower pursuers to capture faster evaders
        through the adaptive switching between two strategies, i.e., the collaborative encirclement formation
        and the direct hunting approach;
(II) Geometric-based Method~\cite{wang2023distributed}:
        Centralized algorithm using buffered Voronoi cells to coordinate pursuers in cluttered environments;
(III) RL-based Method~\cite{de2021decentralized}:
        This approach implements a decentralized deep reinforcement learning framework
        to learn cooperative pursuit strategies through a structured curriculum mechanism,
        and replicate their hyperparameter configurations;
(IV) Shooting:
        Probabilistic gate selection method that replaces H-MCTS with a scoring mechanism.
        For each candidate gate, the expected capture probability is computed by:
        (i) generating evader's shortest paths to goals,
        (ii) evaluating overlap between pursuer zones and escape paths,
        (iii) aggregating probability-weighted coverage.
        Then the highest-scoring gate is selected;
(V) MCTS:
        Direct implementation of H-MCTS algorithm without incorporating the NARE module.

\subsection{Results and Analysis}
\label{subsec: overall-perf}

\subsubsection{Overall Performance}

\begin{figure}[!t]
  \centering
  \includegraphics[width=0.99\linewidth]{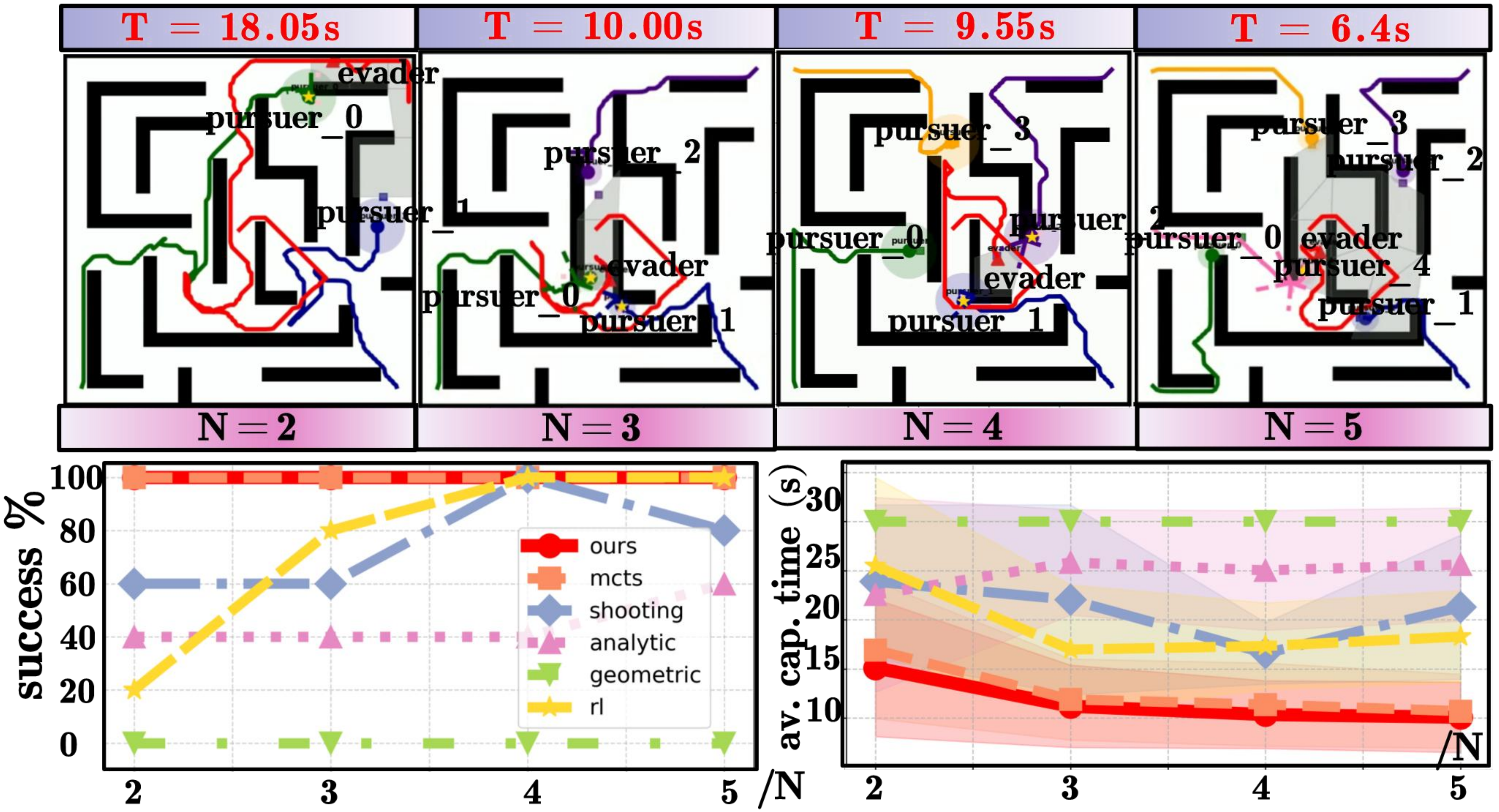}
  \vspace{-5mm}
  \caption{The capture rate and average capture time (\textbf{bottom})
    under different number of pursuers ranging from $2$ to $5$
    over five trials,
    along with the snapshots of final capture (\textbf{top}).
  }
  \label{fig:number}
  \vspace{-3mm}
\end{figure}

\begin{figure}[!t]
  \centering
  \includegraphics[width=0.99\linewidth]{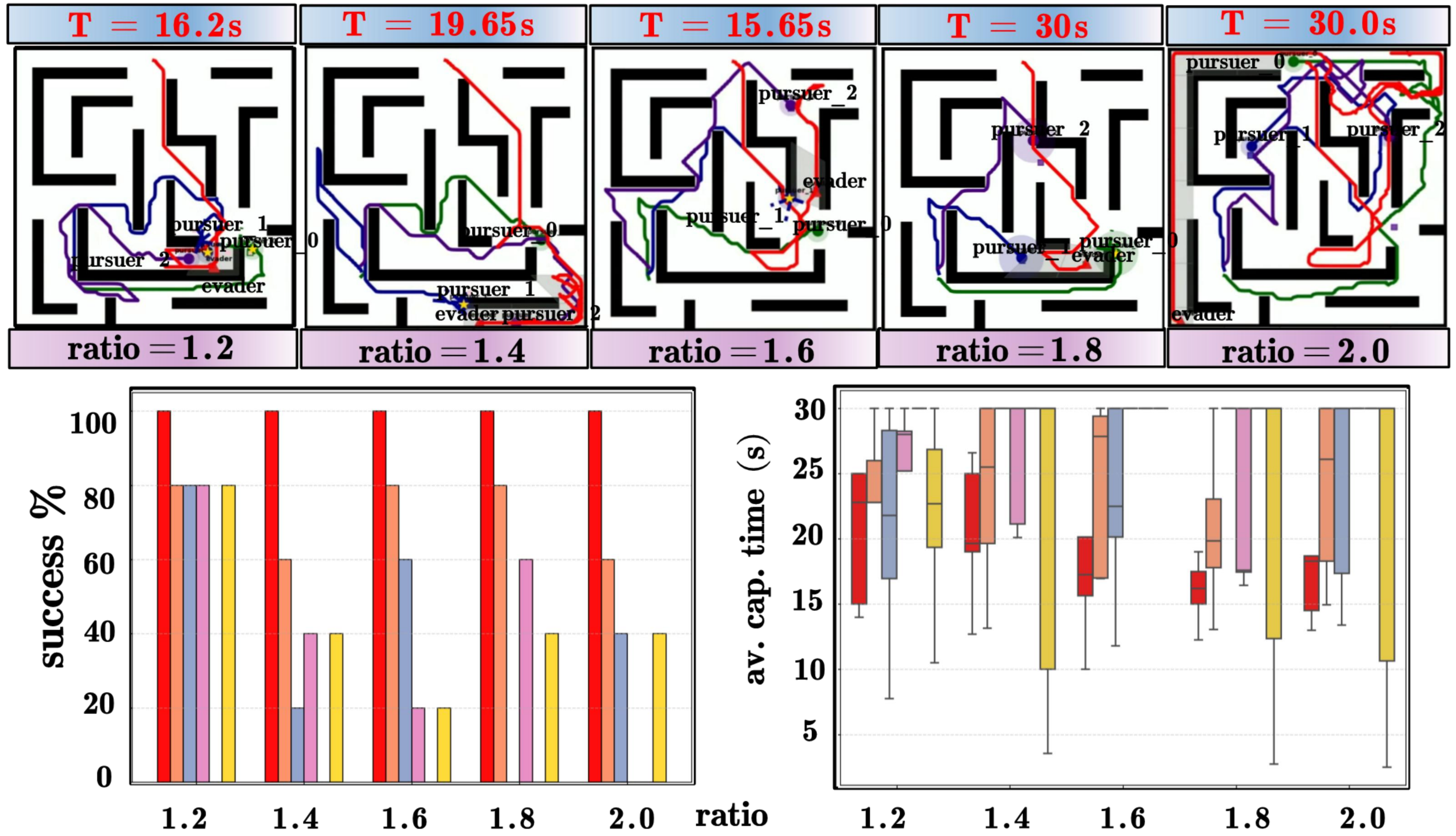}
  \vspace{-4mm}
  \caption{
    The average capture rate and capture time
    under different velocity ratios between the evader and the pursuers,
    ranging from $1.2$ to $2.0$ through performing five trials with randomized positions.
  }
  \label{fig:velocity}
  \vspace{-6mm}
\end{figure}

The proposed method is evaluated in the scenarios presented in Fig.~\ref{fig:all-case}. It achieves real-time policy updates averaging 5.5~s. The pipeline comprises an offline phase for topological decomposition and goal computation ($\sim$100~s and 2~s, respectively), and an online phase that filters 20\% of goals and prunes 40\% of the search space, each in 0.1~s. The H-MCTS algorithm executes gate selection in 5~ms/cycle and parameter discretization in 0.5~s.
In Scenario-I, pursuers transit from the initial clustering to specialized roles after evader escape at $t = 6.8\text{s}$.
By $t = 11.2\text{s}$, P0 colored by green guards distal gates,
P1 colored by blue herds the evader colored by red towards the ambush region,
and P2 colored by purple conceals near the evader.
Despite P2's failed surprise attack at $t = 14.5\text{s}$,
the synchronized gate closure achieves capture at $t = 15.1\text{s}$.
Scenario-II establishes an ambush region at $t = 5.2\text{s}$ systematically shrunk via P0-P1 herding,
culminating in P2's emergence-driven capture at $t = 20.1\text{s}$.
Scenario-III demonstrates the environmental adaptation through dynamic gate recomputation
at $t = \{4.3\text{s}, 15.1\text{s}\}$, with final interception via curved paths at $t = 26.9\text{s}$.
Key findings indicate that the strategy-invocation frequency increases with environmental complexity (averaging 2.4, 8.8, and 5 times), while the capture time also grows (6.7~s, 9.7~s, 17.0~s). Role adaptation is observed, transitioning from attacker dominance ($67\pm4\%$) to greater hider involvement ($33\pm3\%$), effectively balancing the adaptability and efficiency.

\subsubsection{Analysis of Key Parameters}
\label{subsubsec:ana-para}

We systematically evaluate the performance of our proposed method across critical parameters
over five trials with randomized initial positions in each parameter setting.
Key factors influencing capture efficiency, i.e., success rate and capture time,
are examined through the following variations, including:
pursuer numbers, speed ratios, evader intelligence levels, and capture ranges.

\begin{figure}[!t]
  \centering
  \includegraphics[width=0.9\linewidth]{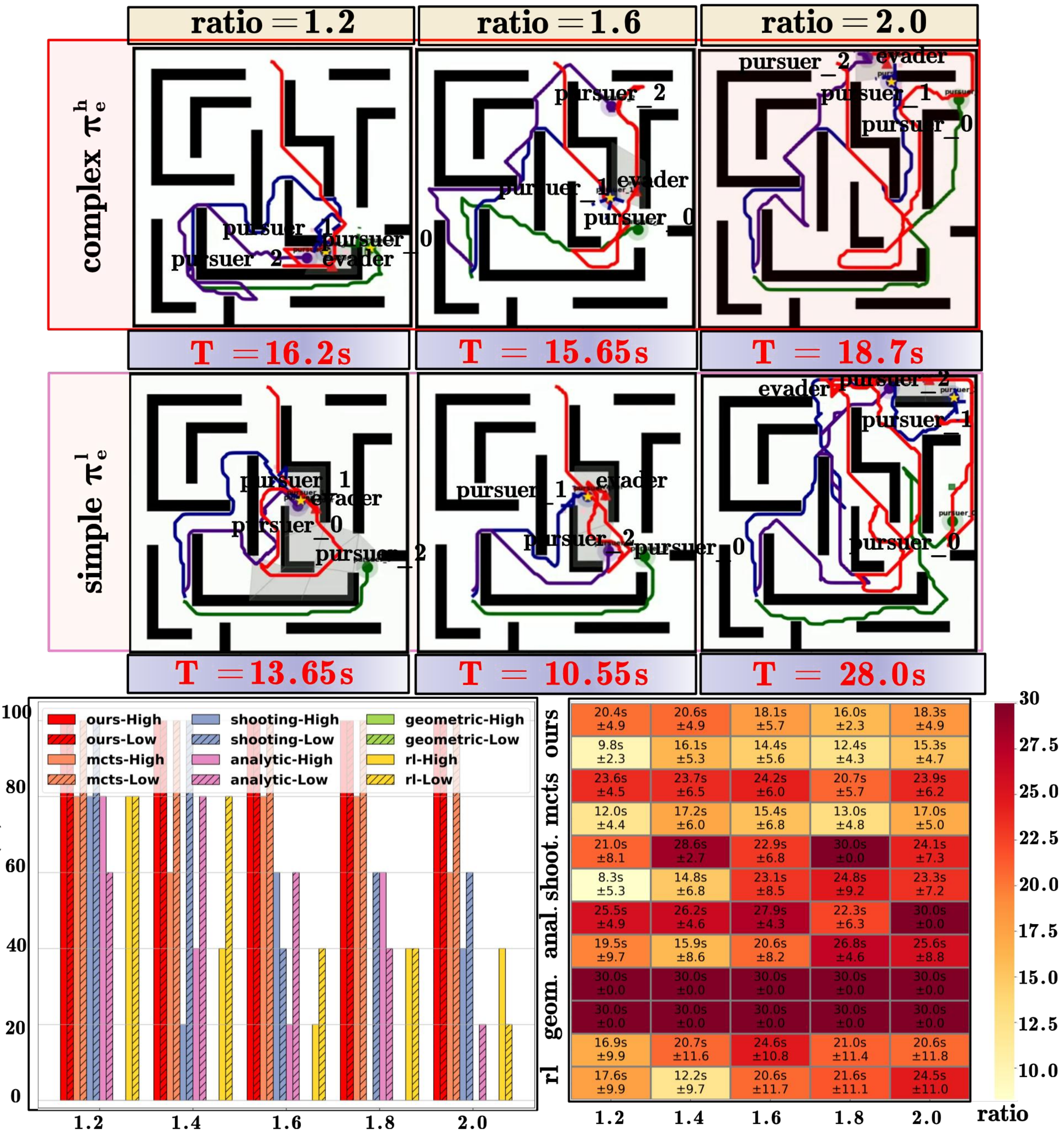}
  \caption{
    Snapshots of the final capture under complex (\textbf{top}) and simple (\textbf{middle})
    strategies of the evader and different velocity ratios;
    Comparisons of the average capture rate and time of different baselines,
    under different evader strategies and velocity ratios (\textbf{bottom})
    over five trials with different initial positions.
  }
  \label{fig:intelligence}
  \vspace{-6mm}
\end{figure}

\textbf{(I) Number of Pursuers:}
Fig.~\ref{fig:number} shows both our method and its MCTS variant achieve $100\%$ success across all pursuer numbers from $2$ to~$5$,
with capture time decreasing from $15.7 \pm 3.3\text{s}$ to $10.08 \pm 4.0\text{s}$.
Time saturates beyond 4 pursuers ($10.35 \pm 3.9\text{s}$ vs. $10.08 \pm 4.0\text{s}$) due to the agent redundancy,
as evidenced by the inactive pink trajectory (N=5).
The shooting method shows the high variability ($60-100\%$ success; 
$24.7 \pm 2.3\text{s}$ to $15.7 \pm 3.3\text{s}$ in capture time),
while the analytic approach exhibits the poor scaling in the success rate from $40\%$ to $60\%$
and from $25 \pm 1.8\text{s}$ to $22 \pm 2.3\text{s}$ in capture time.
Voronoi method fails completely (0\% success) in narrow environments.
RL baseline displays the strong size-dependence ($20\%$ to $100\%$ success) but $52.7\%$ longer capture times
than ours ($25.2 \pm 4.2\text{s}$ vs. $10.08 \pm 4.0\text{s}$ at N=5),
highlighting their reliance on the numerical advantage over coordination.

\textbf{(II) Evader Velocity:}
Fig.~\ref{fig:velocity} shows our method maintains $100\%$ success across speed ratios of~$1.2-2.0$,
with capture times exhibiting a convex trend, i.e.,
decreasing from $20.4 \pm 5.4\text{s}$ at the speed of~$1.2$ to $15.9 \pm 2.6\text{s}$ at~$1.8$,
then rising to $18.3 \pm 5.4\text{s}$ at the speed of~$2.0$.
This pattern reflects the trade-off between the entering or escaping from the ambush region, i.e.,
moderate ratios of~$1.2-1.8$ enhance the entrapment while higher ratios ($>1.8$) enable escapes.
The MCTS variant follows this trend but with $25-40\%$ longer times.
Shooting method shows the random performance ($20-80\%$ success,
$20.1-27.5\text{s}$ in capture time),
analytic approach completely fails,
and the RL baseline, trained at a speed ratio of~$1.6$, exhibits limited generalization,
i.e.,
it achieves $70\%$ success for speed ratios below $1.6$, 
but drops to $40\%$ for ratios above $1.8$,
with capture times $45$--$60\%$ longer than our method.

\begin{figure}[!t]
  \centering
  \includegraphics[width=0.99\linewidth]{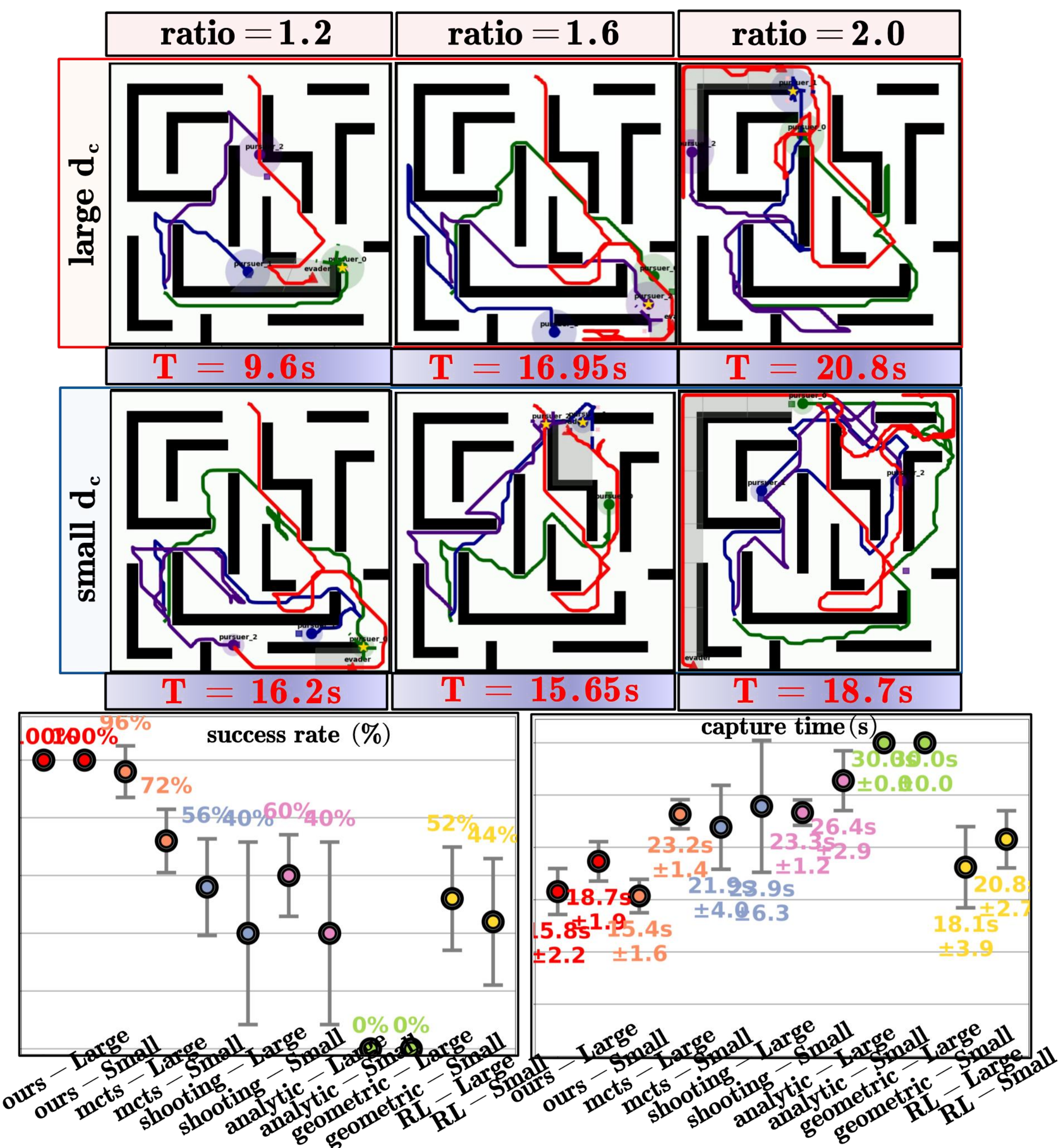}
  \caption{
    Snapshots of the final capture under different capture ranges of the pursuers
    and velocity ratios (\textbf{top});
    Comparisons of the average capture rate and time of different baselines,
    under different  capture ranges (\textbf{bottom})
    via five trials with different initial positions.}
  \label{fig:capture-range}
  \vspace{-6mm}
\end{figure}

\textbf{(III) Evader Intelligence:}
Our method maintains $100\%$ success across different intelligence levels of the evader as described in Sec.~\ref{subsec:evader}
at the speed ratios of~$1.2-2.0$ in Fig.~\ref{fig:intelligence},
though requiring $108.2\%$ longer capture times against complex evaders 
($20.4\pm4.9\text{s}$ vs. $9.8\pm1.7\text{s}$ at the speed ratio of~$1.2$).
MCTS shows the high sensitivity ($37.8-96.7\%$ prolonged times),
shooting method exhibits the stochasticity ($8.3-24.8\text{s}$),
analytic approach degrades progressively ($17.2-64.8\%$ longer times),
geometric method fails completely ($0\%$ success),
and RL baseline generalizes poorly ($21.6\text{s}$ vs. ours $12.4\text{s}$).
Against human operators,
our method achieves $60.0\%$ success with local perception
but achieves $40\%$ success at a speed ratio of $1.2$, 
$20\%$ at $1.6$, and $0\%$ at $2.0$ under global vision,
demonstrating the effective adaptation to realistic partial-information scenarios
while revealing inherent limitations against the perfect situational awareness.

\textbf{(IV) Capture Range:}
Capture range~$d_{\texttt{c}}$ critically influences the effectiveness as shown in Fig.~\ref{fig:capture-range}.
Our method maintains $100\%$ success across all speeds of~$1.2-2.0\text{m/s}$ and ranges $d_c$ from~$0.4\text{m}$ to~$0.8\text{m}$.
Baselines show the variable success ($20-100\%$),
with MCTS, shooting, and analytic methods achieving $25\%$, $60\%$,
and $60\%$ higher success respectively at $d_c=0.8\text{m}$ vs. $0.4\text{m}$.
Capture range dominates efficiency more than other factors,
with all baselines except geometric method showing the strong sensitivity.
Notably, RL maintains near-optimal performance when tested at $d_c=0.4\text{m}$ after training at $0.8\text{m}$,
demonstrating exceptional generalization.

\subsubsection{Discussion}
\label{subsubsec:discuss}
Our analysis identifies capture range as the dominant factor for improving capture effectiveness at high evader speed.
As Fig.~\ref{fig:discuss} (a) shows, by using $3$ pursuers with speed ratio of~$2.0$, capture radius $0.4\text{m}$ under Scenario-II,
increasing range to $0.8\text{m}$ enables the successful capture with proportionally decreasing time in Fig.~\ref{fig:discuss} (b),
demonstrating the geometrically enhanced effectiveness.
Adding $4$-$5$ pursuers achieves the successful capture but with diminishing returns due to overlapping operational zones.
Similarly, increasing pursuer speed from $1.0$ to $2.0\text{m/s}$ enables success but exhibits non-linear saturation,
highlighting the coordination's importance over kinematic superiority.
The parametric study reveals a clear hierarchy of influence, i.e.,
capture range dominates capture effectiveness,
followed by secondary factors of pursuer number and speed showing comparable impacts.
These findings provide critical design guidelines, i.e.,
prioritize capture range optimization over the brute-force scaling of team size or mobility under resource constraints.

\begin{figure}[!t]
  \centering
  \includegraphics[width=0.99\linewidth]{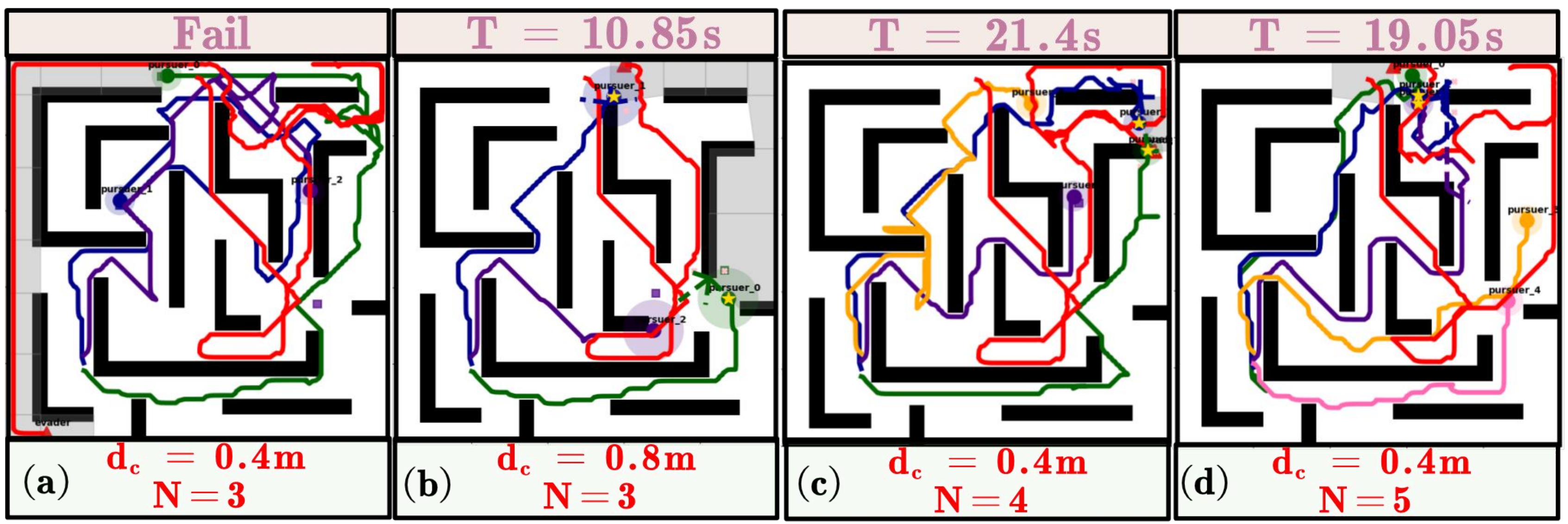}
  \caption{
    Snapshots of the final capture when the number of pursuers,
    and their capture range change.
  } 
  \label{fig:discuss}
  \vspace{-3mm}
\end{figure}

\begin{figure}[!t]
  \centering
  \includegraphics[width=0.99\linewidth]{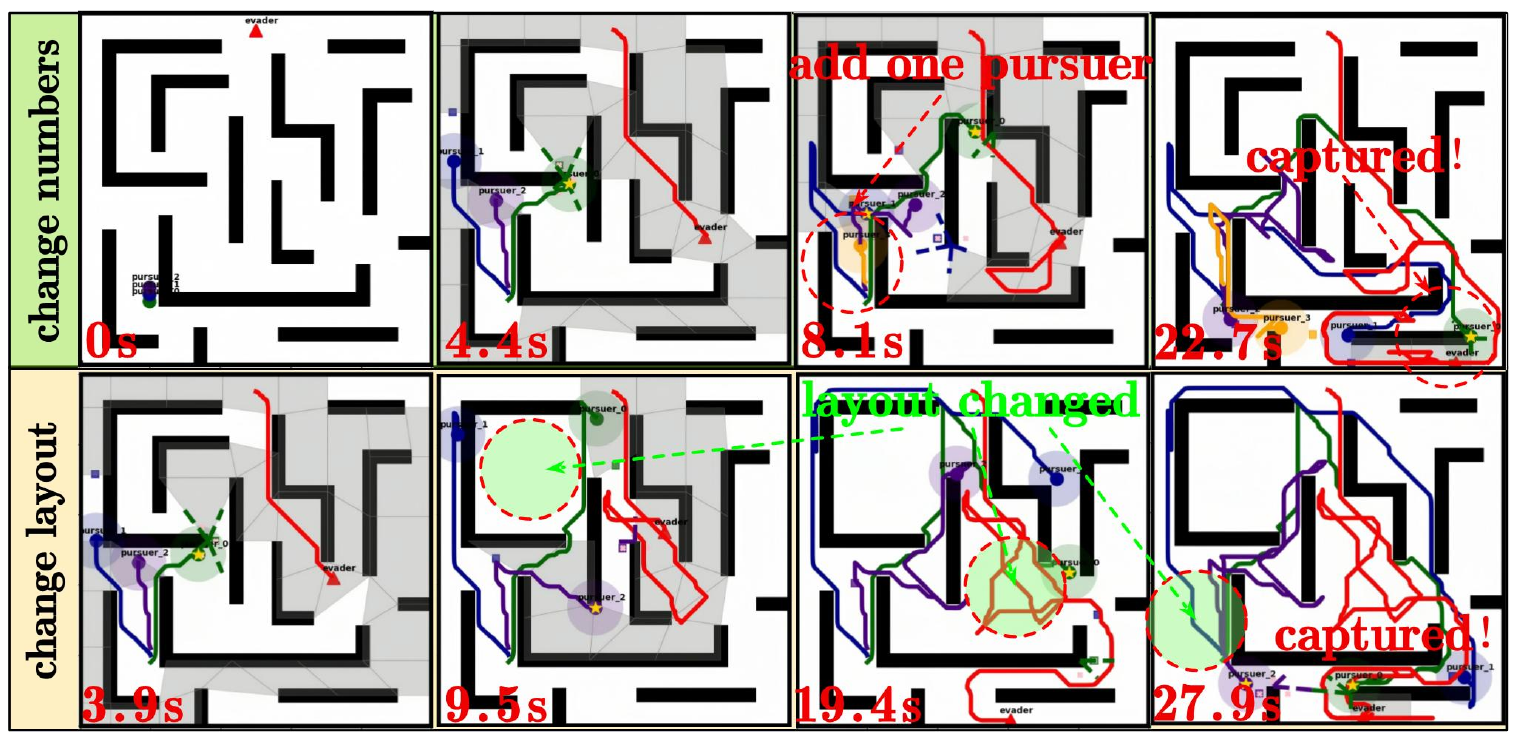}
  \caption{
Evolution of the capture process under online changes. 
\textbf{Top}: the number of pursuers increases from 3 to 4 during execution. 
\textbf{Bottom}: the environment undergoes three layout changes during online execution.
  } 
  \label{fig:change}
  \vspace{-7mm}
\end{figure}

\subsubsection{Scalability and Adaptability}
\label{subsubsec:scale-adapt}
Fig.~\ref{fig:change} shows the capture evolution when the number of pursuers or the environmental layout changes during online execution. In the top case, the number of pursuers increases from three to four at $t=8.1\,\mathrm{s}$, and the evader is eventually captured. In the bottom case, the environment initially contains a complex obstacle layout at $t=3.9\,\mathrm{s}$. During execution, three obstacles disappear successively at $9.5\,\mathrm{s}$, $19.4\,\mathrm{s}$, and $27.9\,\mathrm{s}$, as marked by the green circles, while the proposed method still completes the capture task.

\subsection{Comparisons against baselines}
\label{subsec:comp}

\begin{figure*}[!t]
    \centering
    \includegraphics[width=0.95\linewidth]{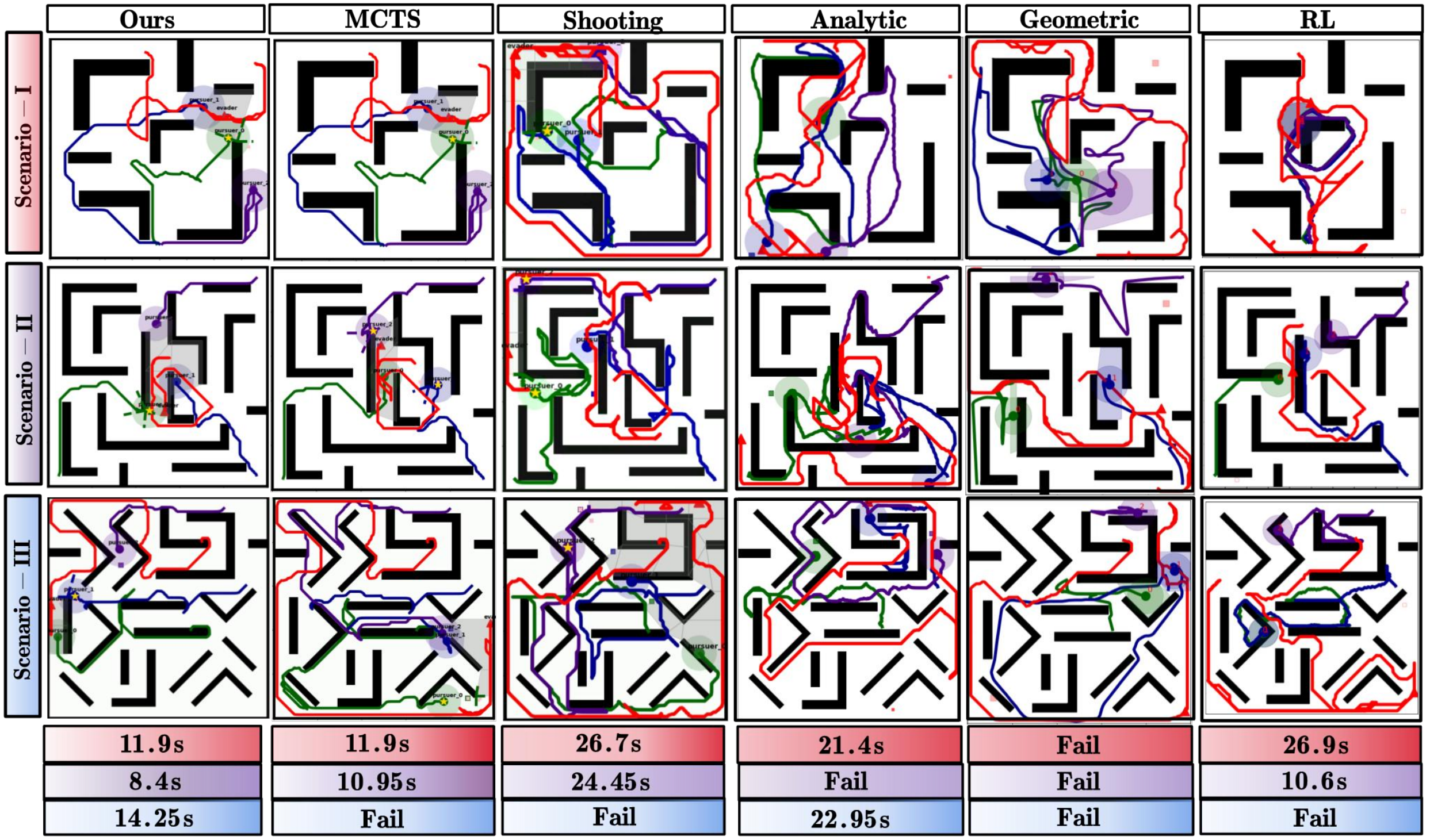}
    \caption{
      Comparison of the proposed method with five baselines across three scenarios
      with varying difficulties,
      where the final capture time is summarized in the bottom.
    }
    \label{fig:all-comp}
    \vspace{-3mm}
  \end{figure*}

\begin{table*}[!t]
  \centering
  \caption{\\\uppercase{Comparison of baselines across three scenarios.}}
  \begin{tabularx}{\textwidth}{c c c *{4}{Y} *{4}{Y} *{4}{Y}}
  \toprule\midrule
    \multirow{2}{*}{\textbf{Scene}} & \multirow{2}{*}{\textbf{Method}} & \multirow{2}{*}{\textbf{Success Rate (\%)}}
                                    & \multicolumn{4}{c}{\textbf{Capture Time (s)}}
                                    & \multicolumn{4}{c}{\textbf{Plan Time ($\times 10^{-3}$ s)}}
                                    & \multicolumn{4}{c}{\textbf{NO. of Replans (\#)}} \\
                                    &                                  &
                                    & \textbf{Avg} & \textbf{Std} & \textbf{Max} & \textbf{Min}
                                    & \textbf{Avg} & \textbf{Std} & \textbf{Max} & \textbf{Min}
                                    & \textbf{Avg} & \textbf{Std} & \textbf{Max} & \textbf{Min} \\
    \midrule
    \multirow{6}{*}{\begin{tabular}[c]{@{}c@{}}Scenario\\I\end{tabular}}
      & analytic      & 100.0 &10.3 &2.1 &15.8 &8.0 &6.7 &3.1 &10.8 &3.5 &128.3 &38.5 &160.0 &72.0 \\
      & geometric     & 0.0   &30.0 &0.0 &30.0 &30.0 &12.0 &30.4 &158 &87.8 &600.0 &0.0 &600.0 &600.0 \\
      & RL            & 100  &8.6 &7.2 &19.9 &1.6 &2.2 &1.3 &4.0 &1.0 &171.6 &143.9 &398.0 &32.0 \\
      & shooting      & 40.0 &8.2 &4.2 &13.4 &5.1 &60.02 &67.6 &181 &29.5 &3 &1.4 &5 &2 \\
      & MCTS          & 100.0&7.5 &5.8 &15.9 &3.4 &55.24 &59.7 &162 &28.3 &2.8 &2.7 &7 &1 \\
      & \textbf{Ours} & \textbf{100.0} &6.7 &4.5 &11.8 &3.4 &28.6 &0.4 &29.3 &28.4 &2.4 &1.9 &5 &1 \\
    \midrule
    \multirow{6}{*}{\begin{tabular}[c]{@{}c@{}}Scenario\\II\end{tabular}}
      & analytic      & 40.0 & 25.9 & 6.0 & 30.0 & 17.0 &20.1 &9.6 &34.0 &12.0 &516.8 &107.3 &600.0 &339.0 \\
      & geometric     & 0.0    & 30.0 & 0.0 & 30.0 & 30.0 &16.2 &6.4 &169.0 &152.0 &600.0 &0.0 &600.0 &600.0 \\
      & RL            & 100  &12.8  &2.7 &14.9 &8.1 &2.2 &1.3 &4 &1 &255.2 &53.9 &298.0 &162.0 \\
      & shooting      & 60.0 & 22.1 & 10.8 & 30 & 8.8 &31.2 &0.4 &32.0 &31.0 &10.6 &1.9 &13 &8 \\
      & MCTS          & 100.0& 11.9 & 4.6 & 18.3 & 6.6 &32.8 &0.8 &34.0 &32.0 &9 &1.6 &11 &7\\
      & \textbf{Ours} & \textbf{100.0} & 9.7 & 1.8 & 10.9 & 8.4 &21.2 &1.1 &23.0 &20.0 &8.8 &1.9 &12 &7 \\
    \midrule
    \multirow{6}{*}{\begin{tabular}[c]{@{}c@{}}Scenario\\III\end{tabular}}
      & analytic      & 20.0 &25.8 &6.0 &30 &16.9 &14.8 &8.0 &27.3 &7.3 &420.4 &168.6 &600.0 &189.0 \\
      & geometric     & 0.0    &30.0 &0.0 &30.0&30.0 &14.9 &9.6 &157 &131 &600.0 &0.0 &600.0 &600.0 \\
      & RL            & 20.0   &23.5 &5.7 &30.0 &15.1 &2.4 &0.5 &3 &2 &418.8 &114.0 &600.0 &302.0 \\
      & shooting      & 60.0 & 25.8 &7.0 &30.0 &14.0 &31.0 &1.8 &32.9 &29.0 &6.8 &3.1 &10.0 &3.0 \\
      & MCTS          & 80   &21.8 &11.9 &30 &3.9 &30.2 &1.6 &31.7 &28.5 &7 &3.9 &10.0 &1.0 \\
      & \textbf{Ours} & \textbf{100.0} &17.0 &8.7 &27.9 &3.9 &18.7 &0.2 &19.0 &18.5 &5 &2.5 &8.0 &1.0 \\
    \midrule\toprule
  \end{tabularx}
  \label{tab:result}
  \vspace{-4mm}
\end{table*}

\begin{figure}[!t]
  \centering
  \includegraphics[width=1.0\linewidth]{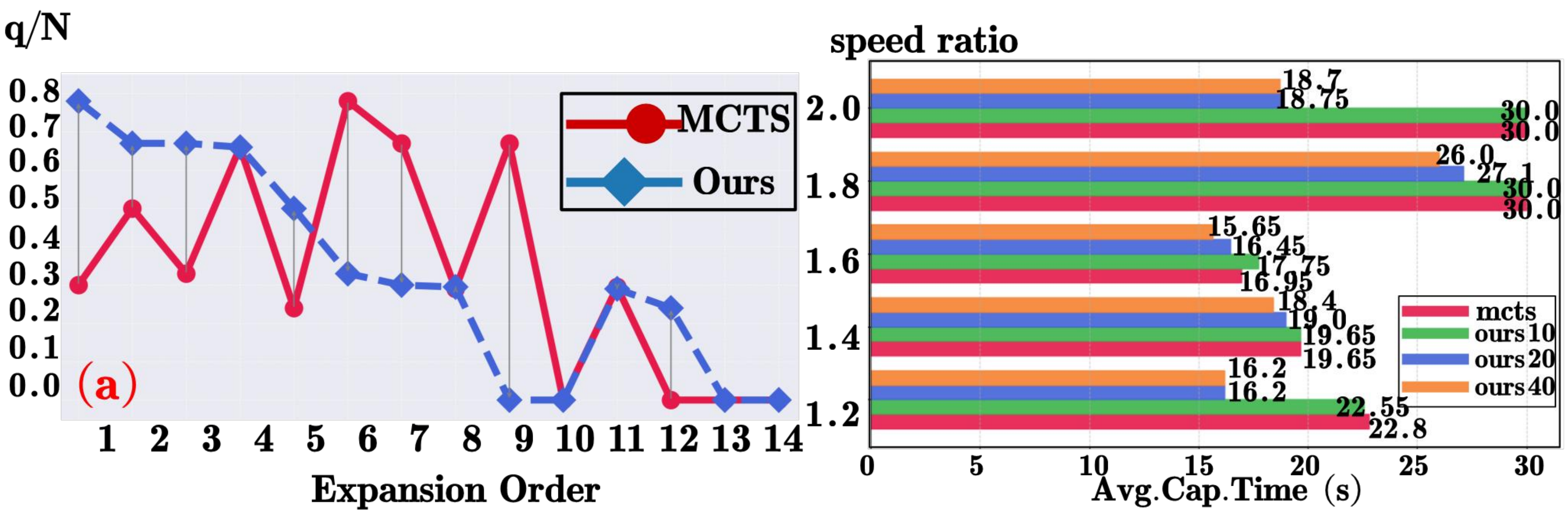}
  \caption{
    \textbf{(a)} The average node quality along with the expansion order,
    by the proposed method is much higher than the vanilla MCTS without acceleration;
    \textbf{(b)} The average capture time by comparing the proposed method exploring different
    number of nodes with vanilla MCTS at different speed ratios.
  }
  \label{fig:learn}
  \vspace{-4mm}
\end{figure}

To validate the proposed method, 
comparative experiments against five baselines are conducted across three scenarios,
each executing $5$ randomized trials with a $30\text{s}$ time limit.
Performance is evaluated using the success rate, average capture time, 
planning time, and number of replans.
Our method achieves minimal capture times ($11.9\text{s}$, $8.4\text{s}$, $14.25\text{s}$) across all scenarios,
with particularly significant advantages in the complex Scenario III
($14.25\text{s}$ vs $22.95\text{s}$ for Analytic and $30.0\text{s}$ for RL).
As evidenced in Table.~\ref{tab:result} and Fig.\ref{fig:all-comp}, 
it maintains $100\%$ success rate across all trials and initial positions,
while baselines exhibit substantial degradation, i.e.,
Analytic approach declines from $100\%$ to $20\%$ success,
RL baseline drops to $20\%$ in Scenario III, Geometric method completely fails in cluttered environments,
and Shooting approach shows the unstable $40-60\%$ performance.
Although MCTS achieves $100\%$ success in simpler scenarios,
it declines to $80\%$ in Scenario III with longer capture times ($21.8\text{s}$ vs our $14.25\text{s}$).
It should be noted that Scenario III is an unseen environment for both our method and the RL baseline. 
This explains the notable performance degradation of 
the RL baseline due to the poor generality. 
In contrast, NARE in our framework is only used to guide and
accelerate node evaluation during H-MCTS,
rather than directly determining the final action. 
Therefore, even in unseen and cluttered environments,
the proposed method can still preserve the robustness of 
search-based planning.
From a computational perspective,
our method maintains the real-time feasibility of~$18.7-28.6\text{ms}$
despite the overhead from constructions of the topological map and goal generation,
comparable to Analytic, geometric and RL methods.
Most significantly, it demonstrates the optimal replanning with only $2.4-8.8$ replans in each trial,
i.e., a $35\times$ reduction versus baselines ($128-600$ replans).
It is worth mentioning that the non-linear relationship between replanning frequency and capture effectiveness
underscores the importance of timely decision making in highly dynamic pursuit scenarios.

\begin{figure}[!t]
    \centering
    \includegraphics[width=1.0\linewidth]{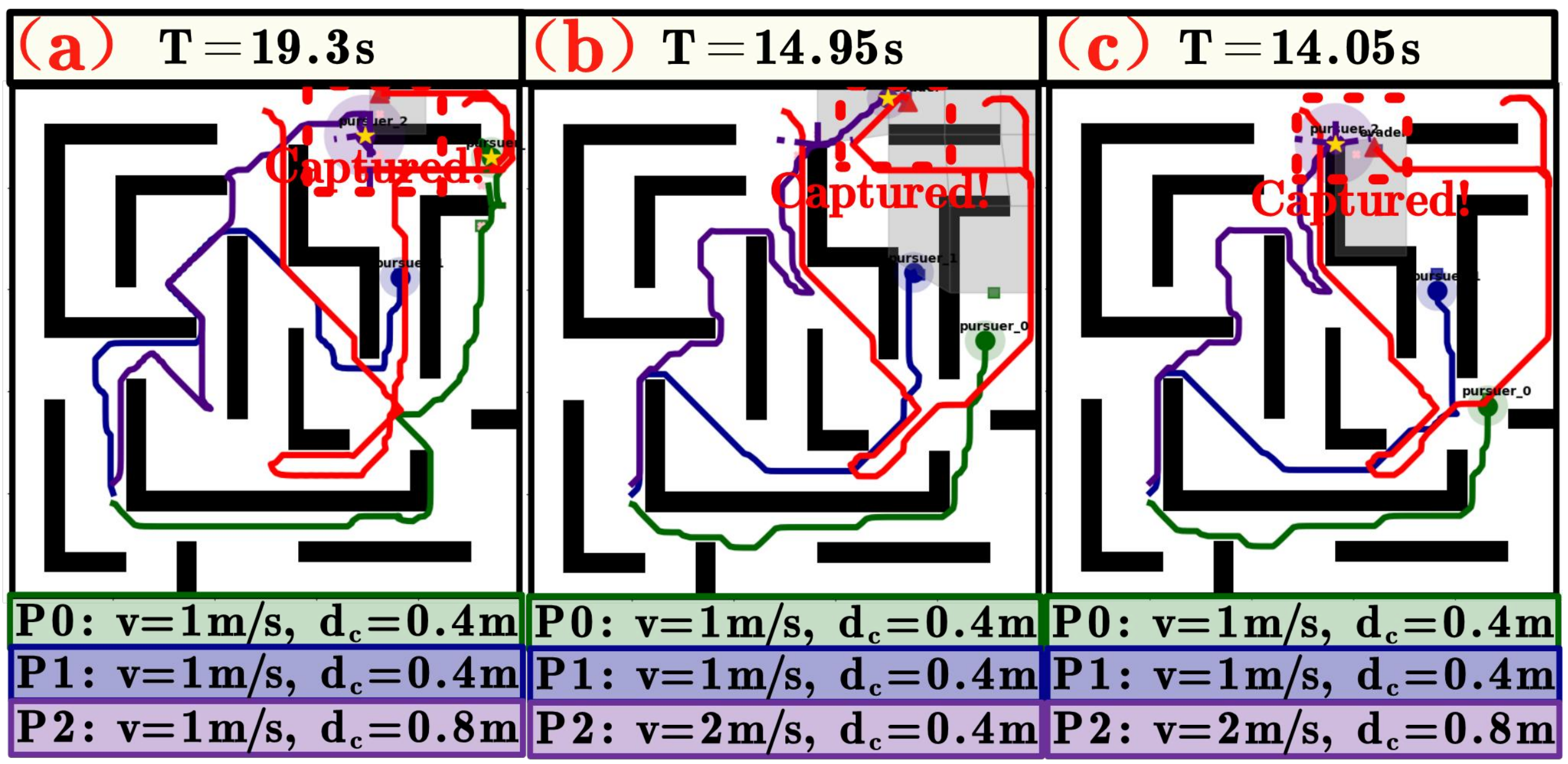}
    \caption{
    Heterogeneous team of pursuers with different capabilities, including
    \textbf{(a)} different capture ranges,
    \textbf{(b)} velocities, and \textbf{(c)} both.
    }
    \label{fig:gen-hetero}
    \vspace{-5mm}
  \end{figure}
\subsection{Results of Neural Acceleration}
\label{subsec:learn-result}
We evaluate the NARE through two complementary perspectives.
First, we examine NARE's guidance mechanism for expansion order.
As shown in Fig.~\ref{fig:learn}(a),
vanilla MCTS without NARE frequently prioritizes low-value nodes early,
delaying discovery of high-scoring nodes (blue symbols).
In contrast, NARE-guided MCTS achieves the strategic node ordering,
enabling early identification of high-potential nodes and superior solution quality under equivalent time budgets
by setting different node counts at various speed ratios in Fig.~\ref{fig:learn}(b).
It can be seen by comparing with vanilla MCTS that
(I) at the ratio of~$1.2$, $10$-node NARE matches the $20$-node MCTS performance;
(II) between the ratios of $1.4$-$1.6$, $20$-node NARE matches MCTS effectiveness
while $40$-node achieves~$12.4\%$ improvement;
(III) at higher ratios of $1.8$-$2.0$, $10$-node NARE matches $20$-node MCTS results.
These results validate that the proposed NARE can achieve similar performance with much
fewer nodes explored during expansion,
via prioritizing nodes with higher predicted quality.

We further evaluate the robustness of NARE beyond the unseen parameter settings analyzed in Section~\ref{subsubsec:ana-para},
such as five pursuers, an evader speed of~$2$ m/s, and a capture range of~$2$ m.
Specifically, we construct $50$ maps with different topologies,
which differ in obstacle shapes, obstacle numbers, spatial layouts, and the induced free-space connectivity.
Among them, $45$ maps are used for training, while the remaining $5$ more complex maps are reserved for testing.
Each testing case is evaluated under five different initial configurations,
and NARE achieves a $90\%$ capture success rate.
These results show that NARE has a good generality in unseen environments.

\subsection{Generalization}
\label{subsec:general-exp}

\subsubsection{Heterogeneous Pursuer}
\label{subsubsec:heterogeneous}

To validate our method's compatibility with heterogeneous configurations,
i.e., pursuers with different capture ranges and velocities,
comparative experiments are conducted in Scenario-II wth~$3$ pursuers
employing complex strategies at~$2.0\text{m/s}$.
Three configurations are tested versus the homogeneous baseline
with uniformly~$1.0\text{m/s}$ speed and~$0.4\text{m}$ capture range for all pursuers.
The capture range of one pursuer is increased to~$0.8\text{m}$ in Fig.~\ref{fig:gen-hetero}(a);
The maximum velocity of one pursuer is increased to~$2.0\text{m/s}$ in Fig.~\ref{fig:gen-hetero}(b);
and the combination of both in Fig.~\ref{fig:gen-hetero}(c).
Notably, the team performance is improved along with the increased capability of individual robots, i.e.,
any single enhancement improves the effectiveness, with the strongest robot consistently executing captures.
This confirms the compatibility of proposed method with heterogeneous robotic fleets.

\subsubsection{Limited View for Pursuers}
\label{subsubsec:gen-limited-view}

\begin{figure}[!t]
    \centering
    \includegraphics[width=1.0\linewidth]{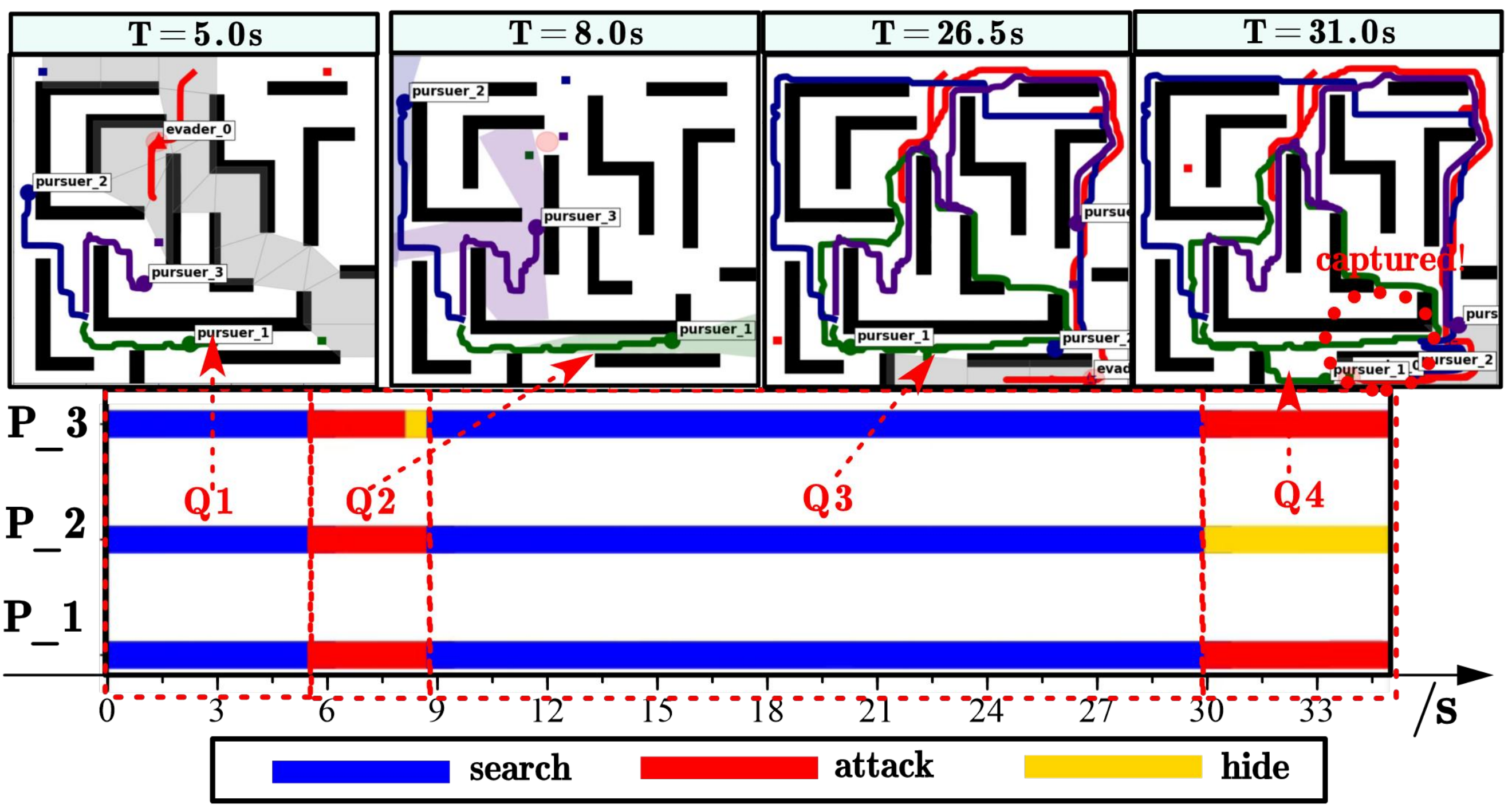}
    \caption{
    Snapshots of a successful capture at~$31\text{s}$,
    where all pursuers have a restricted view of the environment (\textbf{top}).
    Note that an additional ``search'' mode is introduced when the
    evader is not visible, along with the ``ambush'' strategy (\textbf{bottom}).
    }
    \label{fig:gen-fov}
    \vspace{-5mm}
  \end{figure}

Our method demonstrates the robustness under the constrained visibility,
e.g., $5\text{m}$ LOS range in Scenario-II with $3$ pursuers
with the speed of~$1.0 \text{m/s}$ vs the evader with~$1.2 \text{m/s}$.
As Fig.~\ref{fig:gen-fov} shows,
initial search mode in Phase~Q1 transitions to attack when Pursuer~$3$ detects the evader at~$t=5.0\text{s}$ in Phase~Q2.
After the evader disappears in Phase~Q3, all robots revert to search until Pursuer~$2$ reacquires the evader at~$26.5\text{s}$ in Phase~Q4,
enabling coordinated attacks that achieve the successful capture at $31.0\text{s}$.
These results demonstrate our ambush strategy's robustness when integrated with search algorithms
and dynamic mode-switching mechanisms under severely restricted visibility conditions.

\subsubsection{Multiple Evaders}
\label{subsubsec:gen-multi-evader}

\begin{figure}[!t]
  \centering
  \includegraphics[width=0.95\linewidth]{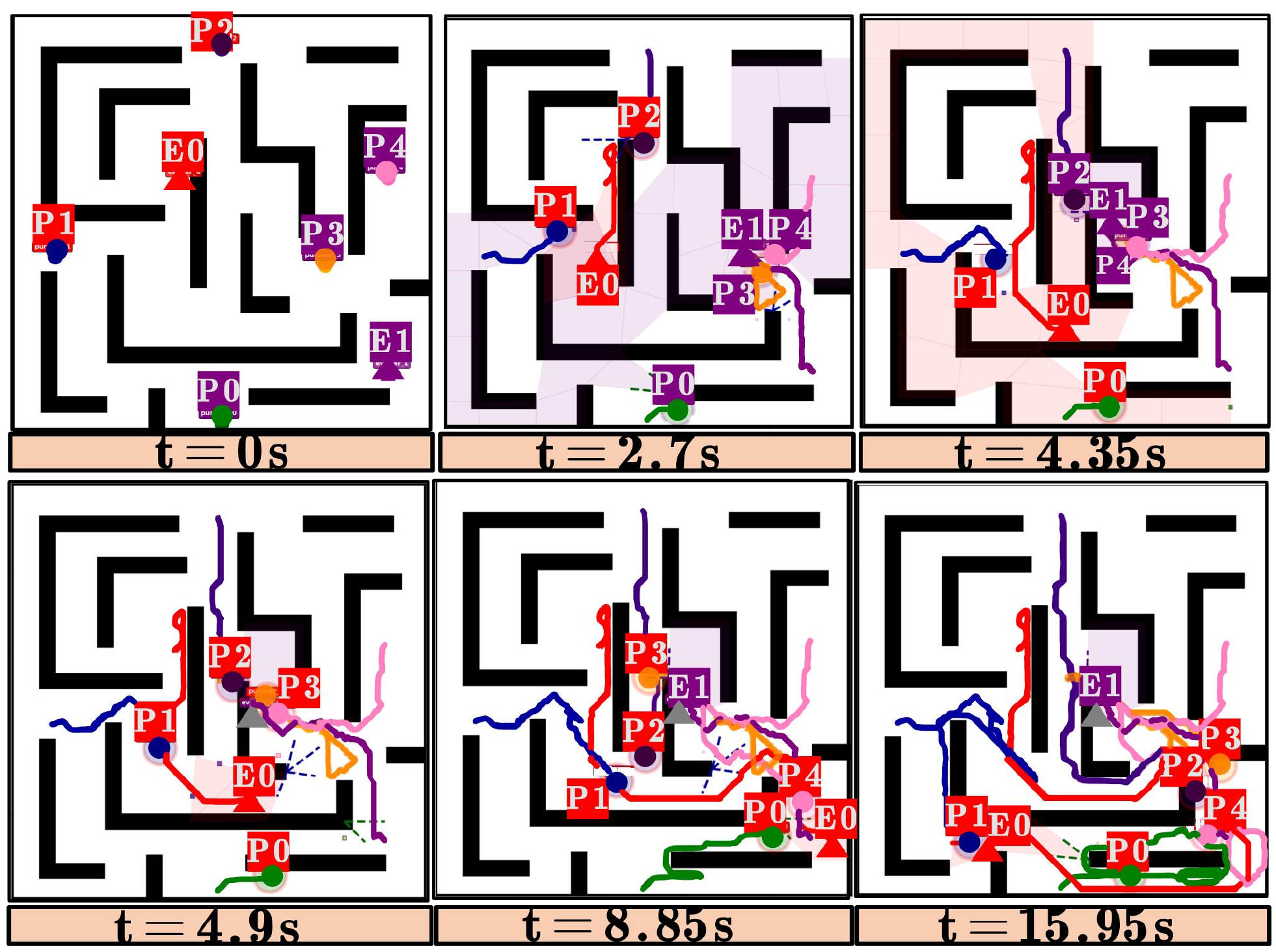}
  \caption{
  Illustration of a multi-evaders extension involving~$5$ pursuers and~$2$ evaders in Scenario-II.
  }
  \label{fig:multi-evaders}
  \vspace{-6mm}
\end{figure}

To validate the efficacy of our ambush strategy in multi-evaders scenarios,
we conduct the experiment in Scenario-II involving~$5$ pursuers and~$2$ evaders.
The system parameters are configured with a speed ratio of~$1.6$ and a uniform capture range of~$0.4\text{m}$.
As Fig.~\ref{fig:multi-evaders} shows,
initial geometric allocation partitions pursuers between evaders at $t=0\text{s}$;
Dynamic coalition reformations respond to the escapes,
i.e., Pursuer~$2$ reassists the capture of Evader~$1$ at~$t=4.9\text{s}$ after the escape at $t=2.7$s;
Focused containment captures Evader~$0$ at~$t=15.95\text{s}$ following the escape at~$t=8.85\text{s}$.
The entire mission requires only~$3$ coalition formations ($0.2\text{s}$/reallocation) and $6$ strategy adjustments,
confirming the real-time operation.
All captures result from coordinated ambush strategy within geometrically bounded groups,
validating the performance scalability without degradation from single-evader cases.

\begin{figure}[!t]
    \centering
    \includegraphics[width=1.0\linewidth]{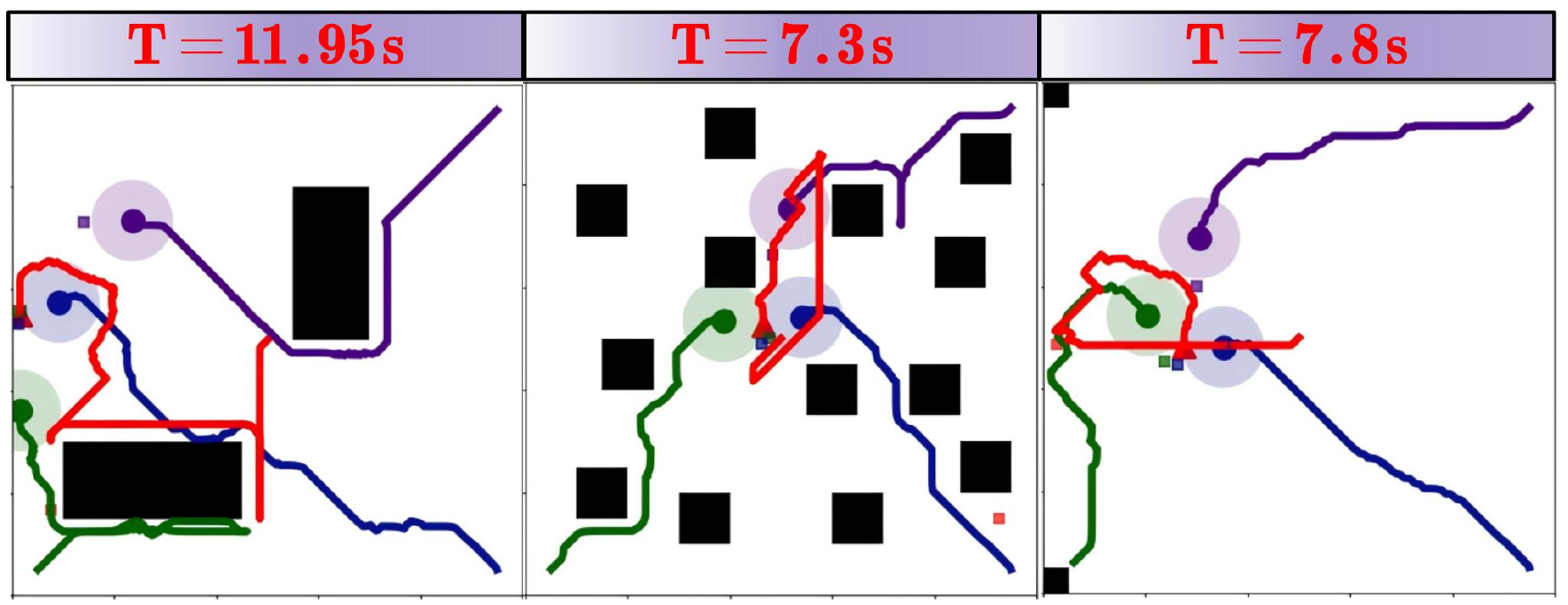}
    \caption{
  Snapshots of different sparse environments. 
  \textbf{(left)} Sparse environment with large obstacles; 
  \textbf{(middle)} Sparse environment with small scattered obstacles; 
  \textbf{(right)} Largely free space.
  }
    \label{fig:sparse}
    \vspace{-6mm}
  \end{figure}

\subsubsection{Sparse Environments}
\label{subsubsec:result-sparse}
To validate the adaptability of our proposed method to sparse environments, as introduced in Section~\ref{subsubsec:sparse_envs}, we have conducted experiments in three representative sparse scenarios, illustrated in Fig.~\ref{fig:sparse}. In the left scenario, two large obstacles were placed on a map of identical size to previous experiments, representing a sparse large-obstacle environment. The middle scenario featured multiple small obstacles to emulate a sparse small-obstacle environment. 
Finally, the right scenario corresponds to a largely free space. Experimental results indicate that the proposed approach achieves effective captures across all three sparse environment types. Notably, as obstacle density decreases, capture efficiency improves, primarily because pursuers incur less time detouring around obstacles.

\begin{figure}[!t]
    \centering
    \includegraphics[width=0.8\linewidth]{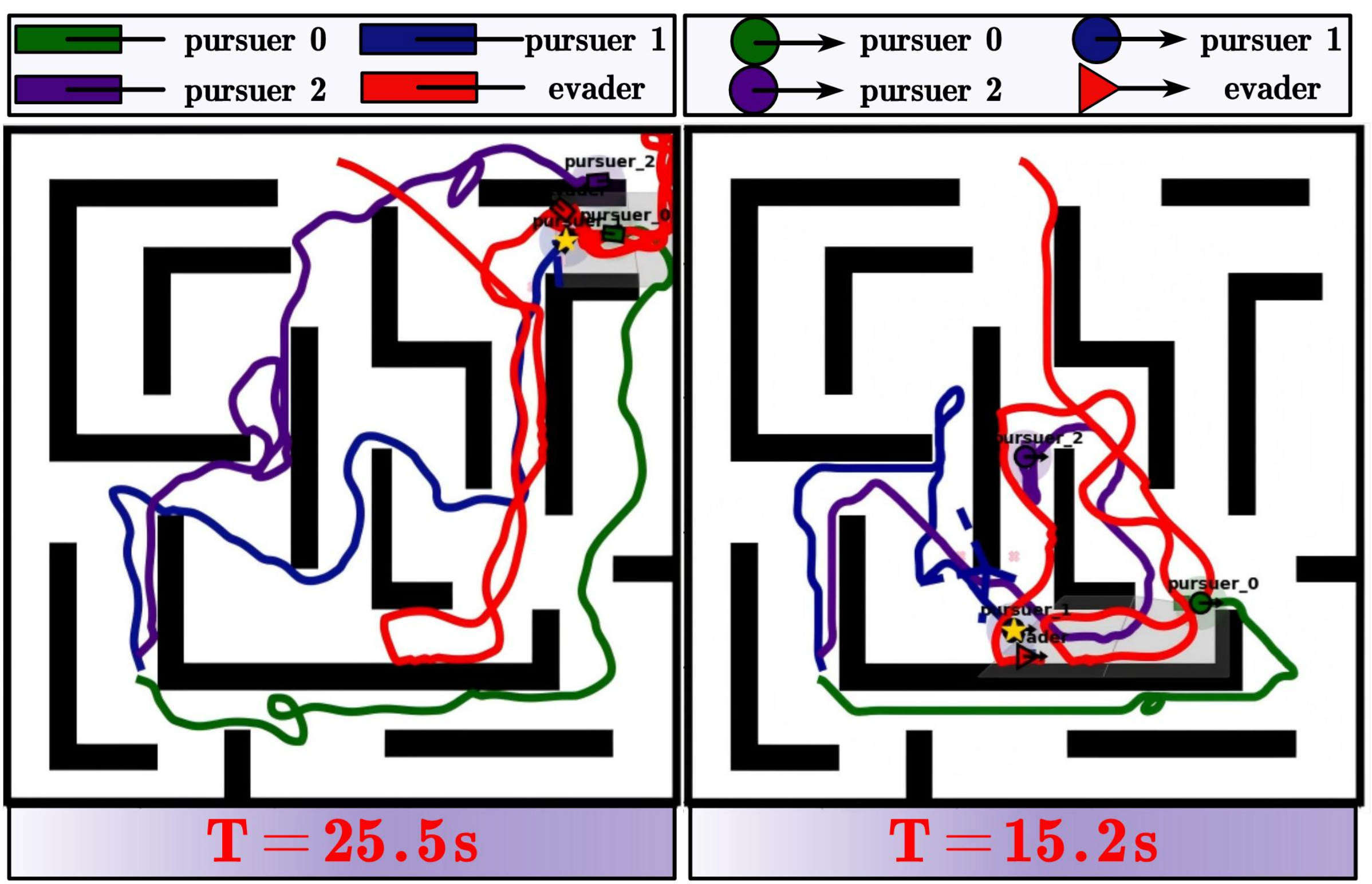}
    \caption{
Capture results under different robot dynamic models. 
The \textbf{left} subfigure shows the capture process and trajectories under the unicycle model, 
while the \textbf{right} panel illustrates the results under the double-integrator model.
}
    \label{fig:model}
    \vspace{-5mm}
  \end{figure}

\begin{figure}[!t]
    \centering
    \includegraphics[width=0.9\linewidth]{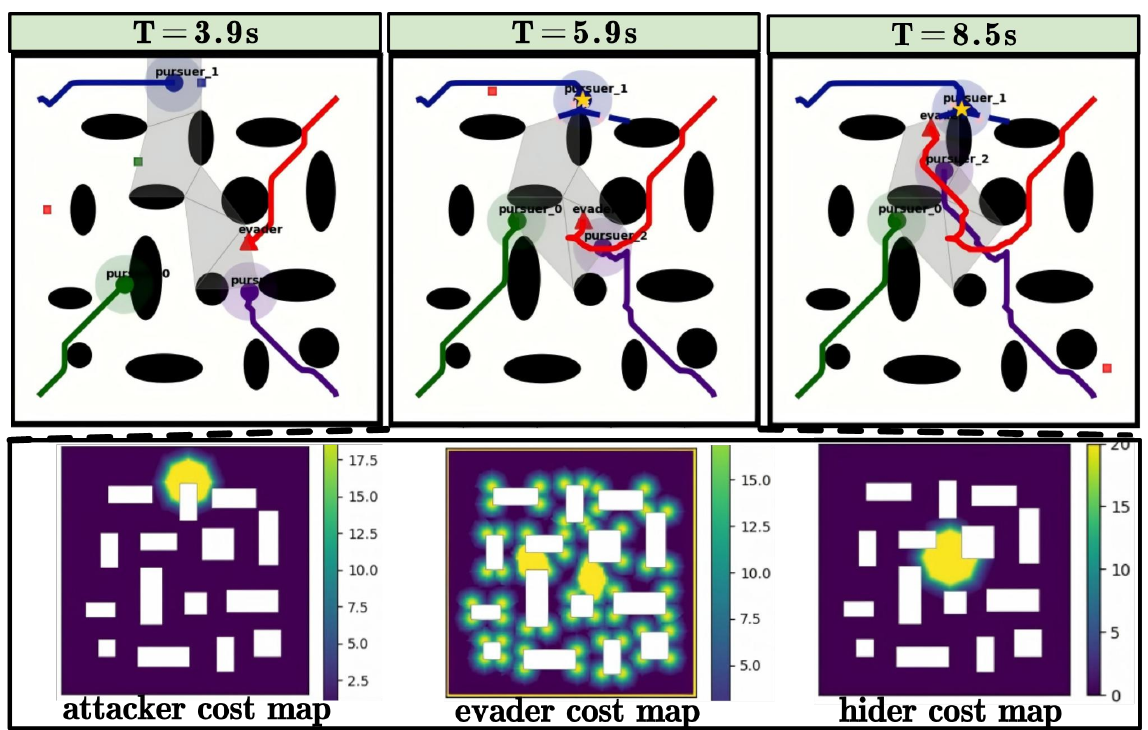}
    \caption{
Evolution of the capture process in a non-polygonal obstacle environment.
}
    \label{fig:non-poly}
    \vspace{-6mm}
  \end{figure}

\subsubsection{Different Robot Model}
\label{subsubsec:diff-model}
To evaluate the performance of the proposed method under different robot dynamics,
the same high-level pursuit strategy is used in both cases, while only the low-level robot dynamics are changed.
For the unicycle model, the angular velocity limits are set to $\pi\,\mathrm{rad/s}$ for the pursuer and $2\pi\,\mathrm{rad/s}$ for the evader, yielding a comparable minimum turning radius of approximately $0.32\,\mathrm{m}$. The heading control gain is set to $k_\omega=3.0$. For the double-integrator model, the proportional gain is set to $k_p=10.0$, and the maximum acceleration is limited to $5.0\,\mathrm{m/s^2}$. 
As shown in Fig.~\ref{fig:model}, successful capture can be achieved under both dynamics models. This result indicates that the effectiveness of the proposed method does not rely on a particular low-level robot model. Instead, the capture behavior is mainly determined by the high-level pursuit strategy. Moreover, compared with the point-mass model, whose trajectories tend to exhibit abrupt zig-zag motions, the trajectories generated by the unicycle and double-integrator models are more dynamically constrained and physically realistic. This difference is caused by the velocity, turning-rate, and acceleration limits imposed by the corresponding robot dynamics.

\subsubsection{Non-polygonal Obstacle Environments}
\label{subsubsec:exp-diff-model}
To validate the improved method in non-polygonal obstacle environments, we construct complex scenarios containing circular and curved obstacles. Fig.~\ref{fig:non-poly} (top) shows the evolution of the capture process. The proposed method can still build an effective capture graph and guide the pursuers to complete the task. At $t=5.9\,\mathrm{s}$, the hider moves around the curved obstacle boundary and reaches a favorable hiding position. At $t=8.5\,\mathrm{s}$, the evader is successfully captured by the hider.
The cost maps of the three roles at $t=5.9\,\mathrm{s}$ are also shown in Fig.~\ref{fig:non-poly} (bottom). The results indicate that the bounding-rectangle approximation is used to construct the cost maps for non-polygonal obstacles, which is consistent with the extension described in Section~\ref{subsubsec:non_poly}.

\subsection{Hardware Experiment}\label{sec:hardware}


\subsubsection{Experimental Setup}
\label{subsec:exp-setup}

\begin{figure*}[!t]
  \centering
  \includegraphics[width=0.99\linewidth]{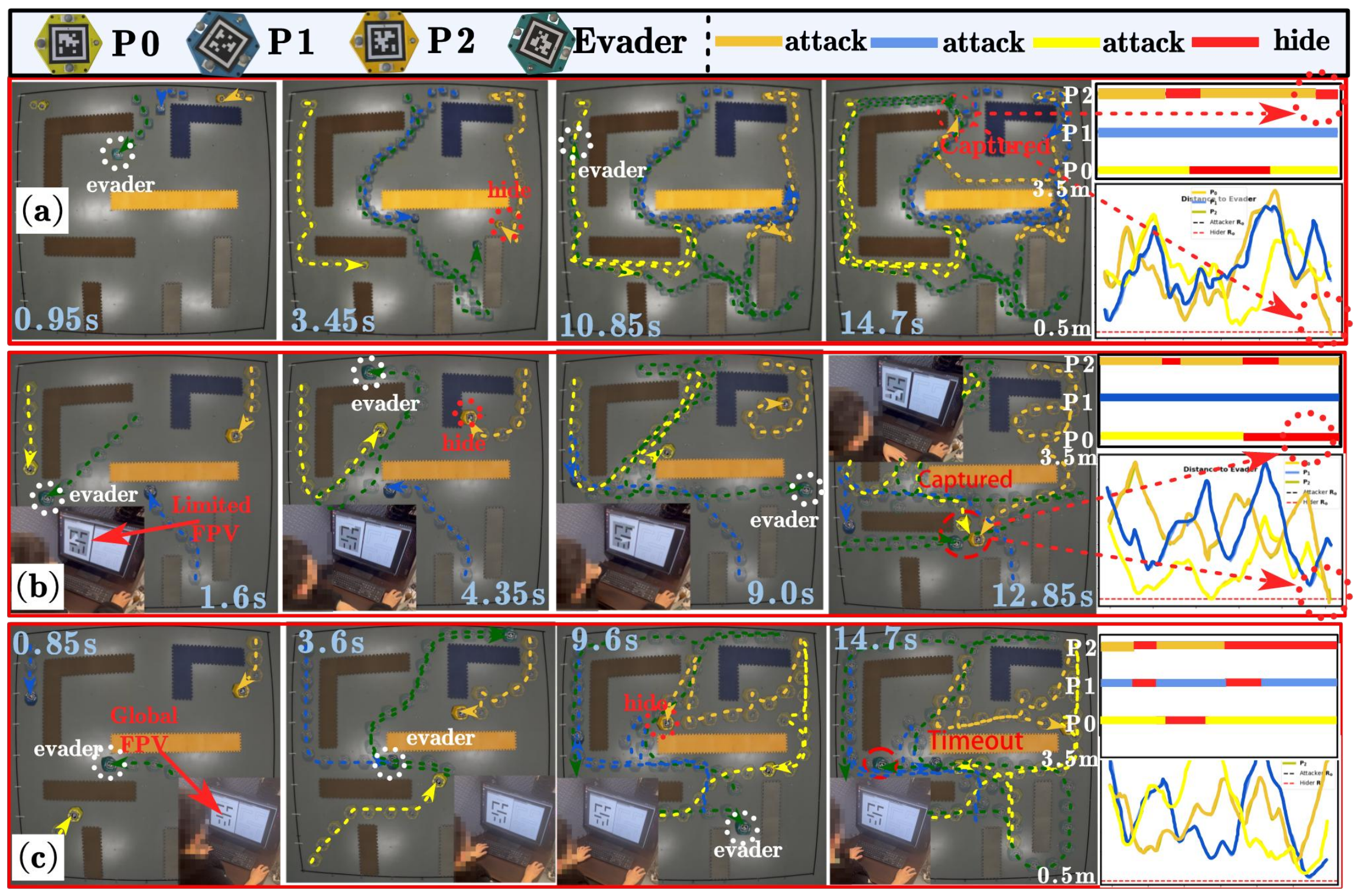}
  \vspace{-2mm}
  \caption{
  Hardware validation of the proposed ambush strategy in a complex environment with~$3$ pursuers and~$1$ evader.
  \textbf{(a)} The experimental setup features a 5m $\times$ 5m office-like layout and an autonomous evader;
  \textbf{(b)} The evader is tele-operated by an operator with limited first
  person view, where the capture takes places at $t=12.85$s;
  \textbf{(c)} The tele-operator is given a global view, where the capture failed within $60$s.
  }
  \label{fig:exp}
  \vspace{-5mm}
\end{figure*}

As illustrated in Fig.~\ref{fig:exp},
our experimental validation employs a $5\text{m} \times 5\text{m}$ arena featuring strategically arranged rectangular
and L-shaped obstacles to create the complex obstacle layout.
The testbed incorporates four differential-driven wheeled UGVs tracked by an OptiTrack motion capture system with 20 infrared cameras,
achieving the millimeter-level positioning accuracy at~$50\text{Hz}$ update rate.
Each robot utilizes a hierarchical planning architecture:
an~$\texttt{A}^\star$ global planner for the obstacle-free path generation and a LOS local planner for real-time collision avoidance.
The pursuit-evasion dynamics are configured with~$3$ pursuers operating at~$0.1\text{m/s}$ velocity and~$0.4\text{m}$ capture range,
while~$1$ evader moves at~$0.16\text{m/s}$ with~$2.5\text{m}$ perception range.
All robotic systems communicate through a centralized ROS architecture,
where a dedicated central node enforces capture constraints including relative distance thresholds and obstacle occlusion conditions.
The planning algorithms execute on a central computer station with Intel i9-13900K
that communicates with robots via a wireless local network exhibiting~$0.05\text{ms}$ latency.

\subsubsection{Results}
\label{subsec:autonomous-evader}
Hardware validation with $3$ pursuers vs. the autonomous evader from Sec.~\ref{subsec:autonomous-evader} 
confirms the real-world effectiveness. 
Despite trajectory deviations from collision avoidance and tracking errors, 
the robust capture is achieved through the continuous replanning. 
The experimental timeline reveals key strategic phases, i.e.,
the evader escapes the initial formation by $t=0.95\text{s}$; 
pursuers establish the ambush region with Pursuer~$0$ hiding at $t=3.45\text{s}$; 
the evader breaches perimeter at $t=10.85\text{s}$ by exploiting the speed advantage; 
the successful capture occurs at $t=14.7\text{s}$ when Pursuer~$2$ optimizes hidden position after~$7$ strategic adaptations. 
The complete mission duration of~$14.7\text{s}$ demonstrates our method's practical applicability in physical environments,
where the hierarchical architecture ensures effective operation despite low-level control imperfections.
Additional experiment videos are available in the supplementary material.

\subsubsection{Human-controlled Evader}
\label{subsec:human-evader}

To thoroughly evaluate our ambush strategy's effectiveness against more intelligent evaders,
we design a human-in-the-loop experiment where human operators control the evader
while pursuers execute our autonomous ambush strategy.
This human-robot competition framework tests our method against advanced escape strategies under two visual conditions:
(I) limited first-of view (FoV) simulating the realistic evader perception,
and (II) privileged global view representing the ideal situational awareness.
As shown in Fig.~\ref{fig:exp}(b)-(c),
the results demonstrate significant performance differences.
Under FoV conditions, our strategy successfully capture human-controlled evaders
with average capture time of~$12.85\text{s}$,
which is even shorter than the autonomous evader capture time of~$14.7\text{s}$.
However, with the global view access, the capture fails despite continuous strategy adaptation,
revealing fundamental limitations against perfect information adversaries.
This confirms that our ambush strategy becomes less effective against more intelligent evaders,
yet its successful captures against high-skill human operators under local view conditions
demonstrate the fundamental effectiveness of the ambush strategy.

\begin{table}[!t]
\centering
\caption{Average computation time (s) of Alg.~1 under representative 
numbers of pursuers and obstacles.}
\label{tab:alg1_runtime}
\vspace{-1mm}
\normalsize
\setlength{\tabcolsep}{3.5pt}
\renewcommand{\arraystretch}{0.92}
\begin{tabular}{c|cccccc}
\hline
\multirow{2}{*}{$N_o$}
& \multicolumn{6}{c}{Number of pursuers $N_p$} \\
\cline{2-7}
& 2 & 3 & 4 & 5 & 6 & 7 \\
\hline
2  & 1.63  & 2.83  & 5.65   & 12.64  & 30.97   & 81.94 \\
6  & 3.31  & 5.73  & 11.46  & 25.62  & 62.76   & 166.06 \\
10 & 9.20  & 15.94 & 31.87  & 71.27  & 174.58  & 461.90 \\
12 & 16.53 & 28.63 & 57.27  & 128.05 & 313.67  & 829.89 \\
14 & 30.87 & 53.47 & 106.93 & 239.11 & 585.71  & 1549.63 \\
17 & 83.72 & 145.00 & 290.00 & 648.46 & 1588.40 & 4202.50 \\
\hline
\end{tabular}
\vspace{-4mm}
\end{table}


\section{Conclusion and Future Work} \label{sec:conclusion}

This work addresses the fundamental problem of capturing faster, intelligent evaders with multiple slower pursuers in obstacle-dense environments. We propose a parameterized ambush framework that incorporates topological analysis, visibility constraints, and continuous motion dynamics, along with a hybrid MCTS planner for long-horizon strategy optimization. By training offline heuristics to guide search order and replace rollout estimates, the approach significantly accelerates online planning while guaranteeing capture. Extensive simulations and hardware experiments confirm its effectiveness against evaders of varying sophistication, including human operators.
While the method demonstrates strong performance, it is subject to certain limitations,
such as its assumption of fully known environments and reliance on centralized sensing that point toward future research in partial observability, distributed estimation, and multi-evader pursuit scenarios.

An important direction for future work is distributed deployment of the proposed framework. This may be achieved by allowing each pursuer to maintain a local system belief, generate local assignment proposals using lightweight H-MCTS/NARE inference, and coordinate with neighboring pursuers through limited communication to satisfy capture constraints without relying on a central node.


\appendices

  \section{The Time Complexity of \texttt{Goals\&Assigns}$(\cdot)$}
\label{app:complexity}
The time complexity of Alg.~\ref{alg:goals_assigns} is dominated by
the combinatorial assignment procedure.
Geometric initialization as in Lines~14--21
computes the vertices of skeleton and boundary, gates,
and goals, which has time complexity~$O(|V_{\texttt{o}}|^2)$.
The core complexity stems from $\texttt{CombinatorialAssign}(\cdot)$ in Lines~1--13,
generating all distinct $n = |\mathcal{N}_{\texttt{p}}|$ pursuers to $m = O(|V_{\texttt{o}}|)$ goal assignments.
This requires $O\left(n \cdot \frac{m!}{(m-n)!}\right)$ operations
when $n \leq m$ explores above permutations,
and can be reduced to $O(m \cdot m!)$ when $n \geq m$, i.e., 
factorial in $|V_{\texttt{o}}|$.
In addition, we conducted extensive experiments 
with different combinations of pursuers and obstacles to 
evaluate the practical computational limit of Alg.~\ref{alg:goals_assigns}. 
Table~\ref{tab:alg1_runtime} shows that the proposed implementation can support 
up to 4 pursuers and 12 obstacles 
while still approximately satisfying the real-time requirement with 57s.

\begin{figure}[!t]
  \centering
  \includegraphics[width=0.95\linewidth]{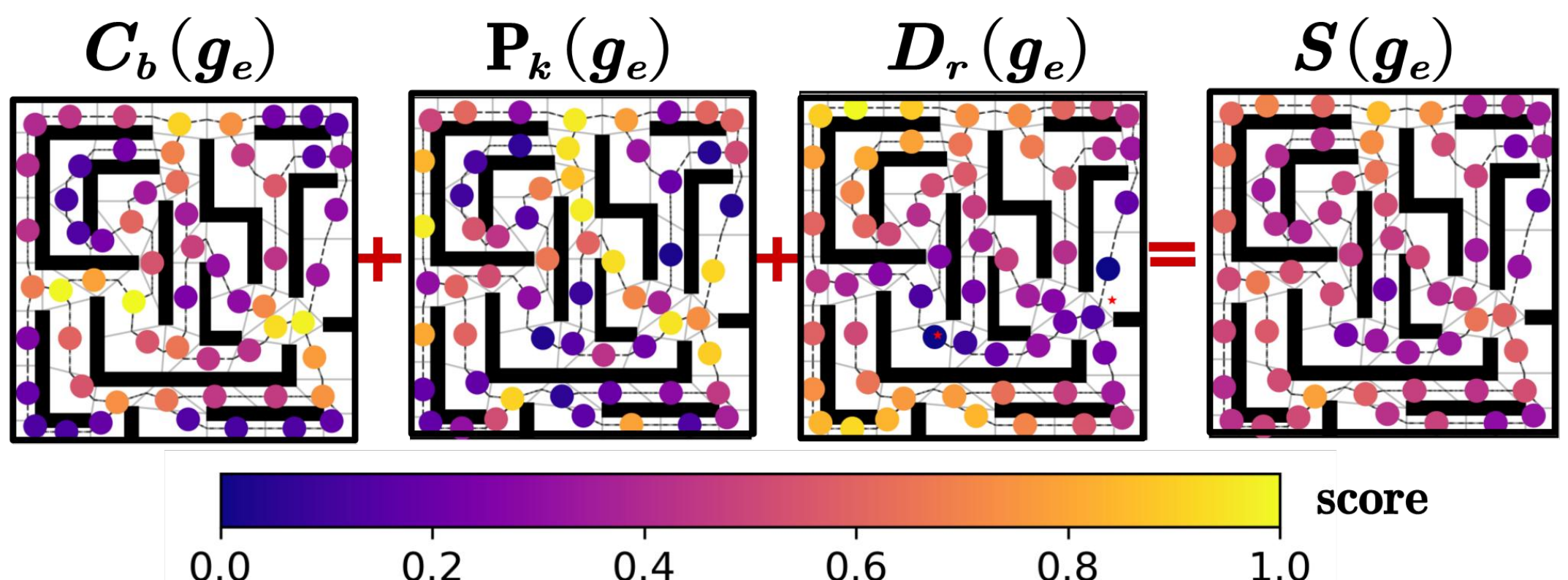}
  \caption{
    Illustration of various aspects considered
    to select temporary goals for the evader,
    including betweenness centrality (\textbf{left}),
    the stochastic perturbation (\textbf{middle}),
    and the risk-aware distance (\textbf{right}).
  }
  \label{fig:evader-goals}
  \vspace{-4mm}
\end{figure}

\vspace{-7mm}

\section{Evader Strategy}
\label{subsec:evader}
Since the evader's strategy $\pi_{\texttt{e}}$ is unknown, two strategies are designed to validate the ambush strategy,
namely, a simple strategy $\pi_{\texttt{e}}^{\texttt{l}}$ reacting only to memorized pursuers, and a complex strategy $\pi_{\texttt{e}}^{\texttt{h}}$ accounting for the pursuers' predicted high-probability future regions.

\textbf{Simple Strategy} $\pi_{\texttt{e}}^{\texttt{l}}$:
The evader reacts solely to immediate threats from pursuers within its memory
which stores the last known positions of pursuers within its line-of-sight perception range~$d_{\texttt{o}}$,
until the pursuers are re-detected and re-updated in the memory,
denoted by $\mathcal{M}_{\texttt{e}}$.
Specifically, inspired by potential field methods~\cite{fang2020cooperative},
the evader's velocity is opposite to the sum of the vectors pointing from the evader to each memorized pursuer,
namely:
$
\mathbf{v}_{\texttt{e}} = -\overline{v}_{\texttt{e}} \cdot \texttt{Norm}\left( \sum_{i \in \mathcal{M}_{\texttt{e}}} \frac{\mathbf{p}_i - \mathbf{p}_{\texttt{e}}}{\|\mathbf{p}_{\texttt{e}} - \mathbf{p}_i\|^2} \right),
$
where $\overline{v}_{\texttt{e}}$ is the evader's maximum speed,
$\texttt{Norm}(\cdot)$ denotes the vector normalization,
$\mathbf{p}_{\texttt{e}}$ and $\mathbf{p}_i$ are positions, respectively.

\textbf{Complex Strategy} $\pi_{\texttt{e}}^{\texttt{h}}$:
The evader selects an optimal goal~$g_{\texttt{e}}^\star$ from candidate goals~$\tilde{g}_{\texttt{e}}$ uniformly sampled in the free workspace $\mathcal{W}$.
As  illustrated in Fig.~\ref{fig:evader-goals},
each candidate goal is evaluated within the memoried pursuers~$\mathcal{M}_{\texttt{e}}$ by a composite score combining:
(I) betweenness centrality $C_b$ to measure connectivity in the visibility graph $\mathcal{G}_{\texttt{e}}$~\cite{d2021visibility},
(II) risk-aware distance $D_r = \textbf{exp}\left(\textbf{min}_{i \in \mathcal{M}_{\texttt{e}}} \| g_{\texttt{e}} - \mathbf{p}_i \|^2 / d_{\texttt{c}}^2 \right)$
to penalize the proximity to memoried pursuers,
and (III) stochastic perturbation $P_k \sim \mathcal{N}(0, \sigma^2)$ for the behavioral diversification.
The optimal goal is selected within~$\tilde{g}_{\texttt{e}}$
by maximizing the weighted sum of~$C_b$, $P_k$, and $D_r$.
For navigation, the evader adopts a cost map
$
\mathcal{M}^{\texttt{e}}(\mathbf{c}_{jk}) = \sum_{i \in \mathcal{M}_{\texttt{e}}} \frac{1}{2\pi d_{\texttt{c}}} \textbf{exp}\left(-\frac{\|\mathbf{c}_{jk} - \mathbf{p}_i\|^2}{2d_{\texttt{c}}}\right) + \mathcal{M}^{\texttt{e}}_{\texttt{g}} + \mathcal{M}^{\texttt{e}}_{\texttt{o}}
$
to combine the pursuer repulsion, goal attraction, and obstacle avoidance.
Optimal paths are computed via the~$\texttt{A}^\star$ search,
and updated at every step.

\vspace{-3mm}

\section{The time complexity of $\texttt{H-MCTS}(\cdot)$}
The computational complexity of~$\texttt{H-MCTS}(\cdot)$ is dominated by the following components,
i.e.,
discretization and downsampling requires $O(K^{\mathcal{N}_{\texttt{p}}})$
operations for the exhaustive generation of motion coefficients with~$K$ being the discretization numbers,
where the evaluation of each candidate computes $\Phi(\boldsymbol{\rho}^k_+)$ via path planning and visibility checks;
Then tree traversal exhibits $O(B H |\widehat{\mathcal{A}}|)$ complexity for selection phases,
with $B$ iterations, $H$ horizon, and $|\widehat{\mathcal{A}}|$ the size of assignment set;
Lastly, evolution of system dynamics incurs $O(B D \mathcal{N}_{\texttt{p}}^3)$ costs
for collision checks during transitions as in Line~\ref{step:state_transition}
and $O(B D H \mathcal{N}_{\texttt{p}}^2)$
for rollout simulations in Line~\ref{step:rollout} with $D$ downsampled candidates.
The proposed downsampling reduces $D$ to $O(1)$,
yielding the final complexity of~$O\left(B \left(K^{\mathcal{N}_{\texttt{p}}} + H |\widehat{\mathcal{A}}| + \mathcal{N}_{\texttt{p}}^3\right)\right)$.
This remains tractable for small $\mathcal{N}_{\texttt{p}}$ through the parallel candidate evaluation.

\begin{figure}[!t]
  \centering
  \includegraphics[width=1.0\linewidth]{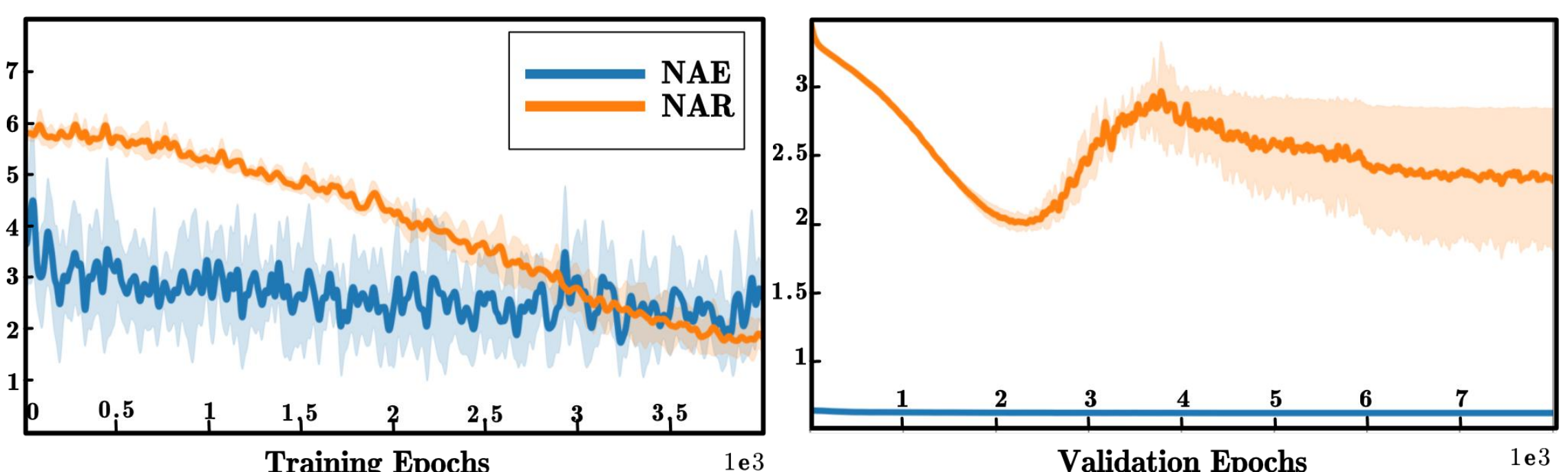}
  \caption{
    The training loss (\textbf{left})
    and validation loss (\textbf{right})
    for the proposed NAE and NAR networks,
    in the simulations of Sec.~\ref{sec:experiments}.
  }
  \label{fig:learn-result}
  \vspace{-6mm}
\end{figure}

\vspace{-5mm}

\section{Implementation Details of NARE}

To ensure the generalization capability of NARE, 
the validation set is constructed using scenarios that are not included in the training set. 
In the training data, the number of pursuers varies from 2 to 4, the evader speed ranges from 1.2 to 1.8, 
and the capture radius ranges from $1.2\,\mathrm{m}$ to $1.8\,\mathrm{m}$. 
The training environments are selected from the first 45 maps. 
In contrast, the validation set adopts more challenging and unseen settings, 
where the number of pursuers is increased to 5, the evader speed is set to 2.0, 
and the capture radius is set to $2.0\,\mathrm{m}$. 
The validation environments are selected from the remaining 5 maps with different layouts.

The NARE framework comprises two specialized GNNs.
NAR processes visibility graphs $\widetilde{G}_{\texttt{r}}$ with 3D node features
encoded via a two-layer MLP with dimensions of~$64~\text{and}~32$.
NAE handles heterogeneous state graphs $\widetilde{G}_{\texttt{e}}$ with seven node types and five edge types,
where edge features are processed through
a three-layer MLP sharing the same~$32$ dimension with LayerNorm for each layer.
Both networks employ three GNN layers, i.e.,
NAR uses GINV with edge-gated message passing,
while NAE uses HGAN with~$8$ attention heads.
A virtual super-node~$h_\nu^{(0)} \in \mathbb{R}^{32}$ connects to all vertices and is updated through GNN layers.
The final output is decoded through a series of transformations from~$\mathbb{R}^{32}$ to~$\mathbb{R}^{128}$ via a ReLU activation,
followed by layer normalization to reduce the dimension to~$\mathbb{R}^{64}$, and finally projected to~$\mathbb{R}^{1}$.
Training used datasets $\mathcal{D}_{\texttt{r}}$ and $\mathcal{D}_{\texttt{e}}$ split 80\%, 10\%, and 10\% for training, validation, and testing, respectively.
Optimization employs Adam of the weight decay $10^{-3}$ and the batch size of~$32$ with LR scheduled from $5\times10^{-5}$ to $10^{-6}$
via ReduceLROnPlateau over~$4,000$ episodes.
The training and validation curves for both NAR and NAE networks are shown in Fig.~\ref{fig:learn-result}.
Both networks demonstrate the stable convergence during training
with low validation errors.
As illustrated in Fig.~\ref{fig:NAR},
a system state along with corresponding assignments $\mathcal{A}$ is presented,
for which the NAR ranks those assignments with relevance scores.
The results indicate that dispersed surrounding assignments
generally yield better performance than those concentrating around a single target point.
This observation is further corroborated by the trend in predicted scores,
demonstrating both the accuracy and practical relevance of the NAR predictions.

\begin{figure}[!t]
  \centering
  \includegraphics[width=0.95\linewidth]{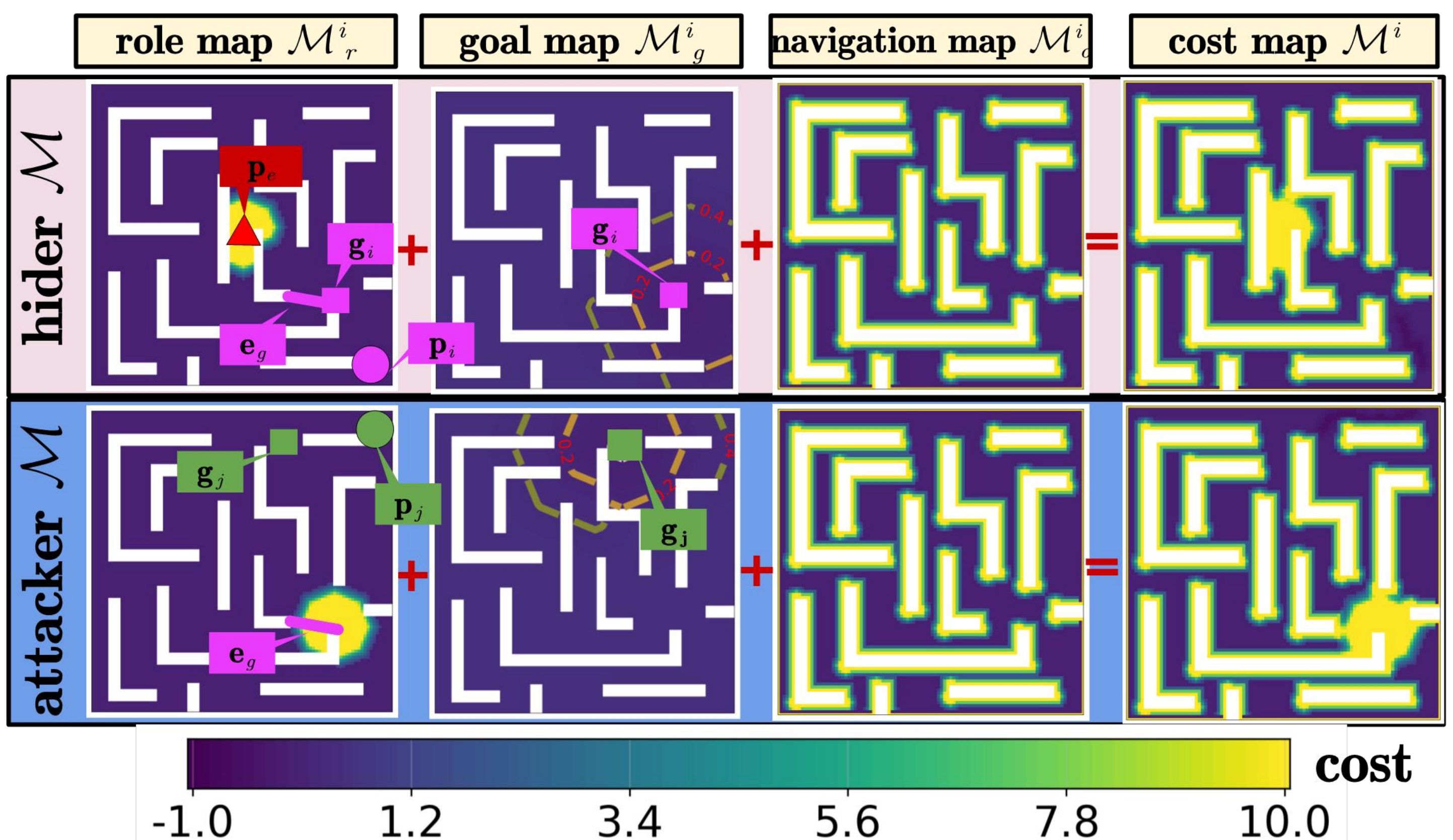}
  \caption{
    The cost map adopted for the motion strategy of pursuers,
    depending on their role as hiders (\textbf{top})
    and attackers (\textbf{bottom}).
  }
  \label{fig:cost-map}
  \vspace{-4mm}
\end{figure}


\section{Motion Strategy of Pursuers}
\label{sec:app-motion}

\textbf{Stage I:}
The core mechanism of arrival at pursuers' goals employs discrete cost maps~$\mathcal{M}$ defined over a grid representation of the workspace $\mathcal{W}$ as shown in Fig.~\ref{fig:cost-map},
where pursuers compute optimal paths using~$\texttt{A}^\star$ search over an 8-connected grid.
The composite cost at grid cell $\mathbf{c}_{jk}$,
represented by its center coordinates $\mathbf{c}_{jk} \triangleq (\textbf{x}_j, \textbf{y}_k)$, is defined as:
\begin{equation}
\mathcal{M}^i(\mathbf{c}_{jk}) = \rho^i \mathcal{M}^i_{\texttt{r}}(\mathbf{c}_{jk}) + (1-\rho^i) \mathcal{M}^i_{\texttt{g}}(\mathbf{c}_{jk}) + \mathcal{M}^i_{\texttt{o}}(\mathbf{c}_{jk}),
\label{eq:cost-map}
\end{equation}
with coefficient $0 \le \rho^i \le 1$ balancing tactical objectives against navigational costs.
Role-specific costs $\mathcal{M}^i_{\texttt{r}}$ encode specialized behaviors tailored to each role. For attackers $i \in \mathcal{N}_{\texttt{a}}$, the cost field generates adaptive repulsion toward the evader, modulated by the proximity to designated hider gates. For hiders $i \in \mathcal{N}_{\texttt{h}}$, the cost field encourages avoidance of the evader's detection radius $d_{\texttt{o}}$. Meanwhile, the goal attraction cost $\mathcal{M}^i_{\texttt{g}}$ applies uniformly across all roles via a normalized distance penalty relative to the goal, and the obstacle avoidance cost $\mathcal{M}^i_{\texttt{o}}$ assigns a prohibitive cost $c_{\texttt{o}} > 1$ to obstacle-occupied cells. The cost map updates dynamically during pursuit, enabling simultaneous optimization of role-specific goals, goal convergence, and obstacle avoidance against moving threats.
The cost map is not an independent heuristic policy, but a parameterized trajectory-generation module for executing a selected ambush assignment.
Its coefficient $\rho$ is optimized together with the discrete role-goal assignment by H-MCTS, 
while the capture quality is evaluated at the planning level.

\textbf{Stage II:}
As shown in Fig.~\ref{fig:stage},
when all pursuers reach their goals and the evader remains within capture graph $G_A$,
attackers $\mathcal{N}_{\texttt{a}}$ execute a sweeping strategy in the work~\cite{fang2020cooperative} to herd the evader
toward hider gates~$\{e_{\texttt{h}}^j\} \subset E_{\texttt{g}}^A$.
Centered at $\mathbf{p}_{\texttt{e}}$,
a circumcircle $\mathcal{C}$ with diameter $D = \mathrm{diam}(\mathcal{C})$ encloses $V_A$,
as shown in Fig.~\ref{fig:stage}(b).
After excluding the total central angle $\alpha_{\mathrm{h}} = \sum_{e \in \{e_{\texttt{h}}^j\}} \theta_e$ occupied by hider gates,
the remaining angular space $2\pi - \alpha_{\mathrm{h}}$ is equally partitioned among attackers,
i.e., $\varphi^i = (2\pi - \alpha_{\mathrm{h}})/|\mathcal{N}_{\texttt{a}}|, \forall i \in \mathcal{N}_{\texttt{a}}$ as shown in Fig.~\ref{fig:stage}(c).
Following the assignment of the nearest angular sector to each attacker,
their goal positions are computed as follows and shown in Fig.~\ref{fig:stage}(d).
Each attacker $i \in \mathcal{N}_{\texttt{a}}$ calculates its goal position
using $\mathbf{q}^i = \mathbf{p}_{\texttt{e}} + d_{\texttt{c}} \cdot \left[ \cos(\varphi^i/2), \sin(\varphi^i/2) \right]^\top$.
Attackers navigate directly to these dynamically updated $\mathbf{q}^i$ positions,
while hiders $\mathcal{N}_{\texttt{h}}$ remain stationary
until the evader's proximity to any $e_{\texttt{h}}^j \in E_{\texttt{g}}^A$ triggers a sudden attack.
To explicitly encode the herding objective, each attacker tracks its dynamically updated target $\mathbf{q}^i$ by solving:
\begin{equation}
\min_{\mathbf{u}^i}
\left\|
\mathbf{p}^i+\Delta t\,\mathbf{u}^i-\mathbf{q}^i
\right\|_2^2,
\qquad \forall i \in \mathcal{N}_{\texttt{a}},
\label{eq:herding_obj}
\end{equation}
subject to the velocity bound and workspace constraints. Equivalently, the preferred velocity of attacker $i$ is given by:
\begin{equation}
\mathbf{u}_{\mathrm{ref}}^i
=
\overline{v}_{{\texttt{p}}}
\frac{\mathbf{q}^i-\mathbf{p}^i}
{\|\mathbf{q}^i-\mathbf{p}^i\|_2+\varepsilon},
\label{eq:herding_ref_vel}
\end{equation}
where $\varepsilon>0$ is a small constant introduced for numerical robustness. Since $\mathbf{q}^i$ is defined relative to the evader position and only over the angular sectors excluding the hider gates, tracking $\mathbf{q}^i$ drives the attackers to occupy the complementary sectors around the evader and progressively compress its free angular space, thereby herding it toward $\{e_{\texttt{h}}^j\}$.
In addition to static obstacle avoidance encoded in $\mathcal{M}_{\texttt{o}}^i$, dynamic collision avoidance among pursuers is explicitly addressed during execution. To this end, we incorporate the Optimal Reciprocal Collision Avoidance (ORCA) method as a local motion refinement module~\cite{van2010optimal}. Specifically, the global paths generated by the aforementioned $\texttt{A}^\star$ planner are used as reference trajectories, while ORCA computes collision-free local velocities in real time to avoid inter-pursuer collisions under dynamic interactions. In this way, the proposed framework combines global planning for static environments with local collision avoidance for multi-robot motion, thereby producing trajectories that are both obstacle-free and safe.

\begin{figure}[!t]
  \centering
  \includegraphics[width=1.0\linewidth]{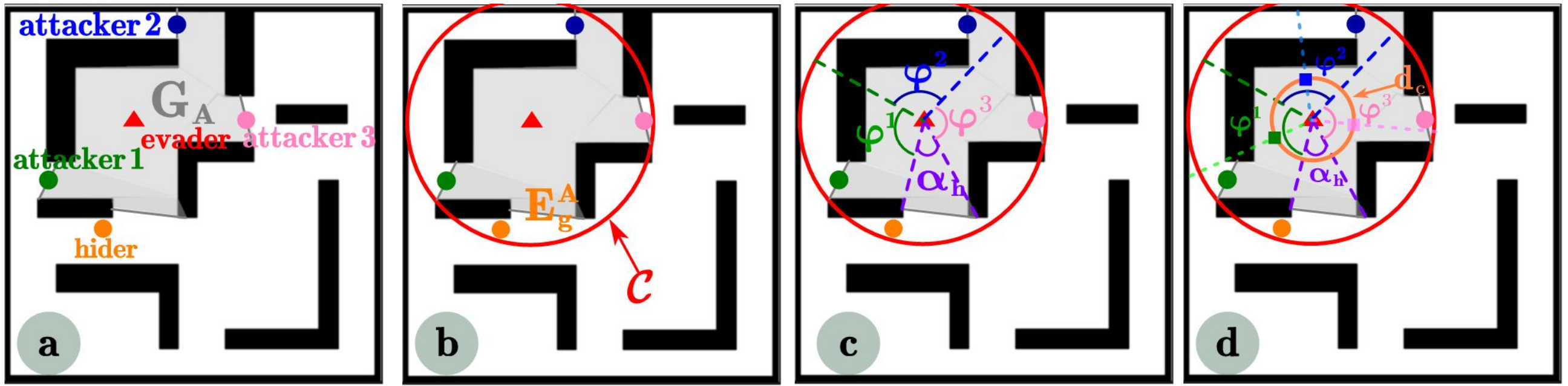}
  \caption{
    The sweeping strategy of the attackers to
    herd the evader towards the hider gates,
    where the hiders wait for a surprise capture.
  }
  \label{fig:stage}
  \vspace{-6mm}
\end{figure}

\section{Illustration of the H-MCTS Procedure}
\label{sec:app-hmcts}

Fig.~\ref{fig:H-MCTS} visually summarizes the proposed H-MCTS procedure. 
The purpose is to clarify the relationship among assignment sampling, motion-coefficient generation, rollout simulation, and reward backpropagation, rather than to repeat the detailed steps in Alg.~\ref{alg:H-MCTS}. 
In the search tree, each edge represents an ambush parameter $\xi=(A,\boldsymbol{\rho})$, and each node stores the corresponding system state after applying this parameter under the assumed evader policy. 
The best child of the root node is selected to determine the real-time assignment of ambush strategy
in the current system state.

\vspace{2mm}

\begin{figure}[!t]
  \centering
  \includegraphics[width=0.99\linewidth]{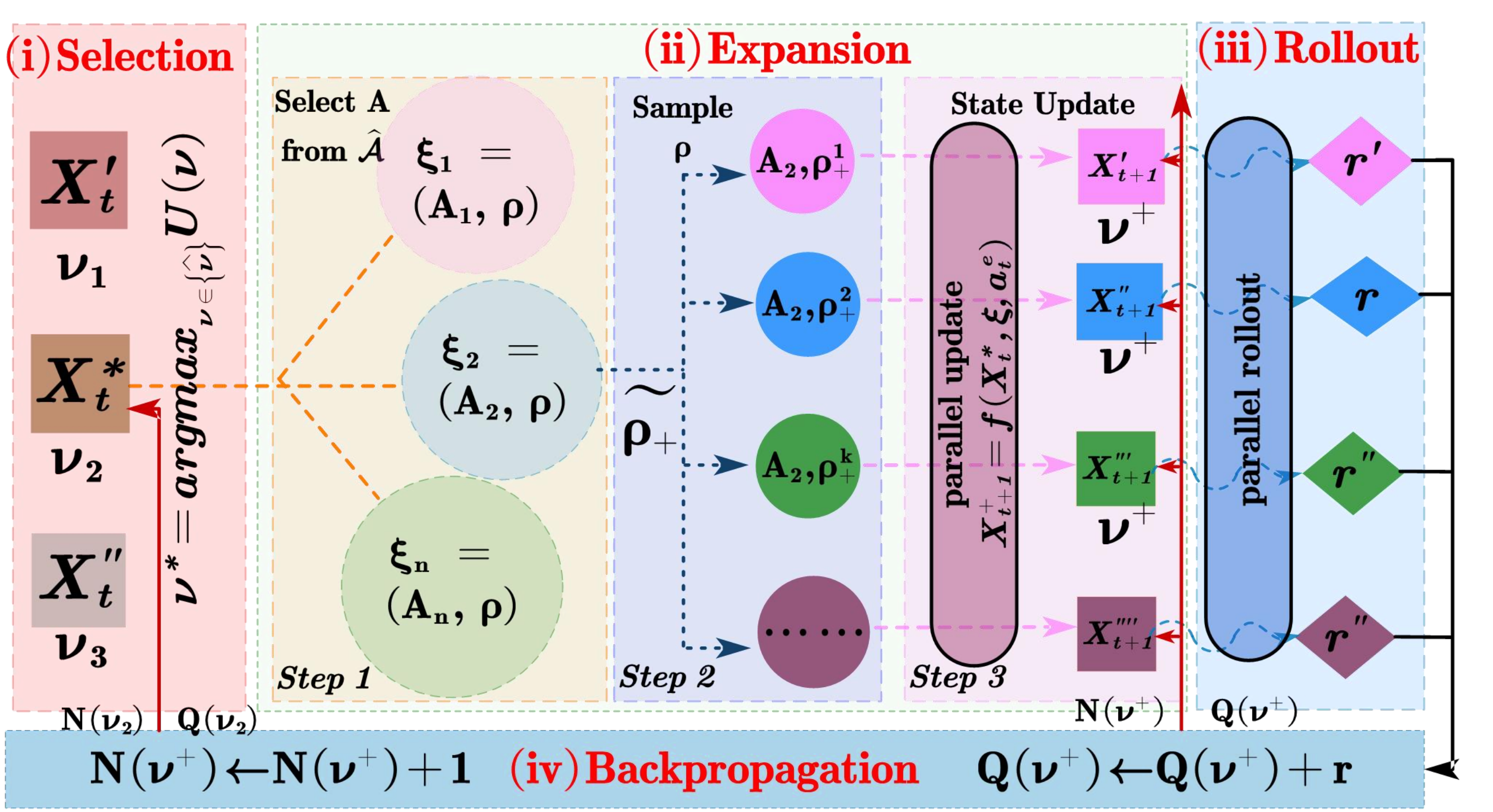}
  \vspace{-3mm}
  \caption{
    Diagram of the proposed H-MCTS scheme, consisting of four stages
    as selection, expansion, simulation, and backpropagation.
    Note that the expansion stage samples over both the discrete assignments~$A$
    and the continuous motion coefficients~$\boldsymbol{\rho}$
    for all pursuers.
  }
  \label{fig:H-MCTS}
  \vspace{-6mm}
\end{figure}

\bibliographystyle{IEEEtran}
\bibliography{contents/references}

\end{document}